\documentclass[11pt]{article}
\usepackage{natbib}
\usepackage{algorithm}
\usepackage{verbatim}
\usepackage[noend]{algpseudocode}

\usepackage{amsmath}
\usepackage{tikz}
\usetikzlibrary{positioning}
\usepackage{amsmath,amsfonts,amsthm}
\usepackage{thmtools}
\usepackage{xspace}
\usepackage{xcolor}
\usepackage{url}
\usepackage{caption}
\usepackage{subcaption}
\usepackage{tikz}
\usepackage{paralist}
\usepackage{libertine}
\usepackage{dsfont}
\usepackage[
pagebackref,
colorlinks=true,
urlcolor=blue,
linkcolor=blue,
citecolor=blue,
]{hyperref}
\usepackage[nameinlink]{cleveref}
\usepackage{enumitem}
\usepackage{todonotes}

\usepackage[suppress]{color-edits}
\addauthor{pco}{purple}
\addauthor{rdk}{red}
\addauthor{mpk}{blue}

\usepackage{thm-restate}

\usepackage{txfonts}
\usepackage[T1]{fontenc}

\theoremstyle{definition}
 \newtheorem{definition}{Definition}[section]

\newtheorem{theorem}{Theorem}[section]
\newtheorem{lemma}[theorem]{Lemma}

\newtheorem{corollary}[theorem]{Corollary}

\newcommand{\Dcal}{\mathcal{D}}

\newcommand{\Fcal}{\mathcal{F}}

\newcommand{\Hcal}{\mathcal{H}}

\newcommand{\Kcal}{\mathcal{K}}

\newcommand{\Rcal}{\mathcal{R}}

\newcommand{\Vcal}{\mathcal{V}}

\newcommand{\Xcal}{\mathcal{X}}

 \newcommand{\EE}{\mathbb{E}}         \newcommand{\RR}{\mathbb{R}}    

\newcommand{\E}{\mathbb{E}}

\newcommand{\poly}{{\operatorname{poly}}}

\newcommand{\argmax}{\text{argmax}}

\newcommand{\eps}{\varepsilon}

\DeclareMathAlphabet\mathbfcal{OMS}{cmsy}{b}{n}

\newcommand{\X}{\mathcal{X}}

\newcommand{\R}{{\mathbb{R}}}

\DeclareMathOperator*{\argmin}{arg\,min}

\newcommand{\skipping}[1]{}
 \usepackage[margin=1.05in]{geometry}
\title{Oracle-efficient Hybrid Learning with Constrained Adversaries}

\author{Princewill Okoroafor \\
Cornell University \\
\texttt{pco9@cornell.edu}
\and
Robert Kleinberg \\
Cornell University \\
\texttt{rdk@cs.cornell.edu}
\and
Michael P. Kim \\
Cornell University \\
\texttt{mpk@cs.cornell.edu}
}
\date{}

\begin{document}
\pagenumbering{gobble}
\begin{titlepage}
\maketitle

\begin{abstract}
The Hybrid Online Learning Problem, where features are drawn i.i.d. from an unknown distribution but labels are generated adversarially, is a well-motivated setting positioned between statistical and fully-adversarial online learning.
Prior work has presented a dichotomy: algorithms that are statistically-optimal, but computationally intractable \citep{wu2023expected}, and algorithms that are computationally-efficient (given an ERM oracle), but statistically-suboptimal \citep{pmlr-v247-wu24a}.

This paper takes a significant step towards achieving statistical optimality and computational efficiency \emph{simultaneously} in the Hybrid Learning setting.
To do so, we consider a structured setting, where the Adversary is constrained to pick labels from an expressive, but fixed, class of functions $\mathcal{R}$.
Our main result is a new learning algorithm, which runs efficiently given an ERM oracle and obtains regret scaling with the Rademacher complexity of a class derived from the Learner's hypothesis class $\mathcal{H}$ and the Adversary's label class $\mathcal{R}$.
As a key corollary, we give an oracle-efficient algorithm for computing equilibria in stochastic zero-sum games when action sets may be high-dimensional but the payoff function exhibits a type of low-dimensional structure.
Technically, we develop a number of novel tools for the design and analysis of our learning algorithm, including a novel Frank-Wolfe reduction with ``truncated entropy regularizer'' and a new tail bound for sums of ``hybrid'' martingale difference sequences.
\end{abstract}
\end{titlepage}
\tableofcontents
\listoftodos

\clearpage

\pagenumbering{arabic}
\section{Introduction}
\label{sec:intro}

Online learning is a fundamental paradigm in machine learning, where an algorithm learns sequentially from a stream of data, making predictions and updating its model in real-time. 
Within the broad landscape of online learning, different assumptions can be made about how the data is generated. Two prominent extremes are the statistical setting, where data is drawn independently and identically distributed (i.i.d.) from a fixed, unknown distribution, and the fully-adversarial setting, where data is chosen by an adaptive adversary aiming to maximize the learner's error. 
While these are well-studied, the guarantees a learner can obtain can vary starkly between the two extremes.
For example, the problem of learning thresholds from a small number of samples is straightforward in the statistical setting, but impossible in the fully-adversarial setting \citep{littlestone1988learning}.

The Hybrid Online Learning Problem \citep{LazaricMunos2009Hybrid} has emerged as a compelling middle ground, capturing aspects of both statistical and adversarial scenarios. In this model, features are assumed to be drawn i.i.d. from an \textit{unknown} distribution, much like in the statistical setting. The corresponding labels, however, are determined by a potentially malicious adversary.
On a practical level, this hybrid model captures real-world situations where typical instances follow statistical patterns, but the labels associated with these instances are influenced by strategic actors, system dynamics, or other worst-case forces.
Theoretically, the model serves as an important frontier for exploring the limits of efficient online learning with provable guarantees.

The current state of research in Hybrid Online Learning hints at a computational-statistical divide. Algorithms that achieve statistically optimal performance \citep{LazaricMunos2009Hybrid, wu2023expected} are typically computationally intractable with time and space complexity both scaling linearly in the size of the learner's hypothesis class. On the other hand, algorithms that are computationally efficient typically assume the learner has full knowledge of or unlimited sample access to the underlying feature distribution \citep{rakhlin2011constrained, haghtalab2024smoothed, block2022smoothed}, or they achieve suboptimal regret \citep{pmlr-v247-wu24a}.

This work takes a crucial step towards bridging this gap, aiming to develop learning algorithms that are both statistically optimal and computationally efficient in the Hybrid Learning setting. To make progress on this challenging goal, we focus on a structured version of the problem. Specifically, we introduce a constraint on the adversary, assuming that the adversarial labels must be chosen from an expressive, but fixed, class of functions $\Rcal$. This structural assumption allows for a more fine-grained analysis and algorithm design. Our main contribution is the development of a novel oracle-efficient learning algorithm for this structured setting.

\subsection{Problem Formulation}
\label{sec:problem-form}

We consider the following Hybrid Online Learning Problem: Let $\mathcal{X}$ be the feature space and $\mathcal{H} \subseteq [0,1]^{\mathcal{X}}$ and $\mathcal{R} \subseteq [0,1]^{\mathcal{X}}$
be the learner's hypothesis class and the adversary's constrained label function class, respectively, which
are known to the learner. We assume the learner's loss function $\ell : [0,1] \times [0,1] \to \mathbb{R}$ is convex and $L$-Lipschitz with respect to its first argument
for some constant $L > 0$ and measurable in the second argument. The learning process proceeds over $T$ rounds.
Nature commits to a fixed, unknown distribution $\Dcal$ over $\Xcal$. In each round $t=1, \ldots, T$:
\begin{enumerate}
    \item The learner selects a hypothesis $h_t$.
    \item The adversary, with knowledge of the learner's strategy but not the future feature $x_t$, selects a function $r_t$ from the adversary's label function class $\mathcal{R}$.
\item Nature samples a feature $x_t$ i.i.d. from $\Dcal$. The learner incurs loss $\ell(h_t(x_t), r_t(x_t))$. The pair $(x_t, r_t)$ is revealed to the learner.
\end{enumerate}
The learner's goal is to minimize its cumulative loss. The learner's strategy at time $t$ is a function of the history $(x_1, r_1), \ldots, (x_{t-1}, r_{t-1})$.
We evaluate the performance of a learner by its regret with respect to the best fixed hypothesis in $\mathcal{H}$ in hindsight. The regret over $T$ rounds is defined as:
$$ \mathrm{Reg} (T)  = \mathbb{E}_{x_1, \ldots, x_T \sim \Dcal} \left[ \sum_{t=1}^T \ell(h_t(x_t), r_t(x_t)) - \min_{h \in \mathcal{H}} \sum_{t=1}^T \ell(h(x_t), r_t(x_t)) \right] $$
The expectation is taken over the random draws of $x_1, \ldots, x_T$ from the distribution $\Dcal$. In addition to this realized regret, it will be convenient to consider the \emph{in-expectation regret} i.e., $ \sum_{t=1}^T \mathbb{E}_{x \sim \Dcal} \left[ \ell(h_t(x), r_t(x)) \right]  - \min_{h \in \mathcal{H}} \sum_{t=1}^T \mathbb{E}_{x \sim \Dcal} \left[\ell(h(x), r_t(x)) \right]$
Our goal is to design an oracle-efficient learner that minimizes this regret.

\subsection{Overview of Results}
\label{sec:results}

Our main contribution is the development of 
an oracle-efficient learning algorithm for the Hybrid Online Learning Problem in a structured setting where the adversary's labeling function is constrained to a class $\mathcal{R}$. Our algorithm achieves a statistically near-optimal (up to the dependence on the adversary's constraint set $\Rcal$) regret bound while being computationally efficient given access to a linear optimization oracle over the hypothesis class $\Hcal$.

A key quantity characterizing the statistical complexity of function classes is the Rademacher complexity (see \Cref{sec:prelim} for definition). 
In statistical learning theory, the Rademacher complexity of a hypothesis class $\mathcal{H}$ provides a tight characterization of the generalization error and hence the statistical error rate \citep{foundationsofml}. Our main result provides a high-probability regret bound for our hybrid learner in terms of Rademacher complexity of the function classes.

\begin{theorem}\label{thm:main-regret}
Let $\Hcal \subseteq [0,1]^\Xcal$ be a class of hypothesis functions and let $\Rcal \subseteq [0,1]^\Xcal$ be a class of labeling functions. Let $\ell: [0,1] \times [0,1] \to \mathbb{R}$ be a convex, $L$-Lipschitz loss function in its first argument. There exists an online algorithm that outputs a sequence of hypothesis functions $h_1, \ldots, h_T$ such that with probability at least $1 - \delta$ over the draw of $x_1, \ldots, x_T \sim \Dcal$, the following bound on the cumulative loss holds:
\[
\sum_{t=1}^T \ell(h_t(x_t), r_t(x_t)) - \min_{h \in \Hcal} \sum_{t=1}^T \ell(h(x_t), r_t(x_t)) \leq O \left( T \mathsf{rad}_T (\ell \circ \Hcal \times \Rcal) \footnotemark + L T \mathsf{rad}_T (\Hcal) + L\sqrt{T \log(T/\delta)} \right)
\]
\footnotetext{As defined in \Cref{sec:prelim}, $\mathsf{rad}_T (\Fcal)$ is at most 1 for any $\Fcal$ and $O\left(\sqrt{d/T} \right)$ for binary classes of VC dimension $d$}
where $\ell \circ \Hcal \times \Rcal$ denotes the class of functions $\{ x \mapsto \ell (h(x), r(x)) \mid h \in \Hcal, r \in \Rcal \}$.
The algorithm runs in \pcoreplace{$O(T)$}{$O(T^2)$} time per round and makes $O(T^2)$ calls to a \textit{linear optimization oracle} for $\Hcal$ throughout $T$ rounds.
\end{theorem}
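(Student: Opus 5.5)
Our plan is to run an online convex optimization algorithm on the \emph{expected} losses $f_t(h) = \EE_{x\sim\Dcal}[\ell(h(x), r_t(x))]$, but over a surrogate estimated from samples, since $\Dcal$ is unknown. Then we convert the in-expectation regret to realized regret by uniform convergence plus a martingale concentration argument.

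\textbf{Step 1: Frank--Wolfe with a sample-based surrogate.} At round $t$ we have the past features $x_1,\dots,x_{t-1}$, and these are i.i.d.\ from $\Dcal$ and independent of $x_t$. We replace $\Dcal$ by the empirical distribution on the previously seen features. The learner's decision lives in the convex hull $\mathrm{conv}(\Hcal)$, and each $h_t$ is a mixture whose prediction at $x$ is $\sum_i w_i h^{(i)}(x)$. Since $\ell$ is convex in its first argument, the loss of the mixture is at most the mixture of the losses. We run a follow-the-regularized-leader step on the sampled losses, with a strongly convex regularizer (the truncated entropy regularizer), and approximate each step by Frank--Wolfe iterations. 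Each iteration linearizes the objective at the current point. The gradient with respect to the predictions on the sample points is a vector $g \in [-L,L]^{t}$, so the linearized step is exactly a call to the linear optimization oracle $\argmin_{h\in\Hcal}\sum_s g_s h(x_s)$. We use $O(T)$ Frank--Wolfe iterations per round, which gives $O(T^2)$ oracle calls in total and the stated per-round time. The standard FTRL analysis with a $1$-Lipschitz-in-prediction regularizer then bounds the regret against the empirical losses by $O(L\sqrt{T})$, plus an optimization error that can be made $O(L\sqrt T)$ by the choice of step count.

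\textbf{Step 2: from empirical to expected to realized.} The cumulative comparator losses $\sum_t \ell(h(x_t),r_t(x_t))$ differ from $\sum_t f_t(h)$, uniformly over $h$, by a quantity controlled by symmetrization over the class $\ell\circ\Hcal\times\Rcal$. This is because the labels $r_t$ range over $\Rcal$ and are chosen adaptively, which is why we need the product class rather than $\ell\circ\Hcal$ alone. This contributes $T\,\mathsf{rad}_T(\ell\circ\Hcal\times\Rcal)$. The learner's own term $\ell(h_t(x_t),r_t(x_t)) - f_t(h_t)$ is a martingale difference sequence, because $h_t$ and $r_t$ are determined before $x_t$ is drawn. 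Azuma's inequality gives $L\sqrt{T\log(T/\delta)}$ for this term. The surrogate error also matters: the FTRL iterate at round $t$ is computed on past samples, which are not independent of $h_t$, and these mixed terms form a ``hybrid'' martingale. By a contraction (Talagrand) argument these contribute $L\,T\,\mathsf{rad}_T(\Hcal)$.

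\textbf{Main obstacle.} The hardest step is the tail bound for the hybrid sequence. The quantity of interest has the form $\sum_t\big(\hat f_t(h_t)-f_t(h_t)\big)$, where the $h_t$ depend on the same past samples used to form $\hat f_t$. I would first try to bound each summand by $\sup_{h\in\mathrm{conv}\Hcal}|\hat f_t - f_t|$ weighted appropriately. The difficulty is the adaptivity of $r_t$ in this supremum, and that is again handled by the product class. I would then use a Freedman-type inequality, with a peeling argument over the $T$ rounds for the $\log T$ factor, to turn the expectation bounds into the high-probability statement. Keeping the union over rounds to only a $\log(T/\delta)$ cost is where I expect the delicate work.
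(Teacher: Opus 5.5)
\textbf{There is a genuine gap in Step~2.} Your algorithm matches the paper: FTRL with the truncated entropy regularizer on past-sample surrogate losses, implemented by Frank--Wolfe with a linear optimization oracle. Your Azuma step for $\ell(h_t(x_t),r_t(x_t))-f_t(h_t)$ also matches. But Step~2 puts the two complexity terms in the wrong places, and both stated justifications fail.

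\emph{The comparator term.} You claim $\sup_{h}\bigl|\sum_t \ell(h(x_t),r_t(x_t))-\sum_t f_t(h)\bigr|$ is controlled by symmetrization over $\ell\circ\Hcal\times\Rcal$. This sum uses a different, adaptively chosen $r_t$ in every round. So it is not an empirical average of any single member of the product class, and $\mathsf{rad}_T(\ell\circ\Hcal\times\Rcal)$, a supremum over single pairs $(h,r)$, does not control it. Ordinary ghost-sample symmetrization is invalid here, because swapping $x_s\leftrightarrow x'_s$ changes every later $r_t$. Removing the adaptivity by taking a supremum over $r_1,\dots,r_T$ chosen separately does not help either. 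For a fixed $h$ the symmetrized quantity is already at least $\sum_t\E_{\epsilon_t}\sup_{r\in\Rcal}\epsilon_t\,\ell(h(x_t),r(x_t))$, which is linear in $T$ whenever $\Rcal$ induces two different losses at typical points.

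The missing idea is Proposition~\ref{prop:seq-conc}. It uses a tangent-sequence symmetrization in which the $r_t$ sit on a tree indexed by the Rademacher path. The result is bounded by the distribution-dependent sequential Rademacher complexity of \citet{rakhlin2011constrained}. Because the $x_t$ are i.i.d.\ and $\ell$ is $L$-Lipschitz in its first argument, their Lemmas 17--18 contract it to $L\,T\,\mathsf{rad}_T(\Hcal)+O(L\sqrt{T\log(T/\delta)})$, with no dependence on $\Rcal$. That is where the $L\,T\,\mathsf{rad}_T(\Hcal)$ term comes from, and this is the paper's ``hybrid'' tail bound.

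\emph{The surrogate error.} Write $\hat f_t(h)=\frac{1}{t-1}\sum_{s<t}\ell(h(x_s),r_t(x_s))$. The gap $\hat f_t(h)-f_t(h)$, both at $h=h_t$ and at the comparator, cannot be contracted to $\mathsf{rad}(\Hcal)$. Both $r_t$ and $h_t$ are chosen after $x_1,\dots,x_{t-1}$ are revealed, so the labels $r_t(x_s)$ are not fixed relative to the sample. You must take the supremum over $(h,r)$ jointly, and this is where $\ell\circ\Hcal\times\Rcal$ is genuinely needed, as your obstacle paragraph half-recognizes.

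These summands are also not martingale differences: each is a function of $x_{1:t-1}$ only. So Freedman and peeling neither apply nor are needed. The paper applies the standard i.i.d.\ Rademacher bound to the prefix $x_{1:t-1}$ at confidence $\delta/T$ and union-bounds over $t$ (Lemma~\ref{lem:alg-convergence}). It then uses Lemma~\ref{lem:sum-rad-prefixes} to turn $\sum_t\mathsf{rad}_{t-1}(\ell\circ\Hcal\times\Rcal)$ into $\tilde O(T\,\mathsf{rad}_T(\ell\circ\Hcal\times\Rcal))$, a step your plan omits.

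\emph{The FTRL step.} The regret bound does not follow from a standard analysis with a Lipschitz regularizer. The point is that $\psi_t$ is strongly convex only on the first $t-1$ coordinates, with modulus $\frac{3}{4\eta(t-1)}$ in $\ell_1$. This must be paired with $\|\nabla\tilde\ell_t\|_\infty\le L/(t-1)$ in a modified per-step FTRL lemma. The same strong convexity also controls the Frank--Wolfe approximation error. The resulting bound is $O(L\sqrt{T\log T})$, not $O(L\sqrt T)$.
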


Theorem~\ref{thm:main-regret} shows that the regret of our algorithm is governed by the statistical complexity of the composite class $\ell \circ (\Hcal \times \Rcal)$. Intuitively, this class captures the interaction between the learner's hypothesis class $\Hcal$ and the adversary's labeling class $\Rcal$, since each function in the class maps $x$ to the loss $\ell(h(x), r(x))$ induced by a pair $(h,r)$. Consequently, the regret scales with the Rademacher complexity of the family of losses that can arise from this interaction.

The bound is near-optimal up to its dependence on the adversary's class $\Rcal$ and logarithmic factors in $T$. In particular, hybrid learning is at least as hard as the corresponding statistical learning problem, which implies a lower bound of order $L T \mathsf{rad}_T(\Hcal) + L\sqrt{T \log(1/\delta)}$ on the regret \citep{foundationsofml}. Thus, even in the absence of adversarial structure, the dependence on the complexity of $\Hcal$ is unavoidable.

To illustrate the bound in a concrete setting, suppose $\Hcal$ is a binary-valued hypothesis class with VC dimension $d$, and the composite class $\ell \circ (\Hcal \times \Rcal)$ is also binary-valued with VC dimension $d^\ast$. In this case the Rademacher complexity scales as $\mathsf{rad}_T(\Fcal)=O(\sqrt{d/T})$ for VC classes (see \Cref{sec:prelim}), and the regret bound simplifies to
$O\!\left(\sqrt{T d^\ast} + L\sqrt{T d} + L\sqrt{T\log(T/\delta)}\right).
$

Finally, the theorem highlights the role of the adversary constraint. If $\Rcal$ were unrestricted (for example, $\Rcal = [0,1]^\Xcal$), then the composite class $\ell \circ (\Hcal \times \Rcal)$ could be arbitrarily rich, and its Rademacher complexity need not vanish with $T$. In this case, \cite{pmlr-v247-wu24a} gives a sublinear regret bound while our theorem does not. However, in the case where $\Rcal$ is constrained to be from $\Hcal$, our theorem matches the lower bound from statistical learning up to log factors.

Our Hybrid Online Learning framework and our hybrid learner can be applied to the area of game theory and optimization, specifically for finding approximate solutions to stochastic saddle-point problems, or equivalently, finding approximate equilibria of stochastic zero-sum games. While it is known that oracle-efficient algorithms for finding equilibria of arbitrary zero-sum games do not exist in general (see Theorem 4 of \cite{hazankoren}), our results enable designing oracle-efficient algorithms whenever the game's payoff function factorizes as the composition of a bivariate convex-concave Lipschitz-continuous function with (stochastic) scalar-valued functions of each player's action. Intuitively, any such factorization of the payoff function gives 
the game a low-dimensional structure that is useful for efficient equilibrium computation. However, since the players' action sets themselves remain (potentially) high-dimensional, to take advantage of this low-dimensional structure in an oracle-efficient way one must design algorithms for a player to learn an approximate best-response to their opponent's adaptively-chosen action sequence in the stochastic zero-sum game, leading naturally to a Hybrid Online Learning problem.

\begin{restatable}{corollary}{saddlepoint}\label{cor:saddle-point}
Let $\Xcal$ be a domain space and $\Dcal$ be a distribution over $\Xcal$. Let $\Hcal, \Rcal \subseteq [0,1]^\Xcal$ be classes of functions (assumed to be closed under convex combinations) and $u : [0,1] \times [0,1] \rightarrow \mathbb{R}$ be a convex-concave payoff function that is $L$-Lipschitz in its first parameter. Consider the saddle-point optimization problem 
$$\min_{h \in \Hcal} \max_{r \in \Rcal} \E_{x \sim \Dcal} [u (h(x), r(x))]$$
Given $m$ samples from $\Dcal$ and access to best-response oracles for $\Hcal$ and $\mathcal{R}$, our online learning algorithm can be used to find an $\epsilon(m)$-approximate saddle point solution $(h^*, r^*)$ in polynomial time in $m$ and the complexities of $\mathcal{H}$ and $\mathcal{R}$. The approximation guarantee is $\epsilon(m) = \mathsf{rad}_m (\mathcal{F}) + O(L \sqrt{\log m/m})$, where $\mathcal{F} = \{f : f(x) = u(h(x), r(x)) \mid h \in \Hcal, r \in \Rcal \}$. Note that $\mathsf{rad}_m (\mathcal{F}) \to 0$ is necessary for uniform convergence of the payoff matrix.
\end{restatable}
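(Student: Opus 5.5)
The plan is the standard no-regret-dynamics reduction from saddle-point computation to online learning. Here \Cref{thm:main-regret} supplies the min player, and a best-response oracle for $\Rcal$ supplies the max player.

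\textbf{Step 1: run the dynamics on the empirical sample.} Draw $m$ samples $x_1,\dots,x_m \sim \Dcal$ and run $T=m$ rounds. In round $t$ the learner of \Cref{thm:main-regret} outputs $h_t$, using the loss $\ell=u$, which is convex and $L$-Lipschitz in its first argument. The adversary then best-responds, using its oracle to compute $r_t \in \argmax_{r\in\Rcal} \E_{x\sim\Dcal}[u(h_t(x),r(x))]$. Since $\Dcal$ is not available exactly, we instead use the empirical distribution $\hat\Dcal_m$ over an independent batch of samples. Nature's feature in round $t$ is the fresh sample $x_t$. Because $r_t$ depends only on $h_t$ and on samples independent of $x_t$, this is a legal adaptive adversary in the hybrid protocol.

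Set $\bar h=\frac1T\sum_t h_t$ and $\bar r=\frac1T\sum_t r_t$. Both lie in their classes by the closure-under-convex-combinations assumption.

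\textbf{Step 2: bound the learner's average loss from above.} We need an in-expectation regret bound, not the realized one. The realized regret in \Cref{thm:main-regret} converts to in-expectation regret by two estimates.
\begin{itemize}
\item[(a)] $\sum_t\bigl(u(h_t(x_t),r_t(x_t))-\E_x u(h_t(x),r_t(x))\bigr)$ is a martingale difference sum. By Azuma it is $O(L\sqrt{T\log(1/\delta)})$, provided $u$ is bounded on $[0,1]^2$.
\item[(b)] $\min_h \sum_t u(h(x_t),r_t(x_t))$ is within $O(T\,\mathsf{rad}_T(\mathcal F))+O(\sqrt{T\log(1/\delta)})$ of $\min_h\sum_t \E_x u(h(x),r_t(x))$. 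This is a uniform martingale deviation bound over the class $\mathcal F$. I expect it to be the main technical point: the $r_t$ are adaptive, so ordinary uniform convergence does not directly apply. I would obtain it from the paper's hybrid martingale tail bound, or by symmetrization with sequential Rademacher complexity. Since $r_t$ is chosen before $x_t$ from a fixed class, the relevant complexity should reduce to the i.i.d.\ $\mathsf{rad}_T(\mathcal F)$.
\end{itemize}

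\textbf{Step 3: sandwich the value.} Write $U(h,r)=\E_{x\sim\Dcal}u(h(x),r(x))$. By convexity of $u$ in $h$, $U(\bar h,r)\le \frac1T\sum_t U(h_t,r)$ for every $r$. Taking $r$ to be the best response to $\bar h$ and using that each $r_t$ approximately maximizes $U(h_t,\cdot)$ gives
\[
\max_r U(\bar h,r)\le \frac1T\sum_t U(h_t,r_t)+\eta .
\]
Here $\eta$ is the empirical best-response error. It is bounded by $\sup_{f\in\mathcal F}|\E_{\hat\Dcal_m}f-\E_\Dcal f|\lesssim \mathsf{rad}_m(\mathcal F)+\sqrt{\log(1/\delta)/m}$.

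Applying the in-expectation regret from Step 2, then concavity of $u$ in $r$, gives
\[
\frac1T\sum_t U(h_t,r_t)\le \min_h \frac1T\sum_t U(h,r_t)+\frac{\mathrm{Reg}}{T}\le \min_h U(h,\bar r)+\frac{\mathrm{Reg}}{T}.
\]
Combining, $\max_r U(\bar h,r)-\min_h U(h,\bar r)\le \epsilon(m)$, so $(\bar h,\bar r)$ is an $\epsilon(m)$-approximate saddle point.

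\textbf{Step 4: collect the error terms.} With $T=m$ and $\delta=1/m$, dividing the regret bound by $T$ gives
\[
\mathsf{rad}_m(\mathcal F)+L\,\mathsf{rad}_m(\Hcal)+O\!\left(L\sqrt{\log m/m}\right).
\]
The term $L\,\mathsf{rad}_m(\Hcal)$ is absorbed into $\mathsf{rad}_m(\mathcal F)$ up to constants, which I would argue by fixing $r$ and using Lipschitzness/contraction, or else accept as part of the stated $O(\cdot)$. A union of the failure events yields the result with high probability. The running time is polynomial because of the $O(T^2)$ oracle calls in \Cref{thm:main-regret} plus $T$ calls to the best-response oracle for $\Rcal$.
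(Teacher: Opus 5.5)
There is a genuine gap in Steps 2 and 4: routing through \Cref{thm:main-regret} introduces an $L\,\mathsf{rad}_m(\Hcal)$ term that you cannot remove.

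\Cref{thm:main-regret} is itself proved as $A+B+C$. Here $B$ is exactly the in-expectation bound of \Cref{thm:eg-hybrid}, and $C$, the adaptive uniform deviation for the comparator, costs $L\,T\,\mathsf{rad}_T(\Hcal)$ via \Cref{prop:seq-conc}. Your Step 2 then converts realized regret back to in-expectation regret, so you pay for $A$ and $C$ a second time.

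Your hope in (b), that the adaptive deviation is governed by $\mathsf{rad}_T(\mathcal{F})$, is also unsupported. Because $r_t$ changes along the symmetrization tree, the sum is not indexed by a single element of $\mathcal{F}$. The tool actually available gives $L\,\mathsf{rad}_T(\Hcal)$ there too.

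You therefore end at $\mathsf{rad}_m(\mathcal{F})+L\,\mathsf{rad}_m(\Hcal)+O(L\sqrt{\log m/m})$. The proposed absorption $L\,\mathsf{rad}_m(\Hcal)\lesssim\mathsf{rad}_m(\mathcal{F})$ is false in general. Contraction gives $\mathsf{rad}_m(\{u(h(\cdot),r(\cdot))\}_{h\in\Hcal})\le L\,\mathsf{rad}_m(\Hcal)$, which is the opposite direction. For example, take $u(a,b)=L\max\{0,a-\tfrac12\}$, $\Hcal=[0,\tfrac12]^{\Xcal}$ and $\Dcal$ nonatomic. Then $\mathcal{F}=\{0\}$ while $L\,\mathsf{rad}_m(\Hcal)=L/4$. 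Your argument would only give a gap bound of order $L$ where the statement claims $O(L\sqrt{\log m/m})$, and this term is not part of the stated $O(\cdot)$ either.

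The missing idea is that you never need realized regret. Step 3 consumes exactly the in-expectation quantity $\sum_t U(h_t,r_t)-\min_h\sum_t U(h,r_t)$. \Cref{thm:eg-hybrid}, applied with $\ell=u$, bounds this by $T\,\mathsf{rad}_T(\mathcal{F})+O(L\sqrt{T\log T})$ with no $\mathsf{rad}(\Hcal)$ term. That is what the paper does. If you drop Step 2, your convexity/concavity sandwich in Step 3 coincides with the paper's argument.

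Your other deviation is fine. The paper computes best responses on the prefix $x_1,\dots,x_{t-1}$, applying \Cref{lem:alg-convergence} at each $t$; you compute them on an independent batch instead. This keeps $r_t$ predictable and needs only one uniform-convergence event at sample size $m$ for the best-response error, so you avoid summing $\mathsf{rad}_{t-1}(\mathcal{F})$ over prefixes. The only cost is a factor of two in samples.

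One small correction: the best-response error is $2\sup_{f\in\mathcal{F}}|\E_{\hat\Dcal_m}f-\E_{\Dcal}f|$, not one copy, though your $\lesssim$ absorbs the factor.
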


Finally, along the way to establishing our main result, we prove a general uniform convergence bound that may be of independent interest. This bound addresses the challenge of concentration for function classes evaluated on i.i.d. data where the functions themselves are chosen adaptively based on the previous data samples:

\begin{restatable}{proposition}{seqconc}\label{prop:seq-conc}
Let $\Hcal$ be a class of hypothesis functions and $\ell$ be a loss function that is $L$-Lipschitz in the first parameter. Let $x_1, x_2, \ldots, x_T$ be a sequence of i.i.d samples from a fixed distribution $\Dcal$. Let $r_1, r_2, \ldots, r_T \in [0,1]^\Xcal$ be a sequence of functions where $r_t$ depends only on $x_1, \ldots, x_{t-1}$ (and potentially prior adversarial choices). The following holds with probability at least $1 - \delta$ over the draw of $x_1, \ldots, x_T$, for all $h \in \Hcal$:
\[
\left| \frac{1}{T} \sum_{t = 1}^T \ell (h(x_t), r_t(x_t))
- \frac{1}{T} \sum_{t = 1}^T \E_{x \sim \Dcal} \left[\ell (h(x), r_t(x)) \right] \right|
\leq O\left( L \cdot \mathsf{rad}_T (\Hcal) + L \sqrt{\frac{\log(T/\delta)}{T}}\right)
\]
\end{restatable}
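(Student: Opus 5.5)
The plan is to prove a high-probability version of the usual chain \emph{symmetrization $\to$ contraction $\to$ Rademacher complexity}. The one delicate point is that $r_t$ is adapted to $x_1,\dots,x_{t-1}$, so an i.i.d.\ ghost sample cannot be used naively. For a fixed $h$ define
\[
Z_t(h)=\ell(h(x_t),r_t(x_t))-\E_{x\sim\Dcal}[\ell(h(x),r_t(x))],
\]
which is a martingale difference sequence with respect to $\mathcal F_{t-1}=\sigma(x_1,\dots,x_{t-1})$, bounded by $O(L)$ after centering $\ell$ (it is $L$-Lipschitz on $[0,1]$). The target is a tail bound for $\sup_{h\in\Hcal}|\sum_t Z_t(h)|$. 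I will work with the moment generating function $\E\exp(\lambda\sup_h\sum_t Z_t(h))$ rather than only the mean, so that the same argument gives the high-probability statement. The absolute value is handled by applying the argument to $\pm$ and taking a union bound.

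\textbf{Step 1: sequential symmetrization.} Introduce an independent ghost sequence $x'_1,\dots,x'_T\sim\Dcal$ and Rademacher signs $\varepsilon_t$. Since $x_t$ and $x'_t$ are i.i.d.\ and independent of $\mathcal F_{t-1}$, Jensen's inequality for the convex map $\exp(\lambda\sup(\cdot))$ bounds the MGF by that of
\[
\sup_h\sum_t \varepsilon_t\bigl(\phi_t(h(\chi_t(\varepsilon_t)))-\phi_t(h(\chi_t(-\varepsilon_t)))\bigr),
\qquad \phi_t(a)=\ell(a,r_t(\cdot)),
\]
where $\chi_t(+1)=x_t$ and $\chi_t(-1)=x'_t$. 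This is the standard tree construction: the point used at time $t$ is selected by $\varepsilon_t$ alone. The function $r_t$, and hence $\phi_t$, is determined by $\varepsilon_1,\dots,\varepsilon_{t-1}$ and the pairs $(x_s,x'_s)_{s<t}$, through the points actually fed to the adversary.

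\textbf{Step 2: backward contraction (main obstacle).} The usual Ledoux--Talagrand contraction conditions on all signs but one, which breaks here because $\phi_s$ for $s>t$ depends on $\varepsilon_t$. I will therefore peel from $t=T$ down to $t=1$.
\begin{itemize}
\item At step $T$, condition on $\varepsilon_{<T}$ and all points. Then $\phi_T$ is fixed and $L$-Lipschitz, and the convex-increasing-$\Phi$ form of the contraction lemma replaces the $T$-th term by $L\varepsilon_T\bigl(h(\chi_T(\varepsilon_T))-h(\chi_T(-\varepsilon_T))\bigr)$.
\item This new term no longer depends on $\varepsilon_{<T}$ through $r_T$. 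Hence at step $T-1$ only term $T-1$ depends on $\varepsilon_{T-1}$ through a fixed $L$-Lipschitz map, and the contraction applies again.
\end{itemize}
Iterating removes every $r_t$ and yields the MGF of $L\sup_h\sum_t\varepsilon_t(h(x_t)-h(x'_t))$ up to constant factors. Here I used exchangeability of $(x_t,x'_t)$ to drop the $\chi$ selection. This is the step where the hybrid structure is essential: the randomness in $x$ is i.i.d.\ and only the labels are adaptive, so the result is the ordinary Rademacher complexity $T\,\mathsf{rad}_T(\Hcal)$ and not a sequential one. The details I will check most carefully are the exact statement of contraction I need (with Lipschitz maps $\phi_t$ that vary by coordinate, inside $\Phi=\exp(\lambda\cdot)$) and the order of conditioning.

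\textbf{Step 3: concentration of the symmetrized quantity.} The variable $S=\sup_h\sum_t\varepsilon_t(h(x_t)-h(x'_t))$ is a function of the independent variables $(x_t,x'_t,\varepsilon_t)_{t\le T}$ with bounded differences $O(1)$. McDiarmid's MGF bound gives
\[
\E e^{\lambda L S}\le \exp\bigl(\lambda L\,\E S+O(\lambda^2L^2T)\bigr),
\qquad \E S\le 2T\,\mathsf{rad}_T(\Hcal).
\]
Combining with Steps 1--2 and Markov's inequality, then optimizing $\lambda$ (at most a union bound over a grid of $O(\log T)$ values of $\lambda$, which accounts for the $\log T$), gives with probability $1-\delta$
\[
\sup_h\Bigl|\sum_t Z_t(h)\Bigr|\le O\Bigl(LT\,\mathsf{rad}_T(\Hcal)+L\sqrt{T\log(T/\delta)}\Bigr).
\]
Dividing by $T$ yields \Cref{prop:seq-conc}.
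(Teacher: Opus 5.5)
Your Step~2 does not go through. After symmetrization, the $t$-th term should read $\varepsilon_t\bigl(\ell(h(x_t),r_t(x_t))-\ell(h(x'_t),r_t(x'_t))\bigr)$, with $r_t$ determined by $\varepsilon_{<t}$ and the selected past points. (Your displayed formula both selects by $\chi_t(\varepsilon_t)$ and multiplies by $\varepsilon_t$. The two signs cancel, so as written the $t$-th term does not depend on $\varepsilon_t$ at all.)

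The real problem is that a single sign multiplies a difference of Lipschitz functions of two \emph{different} reals, $h(x_t)$ and $h(x'_t)$. This is not a contraction of $h\mapsto h(x_t)-h(x'_t)$: two hypotheses can share the value of $h(x_t)-h(x'_t)$ yet have different loss differences. So the one-step contraction lemma cannot replace the term by $L\varepsilon_t\bigl(h(x_t)-h(x'_t)\bigr)$, and the comparison you are aiming for is false. Take $\Hcal=\{h_0\equiv 0,\,h_1\equiv 1\}$, $\ell(a,b)=|a-b|$, and $r_t\equiv r$ with $r(x)$ a fair $\{0,1\}$ coin under $\Dcal$. Then $S\equiv 0$, while $\sup_h\sum_t Z_t(h)=\bigl|\sum_t(r(x_t)-1/2)\bigr|$ has mean of order $\sqrt T$. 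Its MGF is therefore at least $e^{c'\lambda\sqrt T}$ and cannot be bounded by a constant multiple of $\E e^{c\lambda LS}=1$. Your reduction discards exactly the label fluctuation, which is where the $L\sqrt{T\log(1/\delta)}$ term comes from.

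The missing step is the one the paper takes before invoking \Cref{lem:lemma17} and \Cref{lem:lemma18}: split
\[
\sup_h\sum_t\varepsilon_t\bigl(a_t(h)-b_t(h)\bigr)\le\sup_h\sum_t\varepsilon_t\,a_t(h)+\sup_h\sum_t(-\varepsilon_t)\,b_t(h),
\]
which is the factor $2$ in the paper's proof. In your MGF version, separate the two halves by Cauchy--Schwarz, $\E e^{\lambda(U+V)}\le\bigl(\E e^{2\lambda U}\,\E e^{2\lambda V}\bigr)^{1/2}$. Each half then has, per coordinate, one $L$-Lipschitz map of one real ($h(x_t)$ or $h(x'_t)$), and your backward peeling is valid.

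A second, smaller repair: contraction inside a strictly convex $\Phi=\exp(\lambda\,\cdot)$ needs $\phi_t(0)=0$. For a single hypothesis with value $0$ and $\phi\equiv c$, the left side exceeds the right by a factor $\cosh(\lambda c)>1$. So subtract $\ell(0,r_t(x_t))$ and control the $h$-independent martingale $\sum_t\varepsilon_t\,\ell(0,r_t(x_t))$ by Azuma. This uses boundedness of $\ell$ in its second argument. You (and the paper) already need this implicitly for the $O(L)$ increment bound, since Lipschitzness in the first argument alone does not give it.

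With both repairs you arrive at $L\sup_h\sum_t\varepsilon_t h(x_t)$, whose mean is $LT\,\mathsf{rad}_T(\Hcal)$, and Step~3 applies. Because this mean is deterministic, a single fixed $\lambda$ suffices and no grid is needed. Repaired this way, your argument is a self-contained version of the paper's route, and it actually carries out the high-probability step that the paper only sketches by citation.
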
This result provides a uniform convergence bound that effectively handles the data-dependent nature of the sequence $r_1, \ldots, r_T$.
The sequence $\ell (h(x_t), r_t (x_t)) - \E_\Dcal [\ell (h(x), r_t(x))]$ is a martingale difference sequence since $x_t$ is sampled after the choice of $r_t$ is made. Applying Azuma-Hoeffding together with a union bound over the class $\Hcal$ would only work for finite classes and would lead to a suboptimal bound of $\log |\Hcal|$.
We instead prove this lemma by employing a symmetrization technique and the application of a bound based on the distribution-dependent sequential Rademacher complexity, a measure introduced by \cite{rakhlin2011constrained}. The $L$-Lipschitzness of the loss function with respect to its first parameter is key and ensures the bound depends only on the complexity of the hypothesis class $\Hcal$ and the Lipschitz constant $L$, rather than the complexity of the $r_t$ sequence itself. We defer the full proof to \Cref{sec:seq-conc-proof}. We use \Cref{prop:seq-conc} to obtain the high probability guarantee in \Cref{thm:main-regret} on the sampled sequence.

\subsection{Technical Overview}

Our technical approach begins by considering the in-expectation regret objective: to guarantee a bound on $\sum_{t = 1}^T \mathbb{E}_\mathcal{D} [\ell (h_t(x), r_t(x))] - \min_{h \in \mathcal{H}} \sum_{t = 1}^T \mathbb{E}_\mathcal{D} [\ell (h(x), r_t(x))]$. We note that achieving a bound on this quantity is a weaker benchmark compared to the standard regret definition (which is measured against the sum of losses on observed samples).

A key limitation in this setting is that we do not have direct access to the distribution $\mathcal{D}$. To build intuition, suppose for a moment that we had access to $m$ i.i.d. samples, $S = \{s_1, \ldots, s_m\}$, from the distribution $\mathcal{D}$ \textit{a priori}. We make the crucial observation that $m$ samples are sufficient to guarantee uniform convergence for the combined function class $\mathcal{F} = \{f : f(x) = \ell (h(x), r(x)) \ \forall h \in \mathcal{H}, r \in \mathcal{R} \}$ at a rate characterized by $\mathsf{rad}_m(\mathcal{F})$. Therefore, if we had these samples upfront, the problem could be formulated as an online learning over $\mathcal{H}$. In each round $t$, given $r_t$, the loss for a hypothesis $h$ would be the empirical average loss over the sample set $S$: $\mathbb{E}_S[\ell(h(x), r_t(x))] = \frac{1}{m} \sum_{i=1}^m \ell(h(s_i), r_t(s_i))$. Due to the uniform convergence property, for any $h \in \mathcal{H}$ and adaptive $r_t \in \mathcal{R}$, the empirical average $\mathbb{E}_S[\ell(h(x), r_t(x))]$ would be a good approximation of the true expectation $\mathbb{E}_\mathcal{D}[\ell(h(x), r_t(x))]$. Since the loss function only depends on the $m$ samples, this online learning problem is essentially an Online Convex Optimization problem with action set $(h(s_1)/m, \ldots, h(s_m)/m) \in [0,1/m]^m$ for each $h \in \Hcal$ and where the loss vector in each round corresponds to the empirical losses $(\ell(h(s_1), r_t(s_1)), \ldots, \ell(h(s_m), r_t(s_m))) \in [0,1]^m$. Since the action set --- the projection of $\Hcal$ on the $m$ samples --- is a subset of $[0,1/m]^m$ which is a subset of the $m$-dimensional simplex, then applying Follow the Regularized Leader (FTRL) achieves regret of $\sqrt{T \log m}$. Unfortunately, a naive application of FTRL will return actions on the $m$ dimensional simplex which may not correspond to any hypothesis in the class $\Hcal$. To solve this problem, we introduce a Frank-Wolfe reduction to the linear optimization oracle in \Cref{sec:hybrid-frank-wolfe}.

However, in the Hybrid Online Learning problem, we do not have the samples upfront. Instead, we observe samples sequentially as part of the online process itself. We thus use the dataset accumulated up to round $t-1$, $S_t = \{x_1, \ldots, x_{t-1}\}$, to define an empirical loss at round $t$: $\mathbb{E}_{S_t}[\ell(h_t(x), r_t(x))] = \frac{1}{t-1} \sum_{i=1}^{t-1} \ell(h_t(x_i), r_t(x_i))$. Unfortunately, due to the dynamically changing structure of this empirical loss function (as the dataset $D_t$ grows with $t$), this problem cannot be directly modeled as an Online Convex Optimization problem with a fixed vector space and a sequence of linear loss functions.

Despite this challenge posed by the adaptive structure of the empirical loss, we are still able to make progress by constructing an adaptive sequence of entropy regularizers. In a departure from standard FTRL analysis, the regularizers we employ are not strongly convex over the entire ambient vector space (which is of dimension $T$). This is because we never observe the ``full vector'' of losses or learner's actions on all $T$ samples at any given time $t < T$. Nevertheless, we bypass this difficulty by demonstrating that our adaptive entropy regularizers are strongly convex on the \textit{relevant coordinates} (the first $t-1$ dimensions) at step $t$. This careful construction allows us to achieve a favorable bound of $O(\sqrt{T \log T})$ with respect to our in-expectation regret benchmark (the sum of expected losses).

Finally, the remaining step is to transition from the weaker benchmark (regret against the sum of expected losses over $\mathcal{D}$) to the stronger benchmark (regret against the sum of actual losses incurred on the observed samples $x_1, \ldots, x_T$). This is where uniform convergence arguments shown in \Cref{prop:seq-conc} come into play, allowing us to convert the bound on the weaker benchmark into the desired bound on the standard regret definition.

\subsection{Technical Preliminaries}
\label{sec:prelim}

\paragraph{Complexity Measures}
For a function class $\mathcal{F} \subseteq \mathbb{R}^\mathcal{X}$ and samples $x_1, \ldots, x_T \in \mathcal{X}$, the empirical Rademacher complexity is
$ \widehat{\mathsf{rad}}_T (\{f|_{x_1,\ldots,x_T} : f \in \Fcal\}) = \E_{\sigma} \left[ \sup_{f \in \Fcal} \frac{1}{T} \sum_{t=1}^T \sigma_t f(x_t) \right] $,
where $\sigma_1, \ldots, \sigma_T$ are independent random variables uniformly drawn from $\{\pm 1\}$. The Rademacher complexity at horizon $T$ with respect to distribution $\Dcal$ is
$ \mathsf{rad}_T (\Fcal) = \E_{x_1, \ldots, x_T \sim \Dcal} [\widehat{\mathsf{rad}}_T (\{f|_{x_1,\ldots,x_T} : f \in \Fcal\})]$. It is well known that for binary classes, the Rademacher complexity is tightly controlled by the VC dimension: it is both upper and lower bounded (up to logarithmic factors) by $\sqrt{\mathrm{VCdim}(\Fcal)/T}$ \citep{bartlett2002rademacher,foundationsofml}. A similar result holds for real-valued classes and the fat-shattering dimension \citep{foundationsofml}.

We additionally define the composite function class:
$\ell \circ \Hcal \times \Rcal = \{ x \mapsto \ell (h(x), r(x)) \mid h \in \Hcal, r \in \Rcal \}$.

\paragraph{Linear Optimization Oracle}
Our algorithm's computational efficiency is measured in terms of calls to a Linear Optimization Oracle for the hypothesis class $\mathcal{H}$. A Linear Optimization Oracle for $\mathcal{H}$ is an algorithm that, given a set of points $S = \{s_1, \ldots, s_m\} \subset \mathcal{X}$ and a set of weights for those points $V = \{v_1, \ldots, v_m\} \subset \RR$, returns a hypothesis $h^* \in \mathcal{H}$ that minimizes $\sum_{i=1}^m v_i h(s_i)$ over $\mathcal{H}$. 
In our context, the set $S$ will typically be the set of observed samples $x_1, \ldots, x_{t-1}$ at round $t$.

\subsection{Comparison to Prior Work}

The study of hybrid online learning with an unknown i.i.d.\ feature source was initiated by \citet{LazaricMunos2009Hybrid}, who obtained $O(\sqrt{dT\log T})$ regret for binary hypothesis classes with VC dimension $d$ under the absolute loss. More recently, \citet{wu2023expected} extended these guarantees to real-valued hypothesis classes and general convex losses, achieving statistically near-optimal expected regret bounds for VC classes. Their algorithms rely on constructing stochastic covers of the hypothesis class, which can be computationally intractable for many natural hypothesis classes.

The first oracle-efficient algorithm for this setting was given by \citet{pmlr-v247-wu24a}, who obtained $\tilde O(d^{1/2}T^{3/4})$ regret for finite VC classes using a relaxation-based Follow-the-Perturbed-Leader method. While this approach is computationally efficient given an ERM oracle, the resulting regret rate is statistically suboptimal. Our work studies a structured variant of the hybrid learning problem in which the adversary is constrained to choose labels from a fixed function class $\Rcal$. Under this assumption, we give an oracle-efficient algorithm whose regret scales with the Rademacher complexity of the composite class $\ell \circ (\Hcal \times \Rcal)$, thereby recovering statistical rates whenever this composite complexity is small.

Our work is also related to recent results on smoothed and distribution-structured online learning. In particular, \citet{block2024ermsmoothed} study a smoothed online learning model and show that empirical risk minimization can achieve oracle-efficient error rates governed by the statistical complexity of the hypothesis class even when the base distribution is unknown. Their results imply an oracle-efficient algorithm for the realizable case of hybrid learning, in which all labels are generated by a single hypothesis from the class. A key distinction from our setting is that in the hybrid model we study, the adversary may choose a different labeling function $r_t \in \Rcal$ at each round. In particular, even in the special case $\Rcal=\Hcal$, the labels revealed to the learner need not be consistent with any single hypothesis $h\in\Hcal$ over all rounds. Consequently, the hybrid setting we consider strictly generalizes the realizable case while still achieving regret bounds of the same statistical order.

More broadly, smoothed online learning has been studied under various assumptions on the stochastic source \citep{haghtalab2020smoothed, haghtalab2024smoothed, block2022smoothed}. However, many of these works assume either knowledge of the base distribution or sampling access to it. In contrast, the hybrid model considered here assumes only that features are drawn i.i.d.\ from an unknown distribution. Relatedly, \citet{rakhlin2011constrained} studied a distribution-dependent online learning framework in which Nature adaptively selects sampling distributions, but their results primarily apply to settings where the distribution is known to the learner.

Finally, our work is conceptually related to the comparative learning framework introduced by \citet{comparativelearning}, in which labeling functions are also restricted to lie in a known function class. A key difference is that in comparative learning the labeling function is fixed, whereas in our hybrid setting the adversary may choose a different labeling function $r_t \in \Rcal$ at each round. Understanding whether complexity measures such as the mutual VC dimension introduced by \citet{comparativelearning} can characterize the sample complexity of hybrid learning remains an interesting direction for future work.

\section{Oracle-Efficient Hybrid Learning}
\label{sec:oracle-efficient}
In \Cref{sec:hybrid-learner}, we show an oracle-efficient learning algorithm for our structured hybrid setting that provides the in-expectation guarantee in \Cref{sec:prelim}. In \Cref{sec:main-proof}, we prove our main result (\Cref{thm:main-regret}).

\subsection{In-Expectation Regret Guarantee using Truncated Entropy Regularization} 
\label{sec:hybrid-learner}
This subsection presents and analyzes an algorithm for hybrid learning that makes use of a subroutine called an \emph{entropy-regularized $\ell$-ERM oracle over $\Hcal$}, defined as follows. 

\begin{definition} \label{def:reg-erm}
\sloppy An entropy-regularized $\ell$-ERM oracle is initialized with a class of functions 
$\Hcal : \X \to [0,1]$. 
The oracle takes, as input, a subset $S \subset \Xcal$ of features, a set of triples 
$(x_1, y_1, w_1),\ldots,(x_m,y_m, w_m) \in S \times \R \times \R$ and parameters $\eta, \epsilon$. It outputs an element $h$ in the convex hull of $\Hcal$, such that $h$ minimizes (within $\eps$) the function
$\sum_{i=1}^{m} w_i \ell(h(x_i), y_i) \, + \, \frac{1}{\eta} \sum_{s \in S} h(s) \log(h(s)+1).$
\end{definition}
We use $\log (h(x_i)+1)$ rather than $\log h(x_i)$ in the regularizer to ensure the argument to the log is well-defined on $[0,1]$ but more importantly, $a \log (a+1)$ is uniformly strongly convex on the entire interval $[0,1]$ 
In \Cref{sec:hybrid-frank-wolfe} below, we show how to use the Frank-Wolfe method to implement an $\eps$-approximate regularized $\ell$-ERM oracle using polynomial number of calls to a linear optimization oracle. 

\begin{theorem}\label{thm:eg-hybrid}
Let $\Hcal \subseteq [0,1]^\Xcal$ be a class of hypothesis functions and let $\Rcal \subseteq [0,1]^\Xcal$ be a class of labeling functions. Let $\ell$ be a loss function that is convex, $L$-Lipschitz in the first parameter. Given an entropy-regularized $\ell$-ERM oracle
for $\Hcal$,  \Cref{alg:eg_hybrid} outputs a sequence of hypothesis functions $h_1, \ldots, h_T$ such that with probability at least $1 - \delta$,
\[
\sum_{t = 1}^T \EE [\ell (h_t(x), r_t(x))]  \leq \min_{h \in \Hcal} \sum_{t = 1}^T \EE [\ell (h(x), r_t(x))] + T \cdot \mathsf{rad}_T (\ell \circ \Hcal \times \Rcal) + O \left(L\sqrt{T \log T} \right)
\]
The algorithm runs in time \pcoreplace{$O(T)$}{$O(T^2)$} per timestep and makes $T$ calls the entropy-regularized $\ell$-ERM oracle for $\Hcal$.
\end{theorem}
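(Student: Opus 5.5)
We prove \Cref{thm:eg-hybrid} by splitting the in-expectation regret into a statistical part and an optimization part, with the empirical distribution on past samples as the bridge. Concretely, \Cref{alg:eg_hybrid} sets, at round $t$, $S_t=\{x_1,\ldots,x_{t-1}\}$. It calls the entropy-regularized $\ell$-ERM oracle with the triples $(x_i, r_s(x_i), w)$ for $i<t$, $s<t$, using weights $w=1/(t-1)$. The output is
\[
h_t \approx \argmin_{h\in \mathrm{conv}(\Hcal)} \sum_{s<t}\widehat{\EE}_{S_t}[\ell(h(x),r_s(x))] + \frac{1}{\eta_t}\cdot\frac{1}{t-1}\sum_{x\in S_t} h(x)\log(h(x)+1).
\]
This is FTRL on the cumulative empirical losses measured on the current sample, with a regularizer supported only on the observed coordinates. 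Convexity of $\ell$ in its first argument makes the objective convex over $\mathrm{conv}(\Hcal)$. By Jensen's inequality, the comparator over $\mathrm{conv}(\Hcal)$ is no better than over $\Hcal$. The per-round cost is $O(T)$ triples, or $O(T^2)$ if they are rebuilt from scratch, plus one oracle call per round.

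\textbf{Step 1 (statistical reduction).} We view $\EE_{\Dcal}[\ell(h(x),r(x))]$ and the empirical average over the fixed sample $x_1,\ldots,x_T$ as expectations of functions in $\ell\circ\Hcal\times\Rcal$. The aim is to replace every $\EE_\Dcal$ in the regret by an empirical average. The leader's losses are evaluated on the growing samples $S_t$, and the $r_s$ are adaptive, so the cleanest route is to compare against the full sample $S_{T+1}$ for the comparator, and to use that $x_t$ is fresh relative to $(h_t,r_t)$ for the played actions. Standard uniform convergence over $\ell\circ\Hcal\times\Rcal$ (symmetrization, which handles the adaptive choice of $r_t$ because the supremum ranges over all of $\Rcal$), combined with McDiarmid's inequality, yields deviations of order $\mathsf{rad}_T(\ell\circ\Hcal\times\Rcal)+\sqrt{\log(T/\delta)/T}$ per round. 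Summing over $t$ contributes the $T\cdot\mathsf{rad}_T$ term plus lower-order terms.

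\textbf{Step 2 (FTRL with truncated entropy).} This is the main obstacle. The domain changes with $t$: the vectors $(h(x_i))_{i<t}$ live in $[0,1]^{t-1}$, and the loss normalization $1/(t-1)$ shifts. We will work in the ambient space $[0,1]^T$ with weight vector $1/T$ on all coordinates. The regularizer $\psi_t(h)=\sum_{i<t} h(x_i)\log(h(x_i)+1)$ is strongly convex on the first $t-1$ coordinates, with modulus $\ge 1/2$ per coordinate on $[0,1]$, and is constant in the rest.

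At step $t$, the gradient of the loss $\ell(\cdot,r_t(x))$ with respect to the coordinates of $S_t$ is bounded by $L/(t-1)$ in each coordinate. The stability term is therefore $\|g_t\|_{*}^2\,\eta_t \lesssim \eta_t L^2/t$ in the local norm of $\psi_t$. We must then bound the range of $\psi_t$ and the cost of the regularizer changing between rounds. The standard be-the-leader telescoping with time-varying regularizers gives, after normalizing by the sample size, a regularizer range of $O(\log T)$ and penalty $O(\log T/\eta_T)$. We will handle the drift terms $\psi_{t+1}-\psi_t$, which add one nonnegative coordinate of size at most $\log 2$, by monotonicity: they are absorbed into the telescoping sum, as in FTRL with increasing regularizers. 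Choosing $\eta_t \asymp \sqrt{\log T/t}/L$ gives $O(L\sqrt{T\log T})$.

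\textbf{Step 3 (oracle error).} Step 3 adds at most $T\epsilon$, which is made negligible by taking $\epsilon=1/\sqrt T$. The overall failure probability $\delta$ is split over the union bound across $T$ rounds.
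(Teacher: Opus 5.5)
Your proposal has a genuine gap at its core. Step 2 analyzes a different loss sequence from the one \Cref{alg:eg_hybrid} uses, and the sequence you describe does not support the FTRL telescoping you invoke.

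In the algorithm the oracle receives $(x_i, r_s(x_i), \frac{1}{s-1})$ for $i<s\le t$. Each $r_s$ is evaluated only on $x_1,\ldots,x_{s-1}$, so the surrogate $\tilde\ell_s(v)=\frac{1}{s-1}\sum_{i<s}\ell(v^{(i)},r_s(x_i))$ is a fixed convex function on $\Vcal\subseteq[0,1]^T$ from round $s$ onward. The regularizer $\psi_t(v)=\frac{1}{\eta}\sum_{i<t}v^{(i)}\log(v^{(i)}+1)$ has a fixed $\eta$ and no $1/(t-1)$ factor, so $\psi_{t+1}\ge\psi_t$ and $F_{t+1}=F_t+\tilde\ell_t+(\psi_{t+1}-\psi_t)$. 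That identity is exactly what the be-the-leader argument needs.

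In your version every past $r_s$ is re-evaluated and re-weighted on the current sample. Past losses therefore change retroactively, and the leader's objective is no longer a running sum of the losses whose regret you bound. The obvious patch, viewing it as a perturbation of a fixed objective, costs an additive error of order $(t-1)\,\mathsf{rad}_{t-1}$ in round $t$, which sums to superlinear regret for VC classes.

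Your bookkeeping is also inconsistent. The stability bound $\eta_tL^2/t$ holds for the unnormalized $\psi_t/\eta_t$, whose penalty can be $T\log 2/\eta_T$ rather than $O(\log T/\eta_T)$. If you normalize by $t-1$ instead, stability is $O(\eta_tL^2)$ per round and the per-coordinate coefficient $\frac{1}{\eta_t(t-1)}$ decreases in $t$. The drift terms $\psi_t-\psi_{t+1}$ are then not absorbed by monotonicity.

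What is missing is the content of \Cref{lem:strong-convex} and \Cref{lem:f-per-timestep}:
\begin{itemize}
\item with fixed $\eta$, $\psi_t$ is $\frac{3}{4\eta(t-1)}$-strongly convex w.r.t.\ $\ell_1$ on the first $t-1$ coordinates (diagonal Hessian at least $\frac{3}{4\eta}$, then $\|u\|_2^2\ge\|u\|_1^2/(t-1)$);
\item $F_t$ and $\tilde\ell_t$ depend only on those coordinates, so the one-step FTRL bound can be run there;
\item $\|g_t\|_\infty\le L/(t-1)$.
\end{itemize}
Together these give regret $\frac{T\log 2}{\eta}+\frac{2\eta L^2}{3}(1+\log(T-1))$, optimized by $\eta=\sqrt{T/(L^2\log T)}$.

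Steps 1 and 3 have matching problems. The FTRL bound controls $\sum_t[\tilde\ell_t(v(h_t))-\tilde\ell_t(v(h))]$, which are averages over the prefix $x_1,\ldots,x_{t-1}$. You must therefore convert both $\EE_{\Dcal}[\ell(h_t(x),r_t(x))]$ and $\EE_{\Dcal}[\ell(h(x),r_t(x))]$ into those same prefix averages. Because $h_t$ and $r_t$ both depend on that prefix, the paper applies \Cref{lem:alg-convergence} uniformly over $\ell\circ\Hcal\times\Rcal$ and over all prefixes, with a union bound over $t$. This costs $2\mathsf{rad}_{t-1}+\sqrt{\log(2T/\delta)/(t-1)}$ in round $t$, not $\mathsf{rad}_T+\sqrt{\log(T/\delta)/T}$. \Cref{lem:sum-rad-prefixes} is then needed to pass from $\sum_t\mathsf{rad}_{t-1}$ to $\tilde O(T\,\mathsf{rad}_T)$.

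Invoking freshness of $x_t$ turns $\EE_{\Dcal}[\ell(h_t(x),r_t(x))]$ into the realized loss $\ell(h_t(x_t),r_t(x_t))$, which your FTRL step does not control. A full-sample comparator does not match prefix losses either.

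In Step 3, an $\eps$-approximate minimizer of $F_t$ can lose far more than $\eps$ on the new function $\tilde\ell_t$. The correct accounting uses strong convexity to get $\|\bar v_t^{(1:t-1)}-v_t^{(1:t-1)}\|_1\le\sqrt{2\eps/\lambda_t}$, then the $L/(t-1)$ gradient bound. The total is $O(L\sqrt{\eta\eps T})$ rather than $T\eps$.
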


\paragraph{Overview of \Cref{alg:eg_hybrid}}
The algorithm implements a hybrid learner using the Follow The Regularized Leader (FTRL) approach over the class $\Hcal$. We define a surrogate loss for each timestep based on the empirical average of the actual loss with respect to the adversary's choice $r_t$ on the samples $x_1, \ldots, x_{t-1}$ seen so far. Then we choose the approximate minimizer of the cumulative surrogate loss and an entropy regularizer that only depends on $x_1, \ldots, x_{t-1}$.

Concretely, at each timestep \(t\), the algorithm outputs a predictor \(h_t \in \mathrm{conv}(\Hcal)\). After observing the sample \(x_t\) and receiving the adversary's labeling function \(r_t\), the algorithm prepares the input dataset for the entropy-regularized ERM oracle to compute the next predictor \(h_{t+1}\). The dataset provided to the oracle at step \(t\) consists of triples \((x_i, y_i, w_i)\) derived from the samples \(\{x_1, \ldots, x_{t-1}\}\) and the past adversarial functions \(\{r_2, \ldots, r_t\}\). Specifically, for each pair of \(s \in \{2, \ldots, t\}\) and \(i \in \{1, \ldots, s-1\}\), the oracle receives a triple \((x_i, r_s(x_i), \frac{1}{s-1})\). 
The oracle finds an $\eps$-approximate minimizer of this cumulative regularized empirical loss, and this minimizer becomes the predictor \(h_{t+1}\) for the next round.

We introduce the following notation: let $v(h) = (h(x_1), \ldots, h(x_T))$ and $\mathcal{V} = \mathrm{conv}(\{v(h) \mid h \in \Hcal\})$. We define the surrogate loss function at time $t \geq 2$ as $\tilde\ell_t(v) = \frac{1}{t-1} \sum_{s=1}^{t-1} \ell(v^{(s)}, r_t(x_s))$, and $\tilde\ell_1(v)=0$ (where $v^{(s)}$ refers to the $s$-th coordinate of the vector $v$).
Define the regularizer at time $t > 1$ as $\psi_t(v) = \frac{1}{\eta} \sum_{s=1}^{t-1} v^{(s)} \log(v^{(s)} + 1)$, and $\psi_1(v)=0$.
The algorithm at step $t$ outputs $h_t$ (corresponding to $\bar v_t$) where $\bar v_{t}$ 
is an $\eps$-approximate minimizer of $F_{t}(v) = \sum_{s=1}^{t-1} \tilde\ell_s(v) + \psi_{t}(v)$ for $t > 1$.

\begin{algorithm}[ht]
\caption{Hybrid Learner via FTRL with Truncated-Entropy Regularization}
\label{alg:eg_hybrid}
\begin{algorithmic}[1]
\Require Sequence of i.i.d.\ samples \(\{x_t\}_{t=1}^T \sim \Dcal^T\), time horizon \(T\), failure probability \(\delta\), approximation parameter \(\eps\) for the oracle
\Ensure Sequence of predictors \(\{h_t\}_{t=1}^T\) where each \(h_t \in \mathrm{conv}(\Hcal)\)
\State Set \(\eta \gets \sqrt{T/L^2 \log T}, \eps = L \log^{3/2} T/ \sqrt{T} \)
\State Initialize $h_1 $ to some arbitrary hypothesis in $\mathcal{H}$
\For{$t = 1$ to $T$}
    \State Output \(h_t\), Observe \(x_t\).
    \State Receive adversary function \(r_{t} \in \Rcal\).
    \State Construct the set of triples $\mathcal{S}_t = \bigcup_{s=2}^{t} \left\{ (x_i, r_s(x_i), \frac{1}{s-1}) \mid i \in \{1, \ldots, s-1\} \right\}$. (For $t=1$, $\mathcal{S}_1 = \emptyset$).
    \State Obtain next predictor $h_{t+1} \in \mathrm{conv}(\mathcal{H})$ by calling the entropy-regularized ERM oracle (in \Cref{alg:hybrid-frank-wolfe}) with input dataset $\mathcal{S}_t$ and feature set $\{x_1, \ldots, x_t\}$:
    \[
    h_{t+1} \leftarrow {{\argmin}^\eps_{h \in \mathrm{conv}(\mathcal{H})}} \left\{ \sum_{(x,y,w) \in \mathcal{S}_t} w \ell(h(x), y) \;+\; \frac{1}{\eta}\sum_{s = 1}^t h(x_s) \log (h(x_s) + 1) \right\}.
    \] \label{algline:argmin_h_alg}
\EndFor
\State \textbf{return:} Sequence of predictors \(\{h_t\}_{t=1}^T\)
\end{algorithmic}
\end{algorithm}

\begin{restatable}[Approximate FTRL for Hybrid Learning]{lemma}{approxhybrid}\label{lem:approx-hybrid-ftrl-guarantee}
For $\eta, \eps > 0$, the empirical regret of \Cref{alg:eg_hybrid} is bounded by
$$
\sum_{t=1}^{T} \tilde\ell_t(\bar v_t) - \min_{u \in \Vcal}\tilde\ell_t(u) \leq \frac{T \log 2}{\eta} + \frac{4\eta L^2 \log T}{3}  + 5 L\sqrt{\eta \eps T} .
$$
\end{restatable}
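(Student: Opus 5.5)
The plan is to run the standard analysis of FTRL with a non-decreasing sequence of regularizers, adapted to the fact that everything at round $t$ depends only on the first $t-1$ coordinates. Let $v_t$ denote the exact minimizer of $F_t$ over $\Vcal$, and note that both $F_t$ and $\tilde\ell_t$ depend on $v$ only through its projection $\pi_{t-1}(v)=(v^{(1)},\ldots,v^{(t-1)})$. Hence we may view $F_t$ as a function on $\pi_{t-1}(\Vcal)$. Convexity of $\ell$ in its first argument makes each $\tilde\ell_s$ convex.

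\textbf{Step 1: split the regret.} For any comparator $u\in\Vcal$ I will write
\[
\sum_{t=1}^T \bigl(\tilde\ell_t(\bar v_t)-\tilde\ell_t(u)\bigr) = \sum_{t=1}^T \bigl(\tilde\ell_t(\bar v_t)-\tilde\ell_t(v_t)\bigr) + \sum_{t=1}^T \bigl(\tilde\ell_t(v_t)-\tilde\ell_t(u)\bigr).
\]
The first sum is the oracle-approximation term and the second is exact FTRL regret.

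\textbf{Step 2: exact FTRL with growing regularizers.} For the second sum I will use the standard telescoping (``be-the-leader'') argument with $F_{T+1}$. It gives
\[
\sum_t \bigl(\tilde\ell_t(v_t)-\tilde\ell_t(u)\bigr) \le \psi_{T+1}(u)-\min\psi_1 + \sum_t \bigl[F_t(v_t)-F_{t+1}(v_{t+1})+\tilde\ell_t(v_t)\bigr] + \sum_t\bigl[\psi_t(v_{t+1})-\psi_{t+1}(v_{t+1})\bigr].
\]
The last sum is $\le 0$ because $\psi_{t+1}-\psi_t=\frac1\eta v^{(t)}\log(v^{(t)}+1)\ge 0$. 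Since $0\le a\log(a+1)\le\log 2$ on $[0,1]$, we get $\psi_{T+1}(u)\le T\log 2/\eta$, which is the first term of the bound.

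\textbf{Step 3: stability, the main obstacle.} The stability term $F_t(v_t)-F_{t+1}(v_{t+1})+\tilde\ell_t(v_t)$ is bounded by $F_t(v_t)+\tilde\ell_t(v_t) - \min(F_t+\tilde\ell_t)$, again using $\psi_{t+1}\ge\psi_t$. Strong convexity is needed here, and $\psi_t$ is not strongly convex on $\R^T$. It is, however, strongly convex on the first $t-1$ coordinates, which are the only coordinates $F_t+\tilde\ell_t$ depends on. Indeed, $\frac{d^2}{da^2}a\log(a+1)=\frac{a+2}{(a+1)^2}\ge \frac34$ on $[0,1]$, so $\psi_t$ is $\frac{3}{4\eta}$-strongly convex in $\ell_2$ on $\pi_{t-1}(\Vcal)$.

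The loss $\tilde\ell_t$ has (sub)gradient entries of size at most $L/(t-1)$ on $t-1$ coordinates, so its $\ell_2$ norm is at most $L/\sqrt{t-1}$. The usual strong-convexity stability bound then gives a per-round term at most $\frac{\|\nabla\tilde\ell_t\|_2^2}{2\cdot\frac{3}{4\eta}}\cdot 2=\frac{4\eta L^2}{3(t-1)}$. Summing over $t$ gives $\frac{4\eta L^2\log T}{3}$; round $1$ contributes nothing since $\tilde\ell_1=0$. The care needed here is to justify that the minimizer comparison can be done in the projected space.

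\textbf{Step 4: oracle error.} Strong convexity of $F_t$ on $\pi_{t-1}(\Vcal)$ together with $F_t(\bar v_t)\le F_t(v_t)+\eps$ gives
\[
\|\pi_{t-1}(\bar v_t-v_t)\|_2\le\sqrt{8\eta\eps/3}.
\]
Combined with the $L/\sqrt{t-1}$ Lipschitz bound on $\tilde\ell_t$, each term of the first sum is at most $L\sqrt{8\eta\eps/3}/\sqrt{t-1}$. Summing with $\sum_t (t-1)^{-1/2}\le 2\sqrt T$ gives a bound of at most $5L\sqrt{\eta\eps T}$. Adding the three pieces yields the statement.
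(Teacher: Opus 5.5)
Your proposal is correct and is essentially the paper's proof. It uses the same split into exact-FTRL regret plus oracle error, the FTRL identity with nondecreasing $\psi_t$, and a stability bound from strong convexity of $F_t+\tilde\ell_t$ viewed as a function of the first $t-1$ coordinates only (the paper's variant of Orabona's Lemma 7.8). It then handles the oracle error with strong convexity plus Lipschitzness. The only difference is cosmetic: you pair $\ell_2$ with $\ell_2$ (modulus $\frac{3}{4\eta}$, gradient norm $L/\sqrt{t-1}$), whereas the paper pairs $\ell_1$ with $\ell_\infty$ (modulus $\frac{3}{4\eta(t-1)}$, gradient norm $L/(t-1)$). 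Both give identical per-round quantities, including $L\sqrt{8\eta\eps/(3(t-1))}$ for the oracle term.

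One fix is needed for the explicit constant. The factor $2$ in Step 3 is unnecessary, and if kept it breaks the claim: it yields $\frac{4\eta L^2}{3}\sum_{k=1}^{T-1}\frac1k$, and $\sum_{k=1}^{T-1}\frac1k>\log T$ for $T\ge 2$. Instead use the tight bound $f(v_t)-\min f\le\|g\|^2/(2\mu)$ with $g$ a subgradient of $\tilde\ell_t$ at $v_t$. This is valid because $0$ is a subgradient at $v_t$ of $F_t$ plus the indicator of $\pi_{t-1}(\Vcal)$. It gives $\frac{2\eta L^2}{3(t-1)}$ per round, and $\frac{2\eta L^2}{3}(1+\log(T-1))\le\frac{4\eta L^2\log T}{3}$.

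Also, the display in Step 2 should end with the term $F_{T+1}(v_{T+1})-F_{T+1}(u)\le 0$, not the extra $\psi$-difference sum. Step 3 already absorbs $\psi_{t+1}-\psi_t\ge 0$, so this is harmless once you discard that sum.
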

To prove this lemma, we first bound the regret of playing the exact minimizers of $F_t$. Then we appeal to the strong convexity of $F_t$ and Lipschitzness of $\ell$ to bound the loss of playing the approximate minimizers. We view this as an OCO problem with ambient vector space $\Vcal \subset [0,1]^T$ and convex loss vectors $\tilde\ell_t$ with an adaptive sequence of regularizers $\psi_t$. We adapt the analysis of FTRL to deal with the fact that the algorithm never observes the full vector $v \in \Vcal$ and the regularizers are not strongly convex with respect to the $\ell_1$-norm of the full ambient space. However, the loss functions $\tilde\ell_t$ and the regularizer $\psi_t$ only depend on the first $t$ coordinates, which means its gradients are zero for coordinates $s > t$. As a result, the $\psi_t$ is strongly convex w.r.t the $\ell_1$ norm of the first $t$ coordinates and this suffices for the proof. The full proof can be found in \Cref{sec:hybrid-learner-proof}.

Now we present a uniform convergence result necessary for relating the average loss on the samples seen so far to the expected loss under the true distribution. The full proof can be found in \Cref{sec:hybrid-learner-proof}.

\begin{restatable}{lemma}{mainconc}\label{lem:alg-convergence} 
Let $\Fcal \subset [0,1]^\Xcal$ be a class of functions. Let $x_1, \ldots, x_T$ be a sequence of samples drawn i.i.d from a fixed distribution $\Dcal$. With probability at least $1-\delta$, for all $t \in [T], f \in \Fcal$, 
\[
\frac{1}{t} \sum_{s = 1}^t f(x_s) - \E_{x \sim \Dcal} [f(x)] \leq 2 \mathsf{rad}_t(\Fcal) + \sqrt{\frac{\log(2T/\delta)}{t}}
\]
\end{restatable}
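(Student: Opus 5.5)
The plan is the standard Rademacher uniform convergence bound, applied once for each fixed $t$, followed by a union bound over $t\in[T]$.

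Fix $t \in [T]$ and define
\[
\Phi_t(x_1,\ldots,x_t) = \sup_{f \in \Fcal}\Big(\frac{1}{t}\sum_{s=1}^t f(x_s) - \E_{x\sim\Dcal}[f(x)]\Big).
\]
Since $\Fcal \subset [0,1]^\Xcal$, changing a single coordinate $x_s$ changes $\Phi_t$ by at most $1/t$. McDiarmid's bounded-differences inequality therefore gives, with probability at least $1-\delta'$,
\[
\Phi_t \le \E[\Phi_t] + \sqrt{\frac{\log(1/\delta')}{2t}}.
\]

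Next I bound $\E[\Phi_t]$ by symmetrization. Introduce an independent ghost sample $x'_1,\ldots,x'_t \sim \Dcal$ and write $\E_\Dcal[f] = \E_{x'}\big[\frac1t\sum_s f(x'_s)\big]$. Pulling the supremum outside the expectation over $x'$ (Jensen) gives
\[
\E[\Phi_t] \le \E\Big[\sup_f \frac1t\sum_s \big(f(x_s)-f(x'_s)\big)\Big].
\]
Each difference $f(x_s)-f(x'_s)$ is symmetric in distribution under swapping $x_s \leftrightarrow x'_s$, so we may insert i.i.d.\ Rademacher signs $\sigma_s$ without changing the expectation. Splitting the supremum over the two halves then yields $\E[\Phi_t] \le 2\,\mathsf{rad}_t(\Fcal)$, using the definition of $\mathsf{rad}_t$ from \Cref{sec:prelim} at horizon $t$.

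Finally, set $\delta' = \delta/T$, or $\delta/(2T)$ if a two-sided statement is wanted, and take a union bound over $t = 1,\ldots,T$. Every event then holds simultaneously with probability at least $1-\delta$. The deviation term is
\[
\sqrt{\frac{\log(T/\delta)}{2t}} \le \sqrt{\frac{\log(2T/\delta)}{t}},
\]
which matches the stated bound, with some slack to spare.

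None of these steps is a serious obstacle. The only point requiring care is that the bound is needed uniformly in $t$, and the union bound handles this at the cost of the $\log T$ term. A second minor point is measurability of the supremum for uncountable $\Fcal$, which I would assume as usual (for example, by restricting to countable dense subclasses).
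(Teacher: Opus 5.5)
Your proposal is correct and is essentially the paper's proof. The paper fixes $t$ and invokes the standard Rademacher generalization bound (\Cref{thm:Rademacher}, whose proof is exactly your McDiarmid-plus-symmetrization argument) for both $\Fcal$ and $-\Fcal$ at confidence $\delta_t/2$, which gives a two-sided bound; it then takes a union bound over $t\in[T]$ with $\delta_t=\delta/T$. You instead inline that theorem's proof and work directly with the one-sided supremum in the direction the statement requires, and your $\delta/(2T)$ remark recovers the two-sided version that the paper actually uses later.
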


\begin{proof}[Proof of \Cref{thm:eg-hybrid}]
By \Cref{lem:approx-hybrid-ftrl-guarantee}, the empirical regret of the sequence $\bar v_2, \ldots, \bar v_{T}$ (using $\tilde\ell_2, \ldots, \tilde\ell_T$) with respect to any $u \in \mathcal{V}$ is bounded by:
$ \sum_{t=2}^{T} (\tilde\ell_t(\bar v_t) - \tilde\ell_t(u)) \leq \mathcal{O}\left(L \sqrt{T \log T} \right) + \mathcal{O}\left( L\sqrt{\eta \eps T} \right). $
Let $h \in \Hcal$ be arbitrary, and $u = v(h)$. Since $\bar v_t = v(h_t)$,
$$ \sum_{t=2}^{T} \left( \frac{1}{t-1} \sum_{s=1}^{t-1} \ell(h_t(x_s), r_t(x_s)) - \frac{1}{t-1} \sum_{s=1}^{t-1} \ell(h(x_s), r_t(x_s)) \right) \leq \mathcal{O}\left(L \sqrt{T \log T} \right) + \mathcal{O}\left( L\sqrt{\eta \eps T} \right). $$
Applying \Cref{lem:alg-convergence} to the function class $F = \{ x \rightarrow \ell (h(x), r(x)) \ \forall h \in \Hcal, r \in \Rcal \}$, we have that, with probability at least $1 - \delta$, for all $t \geq 2$ and $h \in \Hcal$,
$$ \left| \EE_{x \sim \Dcal}[\ell(h(x), r_t(x))] - \frac{1}{t-1} \sum_{s=1}^{t-1} \ell(h(x_s), r_t(x_s)) \right| \leq 2 \mathsf{rad}_{t-1}(\ell \circ \Hcal \times \Rcal)] +  \sqrt{\frac{\log(2T/\delta)}{t-1}} $$
Plugging back in to the regret guarantee, we obtain that with probability at least $1 - \delta$, for all $h \in \Hcal$,
\begin{align}
 \sum_{t=2}^{T} &\E_{x \sim \Dcal} [\ell(h_t(x), r_t(x))] - \E_{x \sim \Dcal} [\ell(h(x), r_t(x))] \\
 &\leq \sum_{t=2}^T 2 \mathsf{rad}_{t-1}(\ell \circ \Hcal \times \Rcal) + \sum_{t=2}^T \sqrt{\frac{\log(2T/\delta)}{t-1}} + \mathcal{O}\left(L \sqrt{T \log T} \right) + \mathcal{O}\left( L\sqrt{\eta \eps T} \right) \\
 &\leq \mathcal{O} \left(\sum_{t=2}^{T}  \mathsf{rad}_{t-1}(\ell \circ \Hcal \times \Rcal) \right) + \mathcal{O}(L \sqrt{T \log T} + L\sqrt{\eta \eps T})
\end{align} 
Using \Cref{lem:sum-rad-prefixes}, we obtain
\[
\sum_{t=2}^T \mathrm{rad}_{t-1}(\ell \circ \Hcal \times \Rcal)
\le
\tilde{O}\!\left(T \cdot \mathrm{rad}_T(\ell \circ \Hcal \times \Rcal)\right).
\]
Including the $t=1$ term, minimizing over $h \in \Hcal$ and setting $\eta = \sqrt{T/L^2 \log T}$, $\eps = L \log^{3/2} T/ \sqrt{T}$:
\[ \sum_{t = 1}^T \EE [\ell (h_t(x), r_t(x))] \leq \min_{h \in \Hcal} \sum_{t = 1}^T \EE [\ell (h(x), r_t(x))] + \mathcal{O}(T \cdot \mathsf{rad}_T(\ell \circ \Hcal \times \Rcal)) + O \left(L\sqrt{T \log T} \right). \]
The runtime of the algorithm is dominated by constructing the dataset $\mathcal{S}_t$ passed to the entropy-regularized ERM oracle.
Since $|\mathcal{S}_t| = \sum_{s=2}^t (s-1) = O(t^2)$, constructing this dataset requires $O(t^2)$ time per round.
\end{proof}

\subsection{Proof of \Cref{thm:main-regret}}
\label{sec:main-proof}
\begin{proof}
We decompose the quantity to bound:
$ \sum_{t=1}^T \ell(h_t(x_t), r_t(x_t)) - \min_{h \in \Hcal} \sum_{t=1}^T \ell (h(x_t), r_t(x_t))] = A + B + C$
where $$A = \sum_{t=1}^T \left( \ell(h_t(x_t), r_t(x_t)) - \EE_\Dcal [\ell (h_t(x), r_t(x))] \right)$$ and $$B = \sum_{t=1}^T \EE_\Dcal [\ell (h(x), r_t(x))] - \min_{h \in \Hcal} \sum_{t=1}^T \EE_\Dcal [\ell (h(x), r_t(x))]$$ and $$C = \min_{h \in \Hcal} \sum_{t=1}^T \EE_\Dcal [\ell (h(x), r_t(x))] - \min_{h \in \Hcal} \sum_{t=1}^T \ell (h(x_t), r_t(x_t))]$$
We bound each term with high probability and allocate a $\delta/3$ failure probability to each.

Term $A$ is a sum of martingale differences, as $Z_t = \ell(h_t(x_t), r_t(x_t)) - \EE_\Dcal [\ell (h_t(x), r_t(x))]$ satisfies $\mathbb{E}[Z_t | \mathcal{F}_{t-1}] = 0$. Since $\ell$ is $L$-Lipschitz over the interval $[0,1]$, $|Z_t| \leq 2L$. By the Azuma-Hoeffding inequality, with probability at least $1 - \delta/3$:
$ A \leq \sqrt{2 \sum_{t=1}^T L^2 \log(1/(\delta/3))} = L \sqrt{2 T \log(3/\delta)} = O(L \sqrt{T \log(1/\delta)}) $

Term $B$ is the in-expectation regret guarantee. By \Cref{thm:eg-hybrid}, the algorithm guarantees: $B \leq T \cdot \mathsf{rad}_T (\ell \circ \Hcal \times \Rcal) + O(L\sqrt{T \log T})$ with probability at least $1 - \delta/3$.

Term $C$ is the generalization gap for the best hypothesis. By \Cref{prop:seq-conc}, for all $h \in \Hcal$, the difference between empirical and expected sums is bounded uniformly: $|\sum \ell(h, x_t, r_t) - \sum \mathbb{E}_D[\ell(h, x, r_t)]| \leq L \cdot T \cdot \mathsf{rad}_T(\Hcal) + O \left(\sqrt{T \log(T/\delta)} \right)$ with probability at least $1 - \delta/3$. Using this uniform bound, we get $C \leq L \cdot T \cdot \mathsf{rad}_T(\Hcal) + O(\sqrt{T \log(T/\delta)})$.

Summing the bounds for $A$, $B$, and $C$, using a union bound, we obtain that with probability at least $1-\delta$:
\begin{align*}
\sum_{t=1}^T &\ell(h_t(x_t), r_t(x_t)) - \min_{h \in \Hcal} \sum_{t=1}^T \ell (h(x_t), r_t(x_t))] \\
&\leq T \cdot \mathsf{rad}_T (\ell \circ \Hcal \times \Rcal) + L \cdot T \cdot \mathsf{rad}_T (\Hcal) + O \left(L\sqrt{T \log(T/\delta)} \right) 
\end{align*}
This proves the regret bound. The computational efficiency follows from \Cref{thm:eg-hybrid}'s use of an entropy-regularized ERM oracle, which is shown in \Cref{lem:frank-wolfe} to be implementable efficiently using a linear optimization oracle for $\mathcal{H}$.
\end{proof}
 \section{Frank-Wolfe Reduction to Linear Optimization Oracle}
\label{sec:hybrid-frank-wolfe}

In this section we show how to implement the entropy-regularized ERM oracle of \Cref{def:reg-erm} using only access to a linear optimization oracle over $\Hcal$. 
Our implementation follows a standard projection-free convex optimization approach based on the Frank--Wolfe method. 
The resulting procedure, given in \Cref{alg:hybrid-frank-wolfe}, computes an $\eps$-approximate solution to the regularized objective using a polynomial number of calls to the linear optimization oracle for $\Hcal$.

For the analysis of the Frank--Wolfe procedure we assume that the loss function $\ell$ is convex and $\beta$-smooth in its first argument. 
This assumption is not restrictive. 
If $\ell$ is convex and $L$-Lipschitz but not smooth, we can apply the standard OCO-to-OLO reduction and replace $\ell$ by its linearization at the current prediction. 
Concretely, if the learner predicts $a_t$ and observes label $b_t$, we use the surrogate loss
$\ell'_t(a) = \nabla_1 \ell(a_t,b_t)\, a,$
where $\nabla_1 \ell$ denotes the gradient with respect to the first argument. 
By convexity of $\ell$, we have
$
\ell(a_t,b_t)-\ell(a,b_t) \le \nabla_1\ell(a_t,b_t)\,(a_t-a),
$
so any regret guarantee for the linearized losses $\ell'_t$ implies the same regret guarantee for the original losses $\ell$.

\begin{lemma}[Frank-Wolfe for smooth loss functions]
\label{lem:frank-wolfe}
Given a finite set of features $S \subset \Xcal$, dataset $\{(x_i, y_i, w_i)\}_{i=1}^m$ where $x_i \in S$ for all $i$, a loss function $\ell$ that is convex and $\beta$-smooth in the first parameter, a class of functions $\mathcal{H} \subseteq [0,1]^\Xcal$, a linear optimization oracle for $\mathcal{H}$ over $S$, and parameters $\eta, \epsilon > 0$, \Cref{alg:hybrid-frank-wolfe} returns an $\epsilon$-approximate solution $h^*$ to the entropy-regularized $\ell$-ERM problem
$$ {\argmin_{h \in \mathcal{H}}}^\eps \Biggl\{\eta \sum_{i=1}^m w_i \ell (h(x_i), y_i) + \sum_{s \in S} h(s) \log (h(s) + 1) \Biggr\} $$
after $O\left( \frac{|S| (\eta W_{\max} \beta + 1)}{\epsilon} \right)$ iterations, where $W_{\max} = \max_{s \in S} \sum_{i : x_i = s} |w_i|$ is the maximum sum of absolute weights for any feature in $S$.
\end{lemma}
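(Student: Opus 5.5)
We reduce the problem to a finite-dimensional convex program and then apply the standard Frank--Wolfe convergence analysis. Let $S=\{s_1,\dots,s_n\}$ with $n=|S|$, and identify each $h\in\mathrm{conv}(\Hcal)$ with its restriction $v(h)=(h(s_1),\dots,h(s_n))\in[0,1]^n$. The feasible set is $K=\mathrm{conv}\{v(h):h\in\Hcal\}\subseteq[0,1]^n$, and the objective becomes
\[
G(v)=\eta\sum_{j=1}^n \sum_{i:x_i=s_j} w_i\,\ell(v^{(j)},y_i)+\sum_{j=1}^n v^{(j)}\log(v^{(j)}+1),
\]
which is separable across coordinates. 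I will assume the weights are nonnegative, as they are in \Cref{alg:eg_hybrid}, so that each summand is convex and $G$ is convex on $K$.

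\textbf{Step 1: the oracle implements the Frank--Wolfe step.} Each Frank--Wolfe iteration must compute $\argmin_{u\in K}\langle \nabla G(v_k),u\rangle$. Since the objective is linear, its minimum over the convex hull is attained at some $v(h)$ with $h\in\Hcal$. We therefore call the linear optimization oracle on the point set $S$ with weights $\nabla_j G(v_k)=\eta\sum_{i:x_i=s_j}w_i\,\partial_1\ell(v_k^{(j)},y_i)+\log(v_k^{(j)}+1)+v_k^{(j)}/(v_k^{(j)}+1)$. These weights are computable in $O(m+n)$ time. The update $v_{k+1}=(1-\gamma_k)v_k+\gamma_k v(h_k)$ with $\gamma_k=2/(k+2)$ is stored as an explicit convex combination of returned hypotheses, so the output is an element of $\mathrm{conv}(\Hcal)$.

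\textbf{Step 2: smoothness constant.} Because $G$ is separable, its Hessian is diagonal. The second derivative of the regularizer term is
\[
\frac{d^2}{da^2}\,a\log(a+1)=\frac{1}{a+1}+\frac{1}{(a+1)^2}\le 2 \quad\text{on }[0,1],
\]
and the loss contributes at most $\eta\beta\sum_{i:x_i=s_j}|w_i|\le \eta\beta W_{\max}$ in coordinate $j$. Hence $G$ is $M$-smooth in the Euclidean norm with $M=\eta W_{\max}\beta+2$.

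\textbf{Step 3: convergence.} The standard Frank--Wolfe guarantee gives
\[
G(v_k)-\min_K G\le \frac{2MD^2}{k+2},
\]
where $D$ is the Euclidean diameter of $K$. Since $K\subseteq[0,1]^n$, we have $D^2\le n=|S|$. Therefore $k=O\bigl(|S|(\eta W_{\max}\beta+1)/\epsilon\bigr)$ iterations suffice, matching the claimed bound. The stated objective is $\eta$ times the objective in \Cref{def:reg-erm}, so choosing $\epsilon$ appropriately translates between the two accuracy scales.

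\textbf{Main obstacle.} The delicate part is making sure the curvature bound is uniform over all of $K$, so that the smoothness constant does not blow up. This is exactly why the paper uses the truncated entropy $a\log(a+1)$ rather than $a\log a$: the latter has second derivative $1/a$, which is unbounded near $0$. With the truncated version the bound of Step 2 holds everywhere, and a short check suffices.
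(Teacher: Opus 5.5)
Your proposal is correct and follows essentially the same route as the paper: bound the diagonal Hessian of the separable objective by $\eta W_{\max}\beta+2$, bound the Euclidean diameter of $\mathcal{K}_S \subseteq [0,1]^{|S|}$ by $\sqrt{|S|}$, and plug both into the standard conditional-gradient rate. You also explicitly assume nonnegative weights, which convexity of $G$ needs and which the weights $1/(s-1)$ in \Cref{alg:eg_hybrid} satisfy; the paper's proof leaves this implicit.
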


\paragraph{Overview of \Cref{alg:hybrid-frank-wolfe}:}

The objective requires (approximately) solving a constrained smooth minimization problem $\min \{ G(z) \, \mid \, z \in \mathcal{K}_S \}$ where $z \in [0,1]^{|S|}$ is a vector indexed by elements of $S$, and the function $G : [0,1]^{|S|} \to \mathbb{R}$ is defined as
$$ G(z) = \eta \sum_{i=1}^m w_i \ell(z_{x_i}, y_i) \;+\; \sum_{s \in S} z_s \log (z_s + 1), $$
where $z_s$ denotes the component of $z$ corresponding to $s \in S$. The set $\mathcal{K}_S$ denotes the convex hull of the set of vectors $z(h) = (h(s))_{s \in S}$ as $h$ ranges over $\mathcal{H}$.
In this section, we assume we are given a \emph{linear optimization oracle} for $\mathcal{H}$ over the set $S$, that is, an algorithm for selecting the $h \in \mathcal{H}$ that minimizes $\sum_{s \in S} c_s h(s)$ for given coefficients $\{c_s\}_{s \in S}$. \Cref{alg:hybrid-frank-wolfe} below uses such an oracle to implement the Frank-Wolfe method, also known as conditional gradient descent, for approximately minimizing the convex function $G(z)$ over $\mathcal{K}_S$. At each iteration, the algorithm computes the gradient of the objective function $G(z)$ with respect to $z$, maps these components to weights $c_s$ for $s \in S$, and invokes the linear optimization oracle to find an extreme point in the original function class $\mathcal{H}$ that minimizes the corresponding linear function over $S$. After $ O \big( \frac{|S| (\eta W_{\max} \beta + 1)}{\epsilon} \big) $ iterations, it returns an $\eps$-approximate solution to the original problem.

\begin{algorithm}
\caption{Frank-Wolfe for Entropy Regularized $\ell$-ERM}
\label{alg:hybrid-frank-wolfe}
\begin{algorithmic}[1]
\Procedure{FrankWolfe}{$\{(x_i, y_i, w_i)\}_{i=1}^m, \mathcal{H}, \eta, \eps, S$}
\State Initialize $h_1$ to an arbitrary function in $\mathcal{H}$
\For{$t = 1, 2, \ldots, T$}
\State Let $z_t$ be the vector $(h_t(s))_{s \in S}$.
\State Compute gradient components $c_s = \frac{\partial G(z_t)}{\partial z_s}$ for each $s \in S$:
    $$ c_s = \eta \sum_{i : x_i = s} w_i \frac{\partial \ell(h_t(s), y_i)}{\partial h} + \log(h_t(s)+1) + \frac{h_t(s)}{h_t(s)+1} $$
\State Call a linear optimization oracle for $\mathcal{H}$ over $S$ with weights $\{c_s\}_{s \in S}$ to obtain $h'_t \in \mathcal{H}$ minimizing $\sum_{s \in S} c_s h(s)$.
\State Set $\gamma_t = \frac{2}{t+1}$
\State Update $h_{t+1} = (1-\gamma_t) h_t + \gamma_t h'_t$.
\EndFor
\State \textbf{return} $h_{T+1}$
\EndProcedure
\end{algorithmic}
\end{algorithm}

\begin{lemma}[Conditional Gradient Descent; \citep{hazan2023oco}]\label{lem:cgd}
Let $K \subset \mathbb{R}^n$ with bounded $\ell_2$ diameter $R$. Let $f$ be a $\beta$-smooth function on $K$, then the sequence of points $x_t \in K$ computed by the conditional gradient descent algorithm satisfies
$$ f(x_t) - f(x^*) \leq \frac{2 \beta R^2}{t + 1} $$
for all $t \geq 2$ where $x^* \in \argmin_{x \in K} f(x)$.
\end{lemma}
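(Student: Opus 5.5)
The argument is the standard one for conditional gradient descent with the step size $\gamma_t = 2/(t+1)$. We prove by induction on $t$ that the primal gap $e_t = f(x_t) - f(x^*)$ satisfies $e_t \le 2\beta R^2/(t+1)$ for every $t\ge 2$.

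\textbf{Step 1 (one-step progress).} The update is $x_{t+1} = x_t + \gamma_t(v_t - x_t)$, where $v_t \in \argmin_{v\in K} \nabla f(x_t)^\top v$. Since $x_t, v_t \in K$, we have $\|v_t - x_t\|_2 \le R$. By $\beta$-smoothness,
\[
f(x_{t+1}) \le f(x_t) + \gamma_t \nabla f(x_t)^\top (v_t - x_t) + \frac{\beta}{2}\gamma_t^2 R^2 .
\]
Because $v_t$ minimizes the linearization over $K$ and $x^*\in K$, we get $\nabla f(x_t)^\top(v_t - x_t) \le \nabla f(x_t)^\top (x^* - x_t)$. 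Convexity of $f$ then bounds the right side by $f(x^*) - f(x_t) = -e_t$. Together these give the recursion
\[
e_{t+1} \le (1-\gamma_t) e_t + \frac{\beta}{2}\gamma_t^2 R^2 .
\]
Convexity is not stated explicitly in the lemma. It is implicit in the cited source, and in our application $G$ is convex, so we assume it here.

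\textbf{Step 2 (base case).} At $t=1$ we have $\gamma_1 = 1$, so the $(1-\gamma_1)e_1$ term vanishes. Hence $e_2 \le \beta R^2/2 \le 2\beta R^2/3$, whatever $x_1$ was. This is exactly why the statement only claims the bound for $t\ge 2$.

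\textbf{Step 3 (inductive step).} Assume $e_t \le 2\beta R^2/(t+1)$ for some $t \ge 2$. Substituting $\gamma_t = 2/(t+1)$ into the recursion gives
\[
e_{t+1} \le \frac{t-1}{t+1}\cdot \frac{2\beta R^2}{t+1} + \frac{2\beta R^2}{(t+1)^2} = \frac{2\beta R^2\, t}{(t+1)^2}.
\]
Since $t(t+2) \le (t+1)^2$, the right side is at most $2\beta R^2/(t+2)$, which completes the induction.

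The only place that needs care is the base case. The induction cannot start at $t=1$, because $e_1$ is not controlled, so we have to use $\gamma_1 = 1$ to wipe out the initial point's gap. The remaining steps are routine algebra.
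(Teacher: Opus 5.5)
Your proof is correct and is the standard argument from the cited textbook; the paper gives no proof of its own. In that argument, $\beta$-smoothness, optimality of the linear-minimization step, and convexity give $e_{t+1} \le (1-\gamma_t)e_t + \tfrac{\beta}{2}\gamma_t^2 R^2$, and the induction starts by using $\gamma_1 = 1$ to eliminate the uncontrolled initial gap. You are right to assume convexity of $f$ explicitly, and convexity of $K$ is needed too so that the iterates and the segments $[x_t, v_t]$ stay in $K$; both hold for $G$ on $\mathcal{K}_S$ in the paper's application.
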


The proof of \Cref{lem:frank-wolfe} uses the formulation of the problem as minimizing a smooth convex function $G(z)$ over a bounded set $\Kcal_S \subset \mathbb{R}^{|S|}$. We then compute and bound the smoothness constant of $G(z)$ and the diameter of $\Kcal_S$. Finally, we apply the standard convergence guarantee for the Frank-Wolfe algorithm in \Cref{lem:cgd} to obtain the stated convergence guarantee.

 \section{Application to Games}
\label{sec:games}
\saddlepoint

To prove \Cref{cor:saddle-point}, we feed the $m$ samples from $\Dcal$ sequentially into our hybrid learner. For each timestep $t$, we choose $r_t$ to be the best response function to the $h_t$ the algorithm outputs. We use the same $m$ samples to compute the best response function $r_t = \argmax_{r \in \Rcal} \sum_{i=1}^m u(h_t(x_i), r(x_i))$ and this will be close to the true best response due to uniform convergence. Finally, using standard minimax analysis, we argue in \Cref{sec:games-proof} that the process converges to an approximate equilibrum.
\newpage

\clearpage
\bibliography{ref.bib}
\bibliographystyle{apalike}
\clearpage
\appendix
\section{Deferred Proof from \Cref{sec:oracle-efficient}}

\subsection{Deferred Proofs from \Cref{sec:hybrid-learner}}
\label{sec:hybrid-learner-proof}

\subsubsection{Reference Lemmas}

\begin{lemma}[Lemma 7.8 of \cite{orabona2023modernintroductiononlinelearning}]\label{lem:ogf-per-timestep} Assume $V$ is convex. If $F_t$ is closed, subdifferentiable, and strongly convex in $V$, then $v_t$ exists and is unique. In addition, assume $\partial \tilde\ell_t(v_t)$ to be non-empty and $F_{t} + \tilde\ell_t$ to be closed, subdifferentiable, and $\lambda_t$-strongly convex w.r.t. $\|\cdot\|$ in $V$. Then, we have
$$ F_t(v_t) - F_{t+1}(v_{t+1}) + \ell_t(v_t) \leq \frac{\|g_t\|^2_\star}{2\lambda_t} + \psi_t(v_{t+1}) - \psi_{t+1}(v_{t+1}), \forall g_t \in \partial \tilde \ell_t(v_t). $$
\end{lemma}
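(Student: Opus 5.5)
The plan is the standard one-step FTRL argument, which goes through a telescoping identity followed by a strong-convexity bound. Throughout I read $F_t$ as an extended-valued function that is $+\infty$ outside $V$, so that constrained minimization over $V$ becomes unconstrained minimization. I use $F_{t+1} = F_t + \tilde\ell_t - \psi_t + \psi_{t+1}$, which is immediate from $F_t(v)=\sum_{s<t}\tilde\ell_s(v)+\psi_t(v)$. I also write $\ell_t$ for $\tilde\ell_t$.

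\textbf{Existence and uniqueness.} A closed, proper, strongly convex function on a convex set is coercive and attains its infimum. Strict convexity then makes the minimizer unique. This gives $v_t$, and likewise $v_{t+1}$ as the minimizer of $F_{t+1}$.

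\textbf{Decomposition.} Add and subtract $F_t(v_{t+1})+\ell_t(v_{t+1})$ to get
\begin{align*}
F_t(v_t) - F_{t+1}(v_{t+1}) + \ell_t(v_t) &= \bigl[(F_t+\ell_t)(v_t) - (F_t+\ell_t)(v_{t+1})\bigr] \\
&\quad + \bigl[(F_t+\ell_t)(v_{t+1}) - F_{t+1}(v_{t+1})\bigr].
\end{align*}
By the identity above, the second bracket equals exactly $\psi_t(v_{t+1}) - \psi_{t+1}(v_{t+1})$. So it remains to show
\[
(F_t+\ell_t)(v_t) - (F_t+\ell_t)(v_{t+1}) \le \frac{\|g_t\|_\star^2}{2\lambda_t}.
\]

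\textbf{Main step: bounding the first bracket.} Let $G = F_t+\ell_t$. It suffices to show $G(v_t) - G(u) \le \|g_t\|_\star^2/(2\lambda_t)$ for every $u\in V$, and then take $u=v_{t+1}$. The argument has four parts.
\begin{enumerate}
\item Since $v_t$ minimizes $F_t$ (with the indicator of $V$ built in), first-order optimality gives $0 \in \partial F_t(v_t)$.
\item For any $g_t \in \partial \ell_t(v_t)$, the vector $0 + g_t$ is a subgradient of $G$ at $v_t$. This only needs the trivial inclusion $\partial F_t + \partial \ell_t \subseteq \partial(F_t+\ell_t)$, not a full sum rule.
\item $\lambda_t$-strong convexity of $G$ applied with this subgradient gives
\[
G(u) \ge G(v_t) + \langle g_t, u - v_t\rangle + \tfrac{\lambda_t}{2}\|u-v_t\|^2 .
\]
\item Lower bound $\langle g_t, u-v_t\rangle \ge -\|g_t\|_\star \|u-v_t\|$, and minimize the quadratic $-a x + \tfrac{\lambda_t}{2}x^2$ over $x\ge 0$. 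Its minimum is $-a^2/(2\lambda_t)$, which yields $G(u) \ge G(v_t) - \|g_t\|_\star^2/(2\lambda_t)$.
\end{enumerate}
Combining this with the decomposition proves the claim.

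\textbf{Expected obstacle.} The only delicate point is justifying the optimality condition $0\in\partial F_t(v_t)$ for the constrained problem, i.e., handling the indicator of $V$. With $F_t$ closed and subdifferentiable on $V$, Fermat's rule for the extended-valued function gives this directly. Equivalently, one can use a subgradient $d\in\partial F_t(v_t)$ with $\langle d, u-v_t\rangle \ge 0$ for all $u\in V$. The term $\langle d, u - v_t\rangle$ is then simply dropped from the strong-convexity inequality, and the rest of the argument is unchanged.
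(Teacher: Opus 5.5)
Your proposal is correct and follows the standard argument behind this cited lemma, which the paper itself reproduces in its proof of \Cref{lem:f-per-timestep}: rearrange to split off $\psi_t(v_{t+1})-\psi_{t+1}(v_{t+1})$, use Fermat's rule $0\in\partial F_t(v_t)$ so that any $g_t\in\partial\tilde\ell_t(v_t)$ is a subgradient of $F_t+\tilde\ell_t$ at $v_t$, and apply the strong-convexity bound of \Cref{cor:argmin-sconvex}. The only differences are cosmetic: you prove that corollary inline, and you compare against an arbitrary $u\in V$ rather than against the minimizer of $F_t+\tilde\ell_t$, which is slightly cleaner since the inequality then does not need $v_{t+1}$ to be a minimizer.
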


\begin{theorem}[Theorem 2.18 of \cite{orabona2023modernintroductiononlinelearning}]\label{thm:linear-sconvex}
Let $f_1, \ldots, f_m$ be proper functions on $\mathbb{R}^d$, and $f = f_1 + \cdots + f_m$. Then, $\partial f(v) \supseteq \partial f_1(v) + \cdots + \partial f_m(v)$, $\forall v$. \emph{Moreover, if} $f_1, \ldots, f_m$ \emph{are also convex, closed, and} $\operatorname{dom} f_m \cap \bigcap_{i=1}^{m-1} \operatorname{int} \operatorname{dom} f_i \neq \emptyset$, \emph{then actually} $\partial f(v) = \partial f_1(v) + \cdots + \partial f_m(v)$, $\forall v$.
\end{theorem}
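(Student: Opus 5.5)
We prove the two assertions of \Cref{thm:linear-sconvex} separately. The inclusion $\partial f(v) \supseteq \sum_i \partial f_i(v)$ is immediate. If $g_i \in \partial f_i(v)$, then $f_i(x) \ge f_i(v) + \langle g_i, x - v\rangle$ for all $x$. Summing over $i$ gives $f(x) \ge f(v) + \langle \sum_i g_i, x-v\rangle$, so $\sum_i g_i \in \partial f(v)$. If $v \notin \operatorname{dom} f$, then some $f_i(v) = +\infty$ and both sides are empty by convention, so we may assume $v \in \operatorname{dom} f$.

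For the reverse inclusion under convexity, closedness and the qualification condition, we first treat $m = 2$ and then induct. In the induction step we set $F = f_1 + \cdots + f_{m-1}$. Then $F$ is convex and closed, and $\operatorname{int}\operatorname{dom} F \supseteq \bigcap_{i<m} \operatorname{int}\operatorname{dom} f_i$. So the hypothesis yields $\operatorname{dom} f_m \cap \operatorname{int}\operatorname{dom} F \neq \emptyset$, and the two-function case gives $\partial f(v) = \partial F(v) + \partial f_m(v)$. The induction hypothesis applies to $F$ because $\operatorname{dom} f_{m-1} \cap \bigcap_{i<m-1}\operatorname{int}\operatorname{dom} f_i$ contains the nonempty set $\bigcap_{i<m}\operatorname{int}\operatorname{dom} f_i$. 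Applying it splits $\partial F(v)$ further.

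\emph{Case $m=2$.} Here $\operatorname{dom} f_2 \cap \operatorname{int}\operatorname{dom} f_1 \ne \emptyset$. Fix $g \in \partial f(v)$ and define
\[
h_1(x) = f_1(x) - f_1(v) - \langle g, x-v\rangle, \qquad h_2(x) = f_2(x) - f_2(v).
\]
Then $h_1 + h_2 \ge 0$ everywhere, with equality at $v$. Consider the convex sets
\[
A = \{(x,a) : x \in \operatorname{int}\operatorname{dom} f_1,\ a > h_1(x)\}, \qquad B = \{(x,a) : a \le -h_2(x)\}.
\]
The set $A$ is open, since a convex function is continuous on the interior of its domain. The sets are disjoint because $h_1 + h_2 \ge 0$. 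Separation gives $(s,\mu) \neq 0$ and $c$ with
\[
\langle s, x\rangle + \mu a \ge c \text{ on } A, \qquad \langle s, x\rangle + \mu a \le c \text{ on } B.
\]
Letting $a \to +\infty$ in $A$ shows $\mu \ge 0$.

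The key step, and the main obstacle, is excluding a vertical hyperplane ($\mu = 0$). If $\mu = 0$, then $s \neq 0$, and the hyperplane $\{\langle s,x\rangle = c\}$ separates $\operatorname{int}\operatorname{dom} f_1$ from $\operatorname{dom} f_2$. But these sets share a point $x_0$ lying in the open set $\operatorname{int}\operatorname{dom} f_1$. Moving slightly from $x_0$ inside $\operatorname{int}\operatorname{dom} f_1$ in the direction $-s$ gives $\langle s,x\rangle < \langle s,x_0\rangle \le c$, a contradiction. So $\mu > 0$, and we normalize $\mu = 1$.

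The point $(v, 0)$ lies in $B$ and in the closure of $A$. Hence $c = \langle s, v\rangle$, and we read off
\[
h_1(x) \ge -\langle s, x - v\rangle \ \text{on } \operatorname{int}\operatorname{dom} f_1, \qquad h_2(x) \ge \langle s, x-v\rangle \ \text{everywhere}.
\]
The first inequality extends to all of $\operatorname{dom} f_1$ by closedness: $f_1$ is lower semicontinuous and convex, so its values on the boundary of $\operatorname{dom} f_1$ are limits along segments from interior points. Outside $\operatorname{dom} f_1$ it holds trivially. Therefore $s \in \partial f_2(v)$ and $g - s \in \partial f_1(v)$, which gives $g \in \partial f_1(v) + \partial f_2(v)$.
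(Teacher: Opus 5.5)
Your proof is correct. The paper gives no proof of this statement; it is listed among the reference lemmas and quoted from Orabona's text without argument. The natural comparison is therefore with the standard textbook proof (e.g.\ Rockafellar, \emph{Convex Analysis}, Theorem 23.8), which goes through conjugate duality. Under the qualification condition, the conjugate of $f_1+f_2$ is the infimal convolution of $f_1^*$ and $f_2^*$, with the infimum attained at some $s$. Splitting the Fenchel--Young equality $f(v)+f^*(g)=\langle g,v\rangle$ into its two individually nonnegative pieces then gives $g-s\in\partial f_1(v)$ and $s\in\partial f_2(v)$. You instead use the direct ``sandwich'' argument: you separate the open strict epigraph of $h_1$ from the hypograph of $-h_2$, and use the qualification point to rule out a vertical hyperplane. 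This is more elementary and builds the splitting $g=(g-s)+s$ by hand. The duality route needs more machinery, but it handles relative-interior qualifications and infinite-dimensional spaces uniformly. Your induction step correctly propagates the qualification condition.

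Two small points to tighten. First, the claim $(v,0)\in\overline{A}$ needs a line when $v$ lies on the boundary of $\operatorname{dom} f_1$. Take $x_0\in\operatorname{int}\operatorname{dom} f_1$ and $v_\lambda=(1-\lambda)v+\lambda x_0$. Convexity and $h_1(v)=0$ give $h_1(v_\lambda)\le\lambda h_1(x_0)$, so the points $(v_\lambda,\lambda(|h_1(x_0)|+1))$ lie in $A$ and converge to $(v,0)$. Alternatively, do the boundary extension first and then evaluate at $x=v$.

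Second, your boundary-extension step uses only convexity, not lower semicontinuity. Set $\phi(x)=h_1(x)+\langle s,x-v\rangle$. Then $0\le\phi(y_\lambda)\le(1-\lambda)\phi(y)+\lambda\phi(x_0)$, and letting $\lambda\downarrow 0$ gives $\phi(y)\ge 0$. Combined with the automatic continuity of convex functions on open subsets of $\mathbb{R}^d$, your argument shows that the closedness hypothesis is superfluous in finite dimensions, as in Rockafellar's version. It matters only in infinite-dimensional versions of the result, where a convex function need not be continuous on the interior of its domain unless it is lower semicontinuous.
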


\begin{theorem}[Theorem 3.3 of \cite{foundationsofml}]\label{thm:Rademacher}
Fix distribution $D \vert_\Xcal$ and parameter $\delta \in (0, 1)$. If $\mathcal{F} \subseteq \{ f : \Xcal \to [-1, 1] \}$ and $S = \{ x_1, \ldots, x_m \}$ is drawn i.i.d. from $D \vert_\Xcal$, then with probability $\geq 1 - \delta$ over the draw of $S$, for every function $f \in \mathcal{F}$,
\[
\mathbb{E}_D[f(x)] \leq \mathbb{E}_S [f(x)] + 2\mathsf{rad}_m(\mathcal{F}) + \sqrt{\frac{\ln(1/\delta)}{m}}. \tag{1}
\]
In addition, with probability $\geq 1 - \delta$, for every function $f \in \mathcal{F}$,
\[
\mathbb{E}_D[f(x)] \leq \mathbb{E}_S[f(x)] + 2\hat{\mathsf{rad}}_m(\mathcal{F}) + 3\sqrt{\frac{\ln(2/\delta)}{m}}. \tag{2}
\]
\end{theorem}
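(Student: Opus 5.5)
The plan is the standard symmetrization-plus-McDiarmid argument applied to $\Phi(S)=\sup_{f\in\Fcal}\bigl(\E_D[f]-\E_S[f]\bigr)$. I first flag a point that affects exactly which statement can be proved. For a class whose functions are $[-1,1]$-valued, replacing one sample changes $\Phi$ by at most $2/m$. McDiarmid then gives deviation $\sqrt{2\ln(1/\delta)/m}$, not $\sqrt{\ln(1/\delta)/m}$.

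This factor cannot be removed, because the inequality (1) as literally worded fails for $[-1,1]$. Take $\Fcal=\{f\}$ with $f(x)=\pm1$ equiprobably. Then $\mathsf{rad}_m(\Fcal)=0$, since a single function has $\E_\sigma[\frac1m\sum_i\sigma_i f(x_i)]=0$. So (1) would assert $\Pr\bigl[-\E_S[f]>\sqrt{\ln(1/\delta)/m}\bigr]\le\delta$. But $\E_S[f]$ is an average of $m$ Rademacher signs, and by the CLT/Cramér lower tail this probability is $\approx\delta^{1/2}\gg\delta$ for small $\delta$.

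I will therefore prove the bound exactly as displayed whenever every $f\in\Fcal$ takes values in an interval of length $1$, such as $[0,1]$; this is Mohri's Theorem 3.3. I will prove it with $\sqrt{2\ln(1/\delta)/m}$ in place of $\sqrt{\ln(1/\delta)/m}$ in the general $[-1,1]$ case. Every use in the paper is covered by the first case: \Cref{lem:alg-convergence} has $\Fcal\subset[0,1]^\Xcal$, and \Cref{thm:eg-hybrid} applies it to $\ell\circ\Hcal\times\Rcal$ with losses normalized to $[0,1]$. Elsewhere the paper only needs the bound up to $O(\cdot)$, where the factor $\sqrt2$ is harmless.

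\textbf{Step 1 (concentration).} If each $f$ has range of length $c\in\{1,2\}$, then swapping $x_i$ for $x_i'$ changes $\E_S[f]$ by at most $c/m$ uniformly in $f$. Hence $\Phi$ also changes by at most $c/m$. McDiarmid gives, with probability at least $1-\delta$,
\[
\Phi(S)\le\E[\Phi]+c\sqrt{\tfrac{\ln(1/\delta)}{2m}}.
\]
This deviation term is $\sqrt{\ln(1/\delta)/(2m)}$ when $c=1$ and $\sqrt{2\ln(1/\delta)/m}$ when $c=2$.

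\textbf{Step 2 (symmetrization).} Introduce a ghost sample $S'$, so that $\E_D[f]=\E_{S'}[\E_{S'}f]$, and pull the supremum inside by Jensen:
\[
\E[\Phi]\le\E_{S,S'}\sup_f\tfrac1m\sum_i\bigl(f(x_i')-f(x_i)\bigr).
\]
Each summand is symmetric under swapping $x_i$ and $x_i'$. So inserting independent Rademacher signs $\sigma_i$ does not change the distribution. Splitting the supremum then gives $\E[\Phi]\le2\,\mathsf{rad}_m(\Fcal)$.

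\textbf{Step 3 (empirical version (2)).} A second application of McDiarmid, now to $\widehat{\mathsf{rad}}_S(\Fcal)$, which also has differences at most $c/m$, converts $\mathsf{rad}_m$ to $\widehat{\mathsf{rad}}_m$. A union bound with $\delta/2$ on each event finishes the argument. For $c=1$ this yields exactly $2\widehat{\mathsf{rad}}_m+3\sqrt{\ln(2/\delta)/(2m)}$, which is within the displayed $3\sqrt{\ln(2/\delta)/m}$. For $c=2$ the same computation gives $2\widehat{\mathsf{rad}}_m+3\sqrt{2\ln(2/\delta)/m}$.

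The main obstacle is therefore not technical but the constant. Steps 1–3 are routine, but the exact constant in (1) holds only for unit-width ranges, and the one-function example shows no proof can give it for $[-1,1]$.
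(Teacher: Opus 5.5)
Your argument is correct, and it is the same symmetrization-plus-McDiarmid proof (ghost sample, then a second McDiarmid step for $\widehat{\mathsf{rad}}$ with a $\delta/2$ union bound) that proves Theorem 3.3 of \cite{foundationsofml}; the paper cites that theorem without giving a proof. You are also right that the paper misquotes it: the original is stated for $[0,1]$-valued classes with deviation terms $\sqrt{\ln(1/\delta)/(2m)}$ and $3\sqrt{\ln(2/\delta)/(2m)}$, your one-function example genuinely refutes (1) on $[-1,1]$ (as $m\to\infty$ the failure probability tends to $\Phi(-\sqrt{\ln(1/\delta)})$, with $\Phi$ the standard normal CDF, and this exceeds $\delta$ for every $\delta\le e^{-4}$), and your unit-width version is exactly what is needed, since \Cref{lem:alg-convergence} applies the theorem only to $\Fcal\subseteq[0,1]^\Xcal$ and to $-\Fcal\subseteq[-1,0]^\Xcal$, both of range width $1$.
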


\subsubsection{Proof of \Cref{lem:alg-convergence}}
\begin{proof}
For any fixed $t \in [T]$, consider the set of the first $t$ samples $S_t = \{x_1, \ldots, x_t\}$, drawn i.i.d. from $D$. The class $\Fcal \subset \{\Xcal \rightarrow [0,1]\} \subseteq \{\Xcal \rightarrow [-1,1]\}$. Also, consider the class $-\Fcal = \{-f \mid f \in \Fcal\}$, which is also a subset of $\{\Xcal \rightarrow [-1,1]\}$, and $\mathsf{rad}_t(-\Fcal) = \mathsf{rad}_t(\Fcal)$.

For a fixed $t$, applying \Cref{thm:Rademacher} to $\Fcal$ with confidence $\delta_t/2$, we get that with probability at least $1 - \delta_t/2$, for all $f \in \Fcal$:
\[
\mathbb{E}_D[f(x)] \leq \mathbb{E}_{S_t} [f(x)] + 2\mathsf{rad}_t(\mathcal{F}) + \sqrt{\frac{\ln(2/\delta_t)}{t}}
\]
This provides an upper bound on $\mathbb{E}_{S_t}[f(x)] - \mathbb{E}_D[f(x)]$.

Applying \Cref{thm:Rademacher} to $-\Fcal$ with confidence $\delta_t/2$, we get that with probability at least $1 - \delta_t/2$, for all $g \in -\Fcal$:
\[
\mathbb{E}_D[g(x)] \leq \mathbb{E}_{S_t} [g(x)] + 2\mathsf{rad}_t(-\Fcal) + \sqrt{\frac{\ln(2/\delta_t)}{t}}
\]
Substituting $g = -f$ for $f \in \Fcal$ and using $\mathsf{rad}_t(-\Fcal) = \mathsf{rad}_t(\Fcal)$:
\[
-\mathbb{E}_D[f(x)] \leq -\mathbb{E}_{S_t} [f(x)] + 2\mathsf{rad}_t(\mathcal{F}) + \sqrt{\frac{\ln(2/\delta_t)}{t}}
\]
Multiplying by -1, we get a lower bound on $\mathbb{E}_{S_t}[f(x)] - \mathbb{E}_D[f(x)]$:
\[
\mathbb{E}_D[f(x)] \geq \mathbb{E}_{S_t} [f(x)] - \left( 2\mathsf{rad}_t(\mathcal{F}) + \sqrt{\frac{\ln(2/\delta_t)}{t}} \right)
\]

Combining the upper and lower bounds, with probability at least $1 - \delta_t/2 - \delta_t/2 = 1 - \delta_t$, for all $f \in \Fcal$:
\[
\left| \frac{1}{t} \sum_{s=1}^t f(x_s) - \mathbb{E}_D[f(x)] \right| \leq 2\mathsf{rad}_t(\mathcal{F}) + \sqrt{\frac{\ln(2/\delta_t)}{t}}
\]

Finally, we apply a union bound over $t \in [T]$. We want the bound to hold for all $t \in [T]$ with overall probability at least $1-\delta$. Let $F_t$ be the event that the inequality above does not hold for a specific $t$. We have $P(F_t) \leq \delta_t$. By the union bound, $P(\cup_{t=1}^T F_t) \leq \sum_{t=1}^T P(F_t) \leq \sum_{t=1}^T \delta_t$. Setting $\delta_t = \delta/T$, we get $\sum_{t=1}^T \delta/T = \delta$.

Thus, with probability at least $1 - \delta$, for all $t \in [T]$ and for all $f \in \Fcal$:
\[
\left| \frac{1}{t} \sum_{s=1}^t f(x_s) - \mathbb{E}_D[f(x)] \right| \leq 2\mathsf{rad}_t(\mathcal{F}) + \sqrt{\frac{\ln(2/(\delta/T))}{t}} = 2\mathsf{rad}_t(\mathcal{F}) + \sqrt{\frac{\ln(2T/\delta)}{t}}
\]
\end{proof}

\subsubsection{Proof of \Cref{lem:approx-hybrid-ftrl-guarantee}}
Before presenting the proof, we first state the following key lemma that analyzes the regret of the hybrid learner if the algorithm used an exact ERM instead of an approximate one.

\begin{lemma}[Exact FTRL for Hybrid Learning]\label{lem:hybrid-ftrl-guarantee}
\sloppy Consider the (exact minimizer) Follow-the-Regularized-Leader (FTRL) approach with regularizer $\psi_t(v) = \frac{1}{\eta} \sum_{s =1}^{t-1} v^{(s)} \log (v^{(s)} + 1)$ and loss functions $\tilde\ell_t(v) = \frac{1}{t-1} \sum_{s=1}^{t-1} \ell(v^{(s)},r_t(x_s))$ (for $t>1$), where at each time step $t$, the decision $v_t \in V \subseteq [0,1]^d$ minimizes $F_t(v) = \psi_t(v) + \sum_{i=1}^{t-1} \tilde\ell_i(v)$. Then, for any $u \in V$, the empirical regret is bounded by:
$$
\sum_{t=1}^{T} (\tilde\ell_t(v_t) - \tilde\ell_t(u)) \leq \frac{T \log 2}{\eta} + \frac{2\eta L^2}{3} (1 + \log(T-1)).
$$
If $\eta = \sqrt{T/L^2 \log T}$, the empirical regret is bounded by $\mathcal{O}\left(L \sqrt{T \log T} \right)$.
\end{lemma}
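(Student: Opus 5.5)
We plan to prove \Cref{lem:hybrid-ftrl-guarantee} by summing the per-step inequality of \Cref{lem:ogf-per-timestep} over $t=1,\ldots,T$, working in $V\subseteq[0,1]^T$ but always measuring strong convexity with respect to the $\ell_1$ norm on the first $t-1$ coordinates. This norm suffices because $\tilde\ell_t$ and $\psi_t$ depend only on those coordinates.

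\textbf{Telescoping.} Summing the inequality of \Cref{lem:ogf-per-timestep} and telescoping the $F_t(v_t)-F_{t+1}(v_{t+1})$ terms gives
\[
\sum_{t=1}^T \tilde\ell_t(v_t) \le F_{T+1}(v_{T+1}) - F_1(v_1) + \sum_{t=1}^T\Big(\tfrac{\|g_t\|_\star^2}{2\lambda_t} + \psi_t(v_{t+1})-\psi_{t+1}(v_{t+1})\Big).
\]
We then use $F_{T+1}(v_{T+1})\le F_{T+1}(u)=\psi_{T+1}(u)+\sum_t\tilde\ell_t(u)$ and $F_1\equiv 0$ (since $\psi_1=0$ and there are no losses yet). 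The regularizer bookkeeping works out as follows. We have $\psi_{t+1}-\psi_t = \frac1\eta v^{(t)}\log(v^{(t)}+1)\ge 0$, so every term $\psi_t(v_{t+1})-\psi_{t+1}(v_{t+1})$ is nonpositive and can be dropped. Also $\psi_{T+1}(u)\le \frac{T\log 2}{\eta}$, because $a\log(a+1)\le \log 2$ on $[0,1]$. This produces the first term of the bound.

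\textbf{Strong convexity and dual norms.} On $[0,1]$ the function $\phi(a)=a\log(a+1)$ has $\phi''(a)=\frac{a+2}{(a+1)^2}\ge \frac34$. Hence $\psi_{t+1}$ (which also appears inside $F_t+\tilde\ell_t$ when the lemma is applied with index shift) is $\frac{3}{4\eta}$-strongly convex in each of its coordinates $1,\ldots,t$, i.e.\ with respect to the Euclidean norm on those coordinates. Since $\tilde\ell_t$ is convex, adding it preserves this. Let $g_t\in\partial\tilde\ell_t(v_t)$. It has entries $\frac{1}{t-1}\partial_1\ell(v_t^{(s)},r_t(x_s))$ for $s\le t-1$ and zero elsewhere, so $\|g_t\|_2^2\le (t-1)\cdot\frac{L^2}{(t-1)^2}=\frac{L^2}{t-1}$.

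Pairing the restricted norm with its dual gives
\[
\frac{\|g_t\|_\star^2}{2\lambda_t}\le \frac{2\eta L^2}{3(t-1)}.
\]
Summing over $t\ge 2$ yields $\frac{2\eta L^2}{3}(1+\log(T-1))$, the second term. Finally, substituting $\eta=\sqrt{T/(L^2\log T)}$ balances the two terms to $O(L\sqrt{T\log T})$.

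\textbf{Main obstacle.} The delicate step is justifying \Cref{lem:ogf-per-timestep} when $F_t+\tilde\ell_t$ is \emph{not} strongly convex on all of $V$: it is degenerate in coordinates $\ge t$, and the per-step bound needs strong convexity at least in the directions where $g_t$ lives. The plan is to redo the lemma's short proof directly. The proof uses strong convexity only through the inequality $F_t(v_t)+\tilde\ell_t(v_t)-\min(F_t+\tilde\ell_t)\le \langle g_t, v_t-v'\rangle-\frac{\lambda_t}{2}\|v_t-v'\|^2$. Because $g_t$ is supported on coordinates $<t$, both the inner product and the needed curvature only involve the projection onto those coordinates. We would therefore apply the argument to the projected problem, where $V$ is projected onto the first $t$ coordinates, and use \Cref{thm:linear-sconvex} to split subgradients of the sum. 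If convenient, we would use the slightly larger regularizer $\psi_{t+1}$ inside the per-step bound; this is exactly why the $\psi_t-\psi_{t+1}$ terms appear and are harmless.
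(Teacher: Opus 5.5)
Your proposal is correct and follows essentially the paper's route. The ingredients match: the FTRL telescoping identity, a per-step Lemma~7.8-type bound re-derived on the projection to the first $t-1$ coordinates (where both the curvature of $\psi_t$ and the support of $g_t$ live), the range bound $\psi_{T+1}(u)\le T\log 2/\eta$, and a harmonic sum. You use the Euclidean norm (modulus $\frac{3}{4\eta}$, $\|g_t\|_2^2\le L^2/(t-1)$) where the paper uses the $\ell_1$ norm (modulus $\frac{3}{4\eta(t-1)}$, $\|g_t\|_\infty\le L/(t-1)$), and both give the identical term $\frac{2\eta L^2}{3(t-1)}$. Your discarding of the nonpositive terms $\psi_t(v_{t+1})-\psi_{t+1}(v_{t+1})$ is cleaner than the paper's telescoping of $\psi_t(v_t)-\psi_{t+1}(v_{t+1})$, which does not match its own per-step lemma.

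One index fix: the strongly convex part of $F_t+\tilde\ell_t$ is $\psi_t$ on coordinates $1,\dots,t-1$, not $\psi_{t+1}$ on coordinates $1,\dots,t$. You should therefore project onto the first $t-1$ coordinates and drop the aside about using $\psi_{t+1}$. That swap would add a component $\frac{1}{\eta}\phi'(v_t^{(t)})$ in coordinate $t$ to the subgradient at $v_t$, spoiling the clean $\frac{2\eta L^2}{3(t-1)}$ term.
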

The proof of this lemma has been deferred to later in the section. We now present the proof of \Cref{lem:approx-hybrid-ftrl-guarantee}.

\begin{proof}[Proof of \Cref{lem:approx-hybrid-ftrl-guarantee}]
We want to bound the empirical regret of the approximate FTRL algorithm, $\sum_{t=1}^{T} (\tilde\ell_t(\bar v_t) - \tilde\ell_t(u))$ for any $u \in V$.
We decompose the sum as:
$$\sum_{t=1}^{T} (\tilde\ell_t(\bar v_t) - \tilde\ell_t(u)) = \sum_{t=1}^{T} (\tilde\ell_t(\bar v_t) - \tilde\ell_t(v_t)) + \sum_{t=1}^{T} (\tilde\ell_t(v_t) - \tilde\ell_t(u)),$$
where $v_t \in V$ is the exact minimizer of $F_t(v)$ at time $t$.
The second term on the right-hand side is the empirical regret of the exact FTRL algorithm. By \Cref{lem:hybrid-ftrl-guarantee}, this term is bounded by:
$$\sum_{t=1}^{T} (\tilde\ell_t(v_t) - \tilde\ell_t(u)) \leq \mathcal{O}\left(L \sqrt{T \log T} \right)$$
Now, we bound the first term, which is the accumulated difference in loss between the approximate and exact minimizers: $\sum_{t=1}^{T} (\tilde\ell_t(\bar v_t) - \tilde\ell_t(v_t))$.
Since $\bar v_t$ is an $\eps$-approximate minimizer of $F_t(v)$, we have $F_t(\bar v_t) \leq F_t(v_t) + \eps$.
Using the strong convexity of $F_t$ with parameter $\lambda_t = \frac{3}{4\eta (t-1)}$ for $t > 1$ with respect to the $\ell_1$ norm of the first $t-1$ coordinates, the difference in function values is related to the squared distance between the points:
$$F_t(\bar v_t) - F_t(v_t) \geq \frac{\lambda_t}{2} \|\bar v_t^{(1:t-1)} - v_t^{(1:t-1)}\|_1^2.$$
Here the superscript refers to the first $t-1$ coordinates of the vector $v_t$. Combining with the $\eps$-optimality, we get a bound on the $\ell_1$ distance between $\bar v_t^{(1:t-1)}$ and $v_t^{(1:t-1)}$:
$$\frac{\lambda_t}{2} \|\bar v_t^{(1:t-1)} - v_t^{(1:t-1)}\|_1^2 \leq \eps \implies \|\bar v_t^{(1:t-1)} - v_t^{(1:t-1)}\|_1 \leq \sqrt{\frac{2\eps}{\lambda_t}}.$$
The function $\tilde\ell_t(v)$ is convex and L-Lipschitz. For $t > 1$, the $\ell_\infty$ norm of its subgradient is bounded by $\|\partial \tilde\ell_t(v)\|_\infty \leq \frac{L}{t-1}$. The difference in loss can be bounded using the Lipschitz property:
$$\tilde\ell_t(\bar v_t) - \tilde\ell_t(v_t) \leq \|\partial \tilde\ell_t\|_\infty \|\bar v_t^{(1:t-1)} - v_t^{(1:t-1)}\|_1 \leq \frac{L}{t-1} \|\bar v_t^{(1:t-1)} - v_t^{(1:t-1)}\|_1, \quad \text{for } t > 1.$$
Substitute the bound on $\|\bar v_t^{(1:t-1)} - v_t^{(1:t-1)}\|_1$:
$$\tilde\ell_t(\bar v_t) - \tilde\ell_t(v_t) \leq \frac{L}{t-1} \sqrt{\frac{2\eps}{\lambda_t}}, \quad \text{for } t > 1.$$
Using $\lambda_t = \frac{3}{4\eta (t-1)}$ for $t > 1$:
$$\tilde\ell_t(\bar v_t) - \tilde\ell_t(v_t) \leq \frac{L}{t-1} \sqrt{\frac{2\eps}{\frac{3}{4\eta (t-1)}}} = \frac{L}{t-1} \sqrt{\frac{8\eta \eps (t-1)}{3}} = L \sqrt{\frac{8\eta \eps}{3(t-1)}}, \quad \text{for } t > 1.$$
Now, sum this bound from $t=2$ to $T$
$$\sum_{t=1}^{T} (\tilde\ell_t(\bar v_t) - \tilde\ell_t(v_t)) \leq \sum_{t=2}^{T} L \sqrt{\frac{8\eta \eps}{3(t-1)}} = L \sqrt{\frac{8\eta \eps}{3}} \sum_{t=2}^{T} \frac{1}{\sqrt{t-1}}.$$
We use the bound $\sum_{k=1}^{T-1} \frac{1}{\sqrt{k}} \leq 1 + \int_1^{T-1} x^{-1/2} dx = 1 + [2\sqrt{x}]_1^{T-1} = \mathcal{O}(\sqrt{T})$.
$$\sum_{t=1}^{T} (\tilde\ell_t(\bar v_t) - \tilde\ell_t(v_t)) \leq L \sqrt{\frac{8\eta \eps}{3}} \mathcal{O}(\sqrt{T}) = \mathcal{O}(L \sqrt{\eta \eps T}).$$
Combining the bounds for the two terms in the regret decomposition:
$$\sum_{t=1}^{T} (\tilde\ell_t(\bar v_t) - \tilde\ell_t(u)) \leq \left(\frac{T \log 2}{\eta} + \frac{2\eta L^2}{3} (1 + \log(T-1))\right) + \mathcal{O}(L \sqrt{\eta \eps T}).$$

\end{proof}

To prove \Cref{lem:hybrid-ftrl-guarantee}, we first present the following helper lemmas:

\begin{lemma}[Lemma 7.1 of \cite{orabona2023modernintroductiononlinelearning}]\label{lem:ftrl-guarantee}
The Follow-the-Regularized-Leader (FTRL) algorithm, at each time step $t$ from 1 to $T$, outputs a decision $v_t$ that minimizes $F_t(v)$ over a closed and non-empty set $V \subseteq \mathbb{R}^d$, where $F_t(v) = \psi_t(v) + \sum_{i=1}^{t-1} \tilde\ell_i(v)$. That is,
$$v_t \in \operatorname{argmin}_{v \in V} F_t(v).$$
Assume that $\operatorname{argmin}_{v \in V} F_t(v)$ is non-empty, and let $v_t$ be an element of this set. Then, for any $u \in \mathbb{R}^d$, we have:
$$
\sum_{t=1}^{T} (\tilde\ell_t(v_t) - \tilde\ell_t(u)) = \psi_{T+1}(u) - \min_{v \in V} \psi_1(v) + \sum_{t=1}^{T} [F_t(v_t) - F_{t+1}(v_{t+1}) + \tilde\ell_t(v_t)] + (F_{T+1}(v_{T+1}) - F_{T+1}(u)).
$$
\end{lemma}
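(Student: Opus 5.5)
The identity in \Cref{lem:ftrl-guarantee} is purely algebraic: I would prove it by telescoping, using only the definition $F_t(v) = \psi_t(v) + \sum_{i=1}^{t-1}\tilde\ell_i(v)$ and the fact that $v_1$ minimizes $F_1$ over $V$. No convexity, strong convexity or Lipschitzness is needed. The plan is to expand the right-hand side, cancel terms, and show that what remains is exactly $\sum_{t=1}^T(\tilde\ell_t(v_t)-\tilde\ell_t(u))$.

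The first step handles the bracketed sum. Split it as
$$\sum_{t=1}^T [F_t(v_t)-F_{t+1}(v_{t+1})] + \sum_{t=1}^T \tilde\ell_t(v_t).$$
The first part telescopes to $F_1(v_1) - F_{T+1}(v_{T+1})$. The $-F_{T+1}(v_{T+1})$ then cancels against the $+F_{T+1}(v_{T+1})$ in the last term of the right-hand side.

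The second step identifies $F_1(v_1)$. Since the loss sum in $F_1$ is empty, $F_1 = \psi_1$. Because $v_1 \in \argmin_{v\in V} F_1(v)$, we get $F_1(v_1) = \min_{v\in V}\psi_1(v)$, which cancels the $-\min_{v\in V}\psi_1(v)$ term. In our application $\psi_1 \equiv 0$, so both quantities are simply $0$.

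The third step expands $F_{T+1}(u) = \psi_{T+1}(u) + \sum_{t=1}^T \tilde\ell_t(u)$. Substituting, the $\psi_{T+1}(u)$ terms cancel, and the right-hand side collapses to $\sum_{t=1}^T \tilde\ell_t(v_t) - \sum_{t=1}^T \tilde\ell_t(u)$, which is the left-hand side.

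The only point needing care, and the closest thing to an obstacle, is well-definedness. Every quantity must be a finite real number so that the additions and cancellations are legitimate and no $\infty-\infty$ arises. Nonemptiness of the argmin sets is assumed, so the $v_t$ exist. For $u$ where $\psi_{T+1}(u)$ or some $\tilde\ell_t(u)$ could be $+\infty$, I would note that both sides are then infinite in the same way, or restrict to $u$ with finite values. In our setting this is immaterial: every $u \in \Vcal \subseteq [0,1]^T$ gives finite $\psi_t$ and $\tilde\ell_t$, since $a\log(a+1)$ and $\ell$ are finite on $[0,1]$.
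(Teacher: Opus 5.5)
Your telescoping argument is correct and is the standard proof of Orabona's Lemma 7.1, which the paper cites rather than reproving. One small caveat: if $\psi_{T+1}(u)=+\infty$ while every $\tilde\ell_t(u)$ is finite, the left side is finite but the right side contains $+\infty-\infty$, so the two sides are not ``infinite in the same way''; the correct fix is your other option, restricting to $u$ at which all terms are finite, which covers every $u\in\Vcal$ the paper actually uses.
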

The proof of this lemma can be found in the referenced text.

\begin{lemma}[Variant of Lemma 7.8 of \cite{orabona2023modernintroductiononlinelearning}]\label{lem:f-per-timestep} Assume $V$ is convex and $\partial \tilde\ell_t(v_t)$ is non-empty for the Follow-the-Regularized-Leader (FTRL) approach with regularizer $\psi_t(v) = \frac{1}{\eta} \sum_{s =1}^{t-1} v^{(s)} \log (v^{(s)} + 1)$ and loss functions $\tilde\ell_t(v) = \frac{1}{t-1} \sum_{s=1}^{t-1} \ell(v^{(s)},r_t(x_s))$ (for $t>1$). Then for $F_t(v) = \psi_t(v) + \sum_{i=1}^{t-1} \tilde\ell_i(v)$, it holds that for any $g_t \in \partial \tilde \ell_t(v_t)$, we have
$$F_t(v_t) - F_{t+1}(v_{t+1}) + \tilde\ell_t(v_t) \leq \frac{\|g_t\|^2_\infty}{2\lambda_t} + \psi_t(v_{t+1}) - \psi_{t+1}(v_{t+1}).$$
for $\lambda_t = \frac{3}{4 \eta (t-1)}$ 
\end{lemma}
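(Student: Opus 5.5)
The plan is to follow the standard proof of Lemma 7.8 in \cite{orabona2023modernintroductiononlinelearning}. The only change is that strong convexity is used in the $\ell_1$ norm restricted to the coordinates $1,\dots,t-1$, since these are the only coordinates on which $\tilde\ell_t$, $\psi_t$ and $\psi_{t+1}$ can disagree in the relevant way.

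First, the curvature. Each coordinate term $\phi(a)=a\log(a+1)$ satisfies
\[
\phi''(a)=\frac{1}{a+1}+\frac{1}{(a+1)^2}=\frac{a+2}{(a+1)^2}\ge \frac34 \quad \text{on } [0,1].
\]
Hence $\psi_t$ is $\frac{3}{4\eta}$-strongly convex in $\ell_2$ on the first $t-1$ coordinates. By Cauchy--Schwarz, $\|y\|_1^2\le (t-1)\|y\|_2^2$ for $y\in\R^{t-1}$, so $\psi_t$ is $\frac{3}{4\eta(t-1)}=\lambda_t$-strongly convex with respect to $\|\cdot\|_1$ on those coordinates. The losses $\tilde\ell_i$ are convex because $\ell$ is convex in its first argument. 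Therefore $F_t+\tilde\ell_t$ is $\lambda_t$-strongly convex with respect to the seminorm $\|v^{(1:t-1)}\|_1$, and $\tilde\ell_t$ depends only on these coordinates.

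Next, the per-step inequality, derived directly. Write $F_{t+1}=F_t+\tilde\ell_t+(\psi_{t+1}-\psi_t)$. This gives
\[
F_t(v_t)-F_{t+1}(v_{t+1})+\tilde\ell_t(v_t) = \bigl[(F_t+\tilde\ell_t)(v_t)-(F_t+\tilde\ell_t)(v_{t+1})\bigr]+\psi_t(v_{t+1})-\psi_{t+1}(v_{t+1}).
\]
Since $v_t$ minimizes $F_t$ over the convex set $V$, there is a subgradient $d\in\partial F_t(v_t)$ with $\langle d,u-v_t\rangle\ge 0$ for all $u\in V$; here \Cref{thm:linear-sconvex} justifies splitting the subdifferential of the sum. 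Then $d+g_t\in\partial(F_t+\tilde\ell_t)(v_t)$, and strong convexity gives
\[
(F_t+\tilde\ell_t)(v_{t+1})\ge (F_t+\tilde\ell_t)(v_t)+\langle d+g_t,v_{t+1}-v_t\rangle+\tfrac{\lambda_t}{2}\|\Delta\|_1^2,
\]
where $\Delta=(v_{t+1}-v_t)^{(1:t-1)}$. Because $g_t$ is supported on the first $t-1$ coordinates, $\langle g_t,v_t-v_{t+1}\rangle\le \|g_t\|_\infty\|\Delta\|_1$. Maximizing $\|g_t\|_\infty x-\tfrac{\lambda_t}{2}x^2$ over $x\ge 0$ yields the term $\|g_t\|_\infty^2/(2\lambda_t)$, which is exactly the claimed inequality.

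The main obstacle is the degenerate directions. $F_t$ is not strongly convex in the coordinates $s\ge t$, so $v_t$ need not be unique, and the strong-convexity inequality holds only for the restricted seminorm. The minimizer-existence and uniqueness claims of Lemma 7.8 are therefore unavailable. The argument above avoids them: it fixes any minimizer $v_t$ and uses only the first-order optimality condition and the restricted strong convexity. The step to check carefully is that the strong-convexity inequality for $F_t+\tilde\ell_t$ holds with the seminorm, which follows by adding the convex inequality for the non-regularized parts to the strongly convex inequality for $\psi_t$. A second point to verify is that subgradients of $\phi$ exist at the boundary $a=0$; this holds because $\phi$ is differentiable on $[0,1]$, with $\phi'(0)=0$.
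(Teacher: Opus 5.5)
Your proposal is correct and is essentially the paper's argument: the same rearrangement via $F_{t+1}=F_t+\tilde\ell_t+(\psi_{t+1}-\psi_t)$, the same $\lambda_t=\frac{3}{4\eta(t-1)}$ strong convexity of $F_t+\tilde\ell_t$ in $\ell_1$ on the first $t-1$ coordinates (via $\phi''\ge 3/4$ and $\|y\|_1^2\le (t-1)\|y\|_2^2$), and optimality of $v_t$ to reduce the relevant subgradient to $g_t$. The only difference is packaging: the paper projects onto $V^{(t-1)}$ and invokes \Cref{cor:argmin-sconvex} against the minimizer of $\bar F_t+\bar\ell_t$, whereas you stay in the ambient space with a seminorm and inline that corollary using the variational-inequality form of optimality, which handles the constraint $v\in V$ more explicitly than the paper's bare assertion $0\in\partial\bar F_t(v_t)$.
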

Note that a direct application of \Cref{lem:ogf-per-timestep} (Lemma 7.8 of \cite{orabona2023modernintroductiononlinelearning}) can be challenging because the objective function $F_{t}$ depends only on the first $t-1$ coordinates of $v$, while the domain $V$ is in a higher-dimensional space. The following lemma provides a bound on the change in the objective function plus current loss, similar to \Cref{lem:ogf-per-timestep}, adapted for this structure. The proof is deferred to later in the section.

\begin{lemma}\label{lem:strong-convex}
For $t > 1$, the regularizer $\psi_t (v) = \frac{1}{\eta} \sum_{s =1}^{t-1} v^{(s)} \log (v^{(s)} + 1)$ defined over $V^{(t-1)} \subseteq [0,1]^{t-1}$ is $\frac{3}{4 \eta (t-1)}$-strongly convex with respect to the $\ell_1$ norm. 
\end{lemma}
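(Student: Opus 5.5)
The function $\psi_t$ is separable, so I would prove the claim by a Hessian computation followed by Cauchy--Schwarz. Write $\phi(a) = a\log(a+1)$ for $a\in[0,1]$, so that $\psi_t(v) = \frac{1}{\eta}\sum_{s=1}^{t-1}\phi(v^{(s)})$. Then
$$\phi'(a) = \log(a+1) + \frac{a}{a+1}, \qquad \phi''(a) = \frac{1}{a+1} + \frac{1}{(a+1)^2} = \frac{a+2}{(a+1)^2}.$$
On $[0,1]$ the function $\frac{a+2}{(a+1)^2}$ is decreasing: its derivative is $-\frac{a+3}{(a+1)^3}<0$. Its minimum is therefore attained at $a=1$, where it equals $\frac{3}{4}$. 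Hence $\phi$ is $\frac34$-strongly convex on $[0,1]$, and the Hessian of $\psi_t$ is diagonal with every entry at least $\frac{3}{4\eta}$.

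\textbf{Step 1.} From the curvature bound I would derive a strong-convexity inequality in $\ell_2$. For any $v,w\in V^{(t-1)}\subseteq[0,1]^{t-1}$, Taylor's theorem with integral remainder applied coordinatewise (valid because the segment between $v$ and $w$ stays inside $[0,1]^{t-1}$) gives
$$\psi_t(w) \ge \psi_t(v) + \langle \nabla\psi_t(v), w-v\rangle + \frac{3}{8\eta}\|w-v\|_2^2.$$
One could equivalently obtain this from the monotonicity statement $\langle\nabla\psi_t(w)-\nabla\psi_t(v),w-v\rangle\ge \frac{3}{4\eta}\|w-v\|_2^2$. Differentiability holds on the closed cube, so subgradients are just gradients and no boundary issues arise.

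\textbf{Step 2.} I would then convert from $\ell_2$ to $\ell_1$. Cauchy--Schwarz in dimension $t-1$ gives $\|w-v\|_1^2 \le (t-1)\|w-v\|_2^2$, so
$$\frac{3}{8\eta}\|w-v\|_2^2 \ge \frac{3}{8\eta(t-1)}\|w-v\|_1^2 = \frac{\lambda_t}{2}\|w-v\|_1^2, \qquad \lambda_t=\frac{3}{4\eta(t-1)}.$$
This is exactly $\lambda_t$-strong convexity with respect to $\ell_1$ on the first $t-1$ coordinates.

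The only step requiring care is the uniform lower bound on $\phi''$. This is the reason the truncated form $\log(a+1)$ is used instead of $\log a$: ordinary negative entropy has unbounded curvature near $0$ but is only $1$-strongly convex in $\ell_1$ on the simplex, whereas here the domain is the cube and the curvature is bounded below by $\frac34$. Restricting the domain to $[0,1]^{t-1}$ is essential, since $\phi''\to 0$ as $a\to\infty$.

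The factor $(t-1)$ lost in passing from $\ell_2$ to $\ell_1$ is genuine rather than an artifact of the argument: a constant vector achieves equality in Cauchy--Schwarz. I would not try to improve it. It is matched by the $\frac{1}{t-1}$ scaling of the subgradients of $\tilde\ell_t$ in $\ell_\infty$, which is how the lemma is used in \Cref{lem:f-per-timestep}.
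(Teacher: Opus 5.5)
Your proposal is correct and matches the paper's proof: the paper also uses $\phi''(a)=\frac{1}{a+1}+\frac{1}{(a+1)^2}\ge\frac34$ on $[0,1]$ to bound the diagonal Hessian entries of $\psi_t$ below by $\frac{3}{4\eta}$, and then applies $\|u\|_2^2\ge\frac{1}{t-1}\|u\|_1^2$. The only difference is that you spell out the step from the Hessian bound to the first-order strong-convexity inequality via Taylor's theorem with integral remainder, whereas the paper takes the quadratic-form condition $u^\top\nabla^2\psi_t(v)\,u\ge\lambda_t\|u\|_1^2$ directly as the criterion.
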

The proof of the lemma has been deferred to later in this section.

\begin{proof}[Proof of \Cref{lem:hybrid-ftrl-guarantee}]
From \Cref{lem:ftrl-guarantee}, for any $u \in V$, we have:
$$
\sum_{t=1}^{T} (\tilde\ell_t(v_t) - \tilde\ell_t(u)) = \psi_{T+1}(u) - \min_{v \in V} \psi_1(v) + \sum_{t=1}^{T} [F_t(v_t) - F_{t+1}(v_{t+1}) + \tilde\ell_t(v_t)] + (F_{T+1}(v_{T+1}) - F_{T+1}(u)).
$$
From the definition of $\psi_1$, $\psi_1(v) = \frac{1}{\eta} \sum_{s=1}^0 v^{(s)} \log(v^{(s)} + 1) = 0$. Thus, $\min_{v \in V} \psi_1(v) = 0$.
The equality becomes:
$$
\sum_{t=1}^{T} (\tilde\ell_t(v_t) - \tilde\ell_t(u)) = \psi_{T+1}(u) + \sum_{t=1}^{T} [F_t(v_t) - F_{t+1}(v_{t+1}) + \tilde\ell_t(v_t)] + (F_{T+1}(v_{T+1}) - F_{T+1}(u)).
$$
By \Cref{lem:f-per-timestep}, for $t > 1$, we have for $\lambda_t = \frac{3}{4 \eta (t-1)}$:
$$
F_t(v_t) - F_{t+1}(v_{t+1}) + \tilde\ell_t(v_t) \leq \frac{\|g_t\|^2_\infty}{2\lambda_t} + \psi_t(v_t) - \psi_{t+1}(v_{t+1}), \quad \forall g_t \in \partial \tilde\ell_t(v_t).
$$
Summing this inequality from $t=1$ to $T$:
$$
\sum_{t=1}^{T} [F_t(v_t) - F_{t+1}(v_{t+1}) + \tilde\ell_t(v_t)] \leq \sum_{t=1}^{T} \frac{\|g_t\|^2_\infty}{2\lambda_t} + \sum_{t=1}^{T} (\psi_t(v_t) - \psi_{t+1}(v_{t+1})).
$$
The second sum on the right-hand side is a telescoping sum:
$$
\sum_{t=1}^{T} (\psi_t(v_t) - \psi_{t+1}(v_{t+1})) = (\psi_1(v_1) - \psi_2(v_2)) + \dots + (\psi_T(v_T) - \psi_{T+1}(v_{T+1})) = \psi_1(v_1) - \psi_{T+1}(v_{T+1}).
$$
Since $\psi_1(v_1) = 0$, this sum equals $-\psi_{T+1}(v_{T+1})$.
Substituting this back into the sum bound:
$$
\sum_{t=1}^{T} [F_t(v_t) - F_{t+1}(v_{t+1}) + \tilde\ell_t(v_t)] \leq \sum_{t=1}^{T} \frac{\|g_t\|^2_\infty}{2\lambda_t} - \psi_{T+1}(v_{T+1}).
$$
Now, substitute this bound into the FTRL guarantee:
$$
\sum_{t=1}^{T} (\tilde\ell_t(v_t) - \tilde\ell_t(u)) \leq \psi_{T+1}(u) + \sum_{t=1}^{T} \frac{\|g_t\|^2_\infty}{2\lambda_t} - \psi_{T+1}(v_{T+1}) + (F_{T+1}(v_{T+1}) - F_{T+1}(u)).
$$
Since $v_{T+1} = \operatorname{argmin}_{v \in V} F_{T+1}(v)$, we have $F_{T+1}(v_{T+1}) \leq F_{T+1}(u)$, so $F_{T+1}(v_{T+1}) - F_{T+1}(u) \leq 0$.
$$
\sum_{t=1}^{T} (\tilde\ell_t(v_t) - \tilde\ell_t(u)) \leq \psi_{T+1}(u) - \psi_{T+1}(v_{T+1}) + \sum_{t=1}^{T} \frac{\|g_t\|^2_\infty}{2\lambda_t}.
$$
Now we bound the terms on the right-hand side.
First, consider the difference of the regularizer terms $\psi_{T+1}(u) - \psi_{T+1}(v_{T+1})$. For any $v \in V \subseteq [0,1]^d$, $\psi_{T+1}(v) = \frac{1}{\eta} \sum_{s=1}^T v^{(s)} \log(v^{(s)} + 1)$. Since $v^{(s)} \in [0,1]$, $0 \leq v^{(s)} \log(v^{(s)} + 1) \leq \log 2$.
Thus, $0 \leq \psi_{T+1}(v) \leq \frac{T \log 2}{\eta}$ for any $v \in V$.
Therefore, $\psi_{T+1}(u) - \psi_{T+1}(v_{T+1}) \leq \psi_{T+1}(u) \leq \frac{T \log 2}{\eta}$.

Next, consider the sum of gradient terms $\sum_{t=1}^{T} \frac{\|g_t\|^2_\infty}{2\lambda_t}$. 
From \Cref{lem:strong-convex}, $\lambda_t = \frac{3}{4\eta (t-1)}$ for $t > 1$.
The subgradient $g_t \in \partial \tilde\ell_t(v_t)$. Since $\tilde\ell_t(v) = \frac{1}{t-1} \sum_{s=1}^{t-1} \ell(v^{(s)},r_t(x_s))$ and $\ell$ is L-Lipschitz, the infinity norm of $g_t$ is bounded by $\|g_t\|_\infty \leq \frac{L}{t-1}$ for $t > 1$.
Substituting these bounds:
$$
\sum_{t=2}^{T} \frac{\|g_t\|^2_\infty}{2\lambda_t} \leq \sum_{t=2}^{T} \frac{(L/(t-1))^2}{2 \cdot \frac{3}{4\eta (t-1)}} = \sum_{t=2}^{T} \frac{L^2}{(t-1)^2} \cdot \frac{2\eta (t-1)}{3} = \sum_{t=2}^{T} \frac{2\eta L^2}{3(t-1)}.
$$
$$
\sum_{t=2}^{T} \frac{2\eta L^2}{3(t-1)} = \frac{2\eta L^2}{3} \sum_{t=2}^{T} \frac{1}{t-1} = \frac{2\eta L^2}{3} \sum_{k=1}^{T-1} \frac{1}{k}.
$$
Using the bound $\sum_{k=1}^{T-1} \frac{1}{k} \leq 1 + \log(T-1)$ for $T > 1$:
$$
\sum_{t=1}^{T} \frac{\|g_t\|^2_\star}{2\lambda_t} \leq \frac{2\eta L^2}{3} (1 + \log(T-1)).
$$
Combining the bounds for the two terms:
$$
\sum_{t=1}^{T} (\tilde\ell_t(v_t) - \tilde\ell_t(u)) \leq \frac{T \log 2}{\eta} + \frac{2\eta L^2}{3} (1 + \log(T-1)).
$$
If we choose $\eta = \sqrt{\frac{T}{L^2 \log T}}$, the empirical regret is bounded by:
$$
\sum_{t=1}^{T} (\tilde\ell_t(v_t) - \tilde\ell_t(u)) \leq \frac{T \log 2}{\sqrt{\frac{T}{L^2 \log T}}} + \frac{2\sqrt{\frac{T}{L^2 \log T}} L^2}{3} (1 + \log(T-1)) = \mathcal{O}(L \sqrt{T \log T}).
$$
Thus, with this choice of $\eta$, the empirical regret is bounded by $\mathcal{O}(L \sqrt{T \log T})$.

\end{proof}

\begin{corollary}[Corollary 7.7 of \cite{orabona2023modernintroductiononlinelearning}]\label{cor:argmin-sconvex}
Let $f : \mathbb{R}^d \to (-\infty,+\infty]$ be closed, proper, subdifferentiable, and $\mu$-strongly convex with respect to a norm $\|\cdot\|$ over its domain. Let $v^\star = \arg\min_v f(v)$. Then, for all $v \in \operatorname{dom} \partial f$, and $g \in \partial f(v)$, we have
\[
f(v) - f(v^\star) \leq \frac{1}{2\mu} \|g\|_{\star}^2.
\]
\end{corollary}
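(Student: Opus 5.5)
The plan is the standard two-step argument. First derive the ``first-order'' form of strong convexity at a subgradient point. Then minimize the resulting quadratic lower bound over the whole space, using the dual norm.

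\emph{Step 1 (first-order strong convexity).} Fix $v \in \operatorname{dom}\partial f$, $g \in \partial f(v)$ and any $y \in \operatorname{dom} f$. For $\lambda \in (0,1)$, strong convexity with respect to $\|\cdot\|$ gives
\[
f(v+\lambda(y-v)) \le \lambda f(y) + (1-\lambda) f(v) - \tfrac{\mu}{2}\lambda(1-\lambda)\|y-v\|^2 .
\]
The subgradient inequality at $v$, applied at the point $v+\lambda(y-v)$, gives
\[
f(v+\lambda(y-v)) \ge f(v) + \lambda\langle g, y-v\rangle .
\]
Combining the two, subtracting $f(v)$, dividing by $\lambda$ and letting $\lambda \to 0$ yields
\[
f(y) \ge f(v) + \langle g, y-v\rangle + \tfrac{\mu}{2}\|y-v\|^2 \quad \text{for all } y .
\]
For $y \notin \operatorname{dom} f$ the inequality is trivial, since then $f(y)=+\infty$.

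\emph{Step 2 (existence of $v^\star$).} $f$ is closed, proper and strongly convex, so it has a unique minimizer $v^\star$, and the statement implicitly assumes this. If one wants to justify it, Step 1 applied at any point of $\operatorname{dom}\partial f$ shows that $f$ grows at least quadratically. Hence the sublevel sets are bounded, and closedness together with Weierstrass give existence. Uniqueness follows from strict convexity.

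\emph{Step 3 (minimizing the lower bound).} Take $y = v^\star$ in Step 1. It then suffices to bound $\langle g, d\rangle + \tfrac{\mu}{2}\|d\|^2$ from below uniformly in $d$. By the definition of the dual norm (Hölder), $\langle g,d\rangle \ge -\|g\|_\star \|d\|$. So the expression is at least $\min_{s\ge 0}\bigl(-\|g\|_\star s + \tfrac{\mu}{2}s^2\bigr) = -\tfrac{\|g\|_\star^2}{2\mu}$, attained at $s = \|g\|_\star/\mu$. This gives $f(v^\star) \ge f(v) - \tfrac{1}{2\mu}\|g\|_\star^2$, which rearranges to the claim.

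The only mildly delicate step is Step 1. Passing to the limit $\lambda\to 0$ requires that the subgradient inequality be used only at $v$, where $g$ is known to be a subgradient, and not at $y$, where $f$ may fail to be subdifferentiable. Everything else is one-variable calculus.
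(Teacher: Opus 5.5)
Your proof is correct. The paper gives no argument of its own for this statement and imports it from Orabona's book; your argument is the standard proof of it: the first-order strong-convexity inequality at $v$ evaluated at $y=v^\star$, followed by the bound $\langle g,d\rangle+\frac{\mu}{2}\|d\|^2\ge-\frac{1}{2\mu}\|g\|_\star^2$ (equivalently, the conjugate of $\frac{\mu}{2}\|\cdot\|^2$ is $\frac{1}{2\mu}\|\cdot\|_\star^2$). Your Step~2 is harmless but optional, since Step~3 already gives $\inf f\ge f(v)-\frac{1}{2\mu}\|g\|_\star^2$, and the existence of $v^\star$ is presupposed by the statement.
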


\begin{theorem}[Theorem 6.12 of \cite{orabona2023modernintroductiononlinelearning}]\label{thm:zero-sconvex}
Let $f : \mathbb{R}^d \to (-\infty, +\infty]$ be proper. Then $v^\star \in \arg\min_{v \in \mathbb{R}^d} f(v)$ \emph{iff} $0 \in \partial f(v^\star)$.
\end{theorem}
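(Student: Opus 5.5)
The plan is to unfold the definition of the subdifferential directly; no earlier result from the paper is needed. Recall that for a proper $f:\mathbb{R}^d\to(-\infty,+\infty]$ and a point $v\in\operatorname{dom} f$, $g\in\partial f(v)$ means
\[
f(u)\;\ge\; f(v)+\langle g,u-v\rangle \qquad \text{for all } u\in\mathbb{R}^d ,
\]
and $\partial f(v)=\emptyset$ when $f(v)=+\infty$. The whole proof consists of specializing this inequality to $g=0$ and checking the domain bookkeeping in each direction.

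For the direction ($\Leftarrow$), I will assume $0\in\partial f(v^\star)$. Since the subdifferential is nonempty, $v^\star\in\operatorname{dom} f$, so $f(v^\star)$ is finite. Taking $g=0$ in the defining inequality gives $f(u)\ge f(v^\star)$ for every $u$, which is exactly the statement $v^\star\in\arg\min_{v} f(v)$.

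For the direction ($\Rightarrow$), I will assume $v^\star$ is a global minimizer. The one point needing care is finiteness of $f(v^\star)$. Properness gives some $u_0$ with $f(u_0)<+\infty$, so $f(v^\star)\le f(u_0)<+\infty$. Properness also rules out the value $-\infty$. Hence $v^\star\in\operatorname{dom} f$, and minimality reads $f(u)\ge f(v^\star)=f(v^\star)+\langle 0,u-v^\star\rangle$ for all $u$. This is the definition of $0\in\partial f(v^\star)$.

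I do not expect a genuine obstacle here: convexity is not even needed, since the subgradient inequality used is the global one. The only subtle step is the domain check just described, i.e.\ using properness to guarantee that the minimizer has a finite value, so that the subdifferential there is well defined and the inequalities make sense.
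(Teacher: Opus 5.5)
Your proof is correct and follows the same approach as the source: the paper gives no proof of its own and cites Orabona's Theorem 6.12, which is proved by the same one-line unfolding of the subgradient inequality with $g=0$. Your domain check — properness forces $f(v^\star)<+\infty$ at a minimizer, so $\partial f(v^\star)$ is meaningful — is the only bookkeeping needed, and you handle it correctly.
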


\begin{proof}[Proof of \Cref{lem:f-per-timestep}]
Let $V^{(t-1)} = \{v^{(1:t-1)} : v \in V\} \subseteq [0,1]^{t-1}$. Define $\bar F_t : V^{(t-1)} \rightarrow \R$ such that $\bar F_t(v^{(1:t-1)}) = F_t(v)$ for $v \in V$. Note that $F_t(v) = \psi_t(v) + \sum_{s=1}^{t-1} \tilde\ell_s(v)$. Since $\psi_t$ depends only on $v^{(1:t-1)}$ and $\tilde\ell_s$ (for $s < t$) depends on $v^{(1:s-1)} \subseteq v^{(1:t-1)}$, $F_t$ indeed depends only on $v^{(1:t-1)}$. Similarly, we define $\bar \ell_t : V^{(t-1)} \rightarrow \R$ such that $\bar \ell_t(v^{(1:t-1)}) = \tilde \ell_t(v)$ for $v \in V, t \in [T]$.
By \Cref{lem:strong-convex}, $\psi_t$ is closed, subdifferentiable, and strongly convex with parameter $\lambda_t = \frac{3}{4\eta (t-1)}$ with respect to the $\ell_1$ norm on $V^{(t-1)}$ (for $t>1$). As a sum of a strongly convex function ($\psi_t$) and convex functions ($\tilde\ell_s$), $\bar F_t$ is also strongly convex with parameter $\lambda_t = \frac{3}{4\eta (t-1)}$ on $V^{(t-1)}$ (for $t>1$).
The function $(v^{(1:t-1)}) \mapsto F_t(v) + \tilde\ell_t(v)$ is closed, subdifferentiable, and strongly convex with parameter $\lambda_t$ with respect to the $\ell_1$ norm on $V^{(t-1)}$.
\begin{align*}
& F_t(v_t) - F_{t+1}(v_{t+1}) + \tilde\ell_t(v_t) \\ \tag{\text{Rearranging terms} }
&= (F_t(v_t) + \tilde\ell_t(v_t)) - (F_{t}(v_{t+1}) + \tilde\ell_t(v_{t+1})) + \psi_t(v_{t+1}) - \psi_{t+1}(v_{t+1}) \quad \\
&= (\bar F_t(v_t^{(1:t-1)}) + \bar\ell_t(v_t^{(1:t-1)})) - (\bar F_{t}(v_{t+1}^{(1:t-1)}) + \bar\ell_t(v_{t+1}^{(1:t-1)})) + \psi_t(v_{t+1}) - \psi_{t+1}(v_{t+1}) \quad 
\end{align*}
Recall that $\bar F_t$ is also strongly convex with parameter $\lambda_t = \frac{3}{4\eta (t-1)}$ on $V^{(t-1)}$ (for $t>1$), thus, if we define $v_{t+1}^{*, (1:t-1)} := \operatorname{argmin}_{v \in V^{(1:t-1)}} \{\bar F_{t}(v) + \bar\ell_t(v)\})$. By \Cref{cor:argmin-sconvex}, $(\bar F_t(v_t^{(1:t-1)}) + \bar\ell_t(v_t^{(1:t-1)})) - (\bar F_{t}(v_{t+1}^{*,(1:t-1)}) +\bar\ell_t(v_{t+1}^{*, (1:t-1)})) \leq \frac{\|g_t\|^2_\infty}{2\lambda_t}$ where $g_t \in \delta (\bar F_t + \bar \ell_t)(v_t^{(1:t-1)})$. Now, use the fact that $v_t^{(1:t-1)} \in \argmin_{v \in V^{(t-1)}} F_t(v)$, which by \Cref{thm:zero-sconvex} implies $0 \in \delta \bar F_t (v_t^{(1:t-1)})$, which implies $g_t \in \delta \bar\ell_t(v_t^{(1:t-1)})$. And because $\tilde \ell_t $ only depends on the first $t-1$ coordinates, then $\delta \bar\ell_t = \delta \tilde \ell_t$. Thus, 
for any $g_t \in \partial \tilde \ell_t(v_t)$, we have
$$F_t(v_t) - F_{t+1}(v_{t+1}) + \tilde\ell_t(v_t) \leq \frac{\|g_t\|^2_\infty}{2\lambda_t} + \psi_t(v_{t+1}) - \psi_{t+1}(v_{t+1}).$$
for $\lambda_t = \frac{3}{4 \eta (t-1)}$ 
\end{proof}

\begin{proof}[Proof of \Cref{lem:strong-convex}]
Let $v \in V^{(t-1)} \subseteq [0,1]^{t-1}$, so $v$ is a vector $(v^{(1)}, \dots, v^{(t-1)})$ with $v^{(s)} \in [0,1]$ for $s=1, \dots, t-1$.
Let $u = y-x$ where $x, y \in V^{(t-1)}$, so $u$ is a vector $(u_1, \dots, u_{t-1}) \in \mathbb{R}^{t-1}$.
The function is $\psi_t(v) = \frac{1}{\eta} \sum_{s =1}^{t-1} v^{(s)} \log (v^{(s)} + 1)$.
Let $f(w) = w \log (w + 1)$. The second derivative is $f''(w) = \frac{1}{w+1} + \frac{1}{(w+1)^2}$.
For $w \in [0,1]$, $w+1 \in [1,2]$, so $f''(w) \ge \frac{1}{2} + \frac{1}{4} = \frac{3}{4}$.

The Hessian matrix $\nabla^2 \psi_t(v)$ is a $(t-1) \times (t-1)$ diagonal matrix with entries $(\nabla^2 \psi_t(v))_{ss} = \frac{1}{\eta} f''(v^{(s)})$ for $s=1, \dots, t-1$.

To show $\lambda_t$-strong convexity with respect to the $\ell_1$ norm, we show $u^T \nabla^2 \psi_t(v) u \ge \lambda_t \|u\|_1^2$ for all $v \in V^{(t-1)}$ and $u \in \mathbb{R}^{t-1}$.
$$u^T \nabla^2 \psi_t(v) u = \sum_{s=1}^{t-1} (\nabla^2 \psi_t(v))_{ss} u_s^2 = \sum_{s=1}^{t-1} \frac{1}{\eta} f''(v^{(s)}) u_s^2$$
Since $v^{(s)} \in [0,1]$ for $s=1, \dots, t-1$, $f''(v^{(s)}) \ge \frac{3}{4}$.
$$u^T \nabla^2 \psi_t(v) u \ge \sum_{s=1}^{t-1} \frac{1}{\eta} \left(\frac{3}{4}\right) u_s^2 = \frac{3}{4\eta} \sum_{s=1}^{t-1} u_s^2 = \frac{3}{4\eta} \|u\|_2^2$$
We use the relationship between the $\ell_2$ and $\ell_1$ norms in $\mathbb{R}^{t-1}$: $\|u\|_2^2 \ge \frac{1}{t-1} \|u\|_1^2$. This holds for $t-1 > 0$, i.e., $t > 1$.
$$\frac{3}{4\eta} \|u\|_2^2 \ge \frac{3}{4\eta} \left(\frac{1}{t-1} \|u\|_1^2\right) = \frac{3}{4\eta (t-1)} \|u\|_1^2$$
Thus, $u^T \nabla^2 \psi_t(v) u \ge \frac{3}{4\eta (t-1)} \|u\|_1^2$.
Comparing this with the strong convexity condition $u^T \nabla^2 \psi_t(v) u \ge \lambda_t \|u\|_1^2$, we can choose $\lambda_t = \frac{3}{4\eta (t-1)}$.

\end{proof}

\begin{lemma}[Summing Rademacher complexities over prefixes]
\label{lem:sum-rad-prefixes}
Let $\mathcal F \subseteq [0,1]^{\mathcal X}$ be a class of functions and let 
$\mathrm{rad}_m(\mathcal F)$ denote its empirical Rademacher complexity on $m$ samples.
Then
\[
\sum_{t=2}^T \mathrm{rad}_{t-1}(\mathcal F)
\;\le\;
\tilde{O}\!\left(T \cdot \mathrm{rad}_T(\mathcal F)\right),
\]
where $\tilde{O}(\cdot)$ hides universal constants and logarithmic factors in $T$.
\end{lemma}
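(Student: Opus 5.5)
The plan is to show that the unnormalized quantity $m\,\mathrm{rad}_m(\mathcal F)$ is nondecreasing in $m$, and then sum a harmonic series. Write
\[
m\,\mathrm{rad}_m(\mathcal F) \;=\; \E_{x_1,\ldots,x_m\sim\Dcal}\,\E_{\sigma}\Big[\sup_{f\in\mathcal F}\sum_{i=1}^m \sigma_i f(x_i)\Big] \;=:\; \Phi(m).
\]
This uses the distributional Rademacher complexity of \Cref{sec:prelim}. The argument also goes through verbatim for the empirical version along a fixed nested sequence of samples $x_1,x_2,\ldots$, which is how the prefixes arise in the proof of \Cref{thm:eg-hybrid}.

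\textbf{Step 1: monotonicity, $\Phi(m)\le\Phi(m+1)$.} Condition on $x_1,\ldots,x_{m+1}$ and $\sigma_1,\ldots,\sigma_m$, and set $A_f=\sum_{i\le m}\sigma_i f(x_i)$. Since $\E[\sigma_{m+1}]=0$ and the map $c\mapsto\sup_f(A_f+c f(x_{m+1}))$ is a supremum of affine functions of $c$, hence convex, Jensen's inequality gives
\[
\E_{\sigma_{m+1}}\Big[\sup_f \big(A_f+\sigma_{m+1} f(x_{m+1})\big)\Big] \;\ge\; \sup_f A_f .
\]
Taking expectations over the remaining randomness, and using that the first $m$ samples have the same law whether we draw $m$ or $m+1$ of them, yields $\Phi(m)\le\Phi(m+1)$. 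Iterating gives $m\,\mathrm{rad}_m(\mathcal F)\le T\,\mathrm{rad}_T(\mathcal F)$ for every $m\le T$. Boundedness of $\mathcal F\subseteq[0,1]^{\Xcal}$ ensures that all the suprema are finite, so these manipulations are legitimate.

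\textbf{Step 2: summing.} From Step 1, $\mathrm{rad}_{t-1}(\mathcal F)\le \frac{T}{t-1}\,\mathrm{rad}_T(\mathcal F)$ for each $t\ge 2$. Therefore
\[
\sum_{t=2}^T \mathrm{rad}_{t-1}(\mathcal F)\;\le\; T\,\mathrm{rad}_T(\mathcal F)\sum_{k=1}^{T-1}\frac1k\;\le\;(1+\log T)\,T\,\mathrm{rad}_T(\mathcal F),
\]
which is the claimed $\tilde O(T\cdot\mathrm{rad}_T(\mathcal F))$ bound, with only a single logarithmic factor hidden.

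The only substantive step is the Jensen argument in Step 1; everything else is routine. The one point requiring care is that the expectation over the extra sample $x_{m+1}$ is harmless, because the inequality holds pointwise for every value of $x_{m+1}$. A tighter bound, removing the logarithm for classes with $\mathrm{rad}_m\asymp m^{-1/2}$, is not needed here.
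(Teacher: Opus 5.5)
Your proof is correct, and it takes a genuinely different and more elementary route than the paper. You show that the unnormalized complexity $m\,\mathrm{rad}_m(\mathcal F)$ is nondecreasing in $m$, by applying Jensen to the extra sign $\sigma_{m+1}$ via convexity of $c\mapsto\sup_f(A_f+c f(x_{m+1}))$. This gives $\mathrm{rad}_{t-1}(\mathcal F)\le\frac{T}{t-1}\,\mathrm{rad}_T(\mathcal F)$, and hence the claim with an explicit factor $1+\log T$.

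The paper instead bounds each $\mathrm{rad}_{t-1}(\mathcal F)$ from above by a Dudley entropy integral in terms of the fat-shattering dimension. It sums the prefactors $1/\sqrt{t-1}$ to at most $2\sqrt{T}$, and then invokes near-tightness of that Dudley bound at $m=T$ to convert the integral back into $\tilde O(\mathrm{rad}_T(\mathcal F))$. That route exploits the $1/\sqrt{m}$ scaling, so the summation itself produces no harmonic logarithm. However, it pays polylogarithmic factors for the Dudley characterization. More importantly, it needs the reverse inequality: the entropy integral must be at most a polylogarithmic multiple of the Rademacher complexity. This is available for worst-case complexities, but not in general for the distribution-dependent $\mathsf{rad}_T$ of \Cref{sec:prelim}. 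For example, a point-mass $\Dcal$ gives $\mathsf{rad}_T\le 1/\sqrt{T}$ for every $[0,1]$-valued class, whereas the fat-shattering bound for $[0,1]^{\Xcal}$ does not decay with $T$.

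Your monotonicity argument needs no such characterization. It holds for every distribution and every bounded class, and it applies verbatim to empirical complexities along nested prefixes.
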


\begin{proof}
We use the near-tight characterization of Rademacher complexity via Dudley's entropy integral. 
Results of \citet{karthikLec} imply that for any $m$,
\[
\mathrm{rad}_m(\mathcal F)
\;\le\;
\tilde{O}\!\left(
\inf_{\alpha>0}
\left\{
\alpha
+
\frac{1}{\sqrt{m}}
\int_\alpha^1
\sqrt{\mathrm{fat}_\delta(\mathcal F)\,\log \frac{2}{\delta}}
\, d\delta
\right\}
\right),
\]
where $\mathrm{fat}_\delta(\mathcal F)$ denotes the fat-shattering dimension of $\mathcal F$ at scale $\delta$.

Applying this bound with $m=t-1$ and summing from $t=2$ to $T$ gives
\[
\sum_{t=2}^T \mathrm{rad}_{t-1}(\mathcal F)
\le
\tilde{O}\!\left(
\sum_{t=2}^T
\inf_{\alpha>0}
\left\{
\alpha
+
\frac{1}{\sqrt{t-1}}
\int_\alpha^1
\sqrt{\mathrm{fat}_\delta(\mathcal F)\,\log \frac{2}{\delta}}
\, d\delta
\right\}
\right).
\]

Pulling the infimum outside and summing terms yields
\[
\le
\tilde{O}\!\left(
\inf_{\alpha>0}
\left\{
(T-1)\alpha
+
\left(\sum_{t=2}^T \frac{1}{\sqrt{t-1}}\right)
\int_\alpha^1
\sqrt{\mathrm{fat}_\delta(\mathcal F)\,\log \frac{2}{\delta}}
\, d\delta
\right\}
\right).
\]

Since $\sum_{s=1}^{T-1} s^{-1/2} \le 2\sqrt{T}$, this is at most
\[
\tilde{O}\!\left(
T \cdot
\inf_{\alpha>0}
\left\{
\alpha
+
\frac{1}{\sqrt{T}}
\int_\alpha^1
\sqrt{\mathrm{fat}_\delta(\mathcal F)\,\log \frac{2}{\delta}}
\, d\delta
\right\}
\right).
\]

Applying the same Dudley-type bound again with $m=T$ shows that the term in braces is $\tilde{O}(\mathrm{rad}_T(\mathcal F))$, yielding
\[
\sum_{t=2}^T \mathrm{rad}_{t-1}(\mathcal F)
\le
\tilde{O}\!\left(T \cdot \mathrm{rad}_T(\mathcal F)\right).
\]
\end{proof}

\subsection{Deferred Proofs from \Cref{sec:results}}
\label{sec:seq-conc-proof}

To aid with their introduction and analysis of the distribution-dependent sequential Rademacher complexity, \cite{rakhlin2011constrained} introduce the following notation:

They define the \textit{selector function} $\chi: \mathcal{X} \times \mathcal{X} \times \{\pm 1\} \to \mathcal{X}$, which selects one of two elements based on a binary sign $\epsilon$:
$$ \chi(x, x', \epsilon) = \begin{cases} x' & \text{if } \epsilon = 1 \\ x & \text{if } \epsilon = -1 \end{cases} $$
In the context of sequences where $x_t$ and $x'_t$ implicitly depend on previous $\epsilon$ values, the shorthand $\chi_t(\boldsymbol{\epsilon}) := \chi(x_t(\epsilon_{1:t-1}), x'_t(\epsilon_{1:t-1}), \epsilon_t)$. This notation indicates that $\chi_t$ chooses either $x_t$ or $x'_t$ at time step $t$, depending on the value of $\epsilon_t$ within the path $\boldsymbol{\epsilon} = (\epsilon_1, \ldots, \epsilon_T)$. The terms $x_t(\epsilon_{1:t-1})$ and $x'_t(\epsilon_{1:t-1})$ represent elements at depth $t$ along a specific path determined by the preceding $\epsilon$ values.

A \textit{$Z$-valued tree of depth $T$} is a sequence of $T$ mappings, $(\mathbf{z}_1, \ldots, \mathbf{z}_T)$. Each mapping $\mathbf{z}_t : \{\pm 1\}^{t-1} \to Z$ assigns a value from set $Z$ to a specific node at depth $t$. The node's position is uniquely determined by a sequence of prior choices, $(\epsilon_1, \ldots, \epsilon_{t-1}) \in \{\pm 1\}^{t-1}$. A complete sequence $\boldsymbol{\epsilon} = (\epsilon_1, \ldots, \epsilon_T) \in \{\pm 1\}^T$ defines a unique path from the root to a leaf of the tree. For conciseness, $\mathbf{z}_t(\epsilon_{1:t-1})$ is shorthand for $\mathbf{z}_t(\epsilon_1, \ldots, \epsilon_{t-1})$.

Given an underlying joint distribution $\mathbf{p}$ (over $T$ length sequences of observations from $\mathcal{X}$), we define a \textit{probability tree $\rho = (\rho_1, \ldots, \rho_T)$}. This tree generates sequences of pairs of elements $(\mathbf{x}, \mathbf{x}') = ((x_1, x'_1), \ldots, (x_T, x'_T))$. Each $\rho_t(\epsilon_{1:t-1})$ is a conditional probability distribution that determines $(x_t, x'_t)$ given the preceding pairs $(x_1, x'_1), \ldots, (x_{t-1}, x'_{t-1})$. The crucial aspect is how this conditioning is performed:
\begin{equation}\label{eq:ptree}
\rho_t(\epsilon_{1:t-1})((x_t, x'_t)|(x_{1:t-1}, x'_{1:t-1})) = \mathbf{p}_t((\chi_s(\epsilon_s))_{s=1}^{t-1})((x_t, x'_t)|(x_{1:t-1}, x'_{1:t-1}))
\end{equation}
Here, $\mathbf{p}_t((\chi_s(\epsilon_s))_{s=1}^{t-1})$ denotes the conditional distribution for $(x_t, x'_t)$ derived from $\mathbf{p}$, given the history sequence formed by dynamically applying the selector function at each step: $(\chi_1(\epsilon_1), \ldots, \chi_{t-1}(\epsilon_{t-1}))$. This means the generation of each pair $(x_t, x'_t)$ depends on a history that dynamically selects between $x_s$ and $x'_s$ based on the Rademacher variables $\epsilon_s$.

\begin{definition}[Definition 2 of \cite{rakhlin2011constrained}]\label{def:dist-rad} The distribution-dependent sequential Rademacher complexity of a function class $\mathcal{F} \subseteq \mathbb{R}^\mathcal{X}$ is defined as
$$ \mathfrak{R}_T(\mathcal{F}, \mathbf{p}) \triangleq \mathbb{E}_{(\mathbf{x}, \mathbf{x}') \sim \rho} \mathbb{E}_{\boldsymbol{\epsilon}} \left[ \sup_{f \in \mathcal{F}} \sum_{t=1}^T \epsilon_t f(\chi_t(\boldsymbol{\epsilon})) \right] $$
where $\boldsymbol{\epsilon} = (\epsilon_1, \ldots, \epsilon_T)$ is a sequence of i.i.d. Rademacher random variables and $\rho$ is the probability tree associated with $\mathbf{p}$ as explained in \Cref{eq:ptree}.
\end{definition}

\begin{lemma}[Lemma 17 of \cite{rakhlin2011constrained}]\label{lem:lemma17} Fix a class $\mathcal{F} \subseteq \mathbb{R}^\mathcal{X}$ and a function $\phi: \mathbb{R} \times \mathcal{Y} \to \mathbb{R}$. Given a distribution $p$ over $\mathcal{X}$, let $\mathfrak{P}$ consist of all joint distributions $\mathbf{p}$ such that the conditional distribution $p_t^{x,y}(x_t, y_t|x^{t-1}, y^{t-1})$ can be written as $p(x_t) \times p_t(y_t|x^{t-1}, y^{t-1}, x_t)$ for some conditional distribution $p_t$. Then,
$$ \sup_{\mathbf{p} \in \mathfrak{P}} \mathfrak{R}_T(\phi(\mathcal{F}), \mathbf{p}) \leq \mathbb{E}_{\mathbf{x}_1, \ldots, \mathbf{x}_T \sim p, \mathbf{y} \sim \mathbf{p}} \left[ \mathbb{E}_{\boldsymbol{\epsilon}} \sup_{f \in \mathcal{F}} \sum_{t=1}^T \epsilon_t \phi(f(x_t), y_t(\boldsymbol{\epsilon})) \right]. $$
\end{lemma}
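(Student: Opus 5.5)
The plan is to fix an arbitrary $\mathbf{p} \in \mathfrak{P}$ and show the inequality for that $\mathbf{p}$, with $\mathbf{y}$ on the right-hand side understood as the label process induced by the same $\mathbf{p}$. The supremum then follows because the right-hand side is exactly the quantity associated with the particular $\mathbf{p}$ attaining (or approaching) it. Throughout, observations are pairs $z_t = (x_t, y_t)$, and $\phi(\mathcal{F})$ is the class $\{(x,y) \mapsto \phi(f(x), y) : f \in \mathcal{F}\}$. I will unfold \Cref{def:dist-rad} with the probability tree $\rho$ of \Cref{eq:ptree}. At depth $t$, given $\epsilon_{1:t-1}$ and the past pairs, $\rho_t$ draws $z_t$ and $z'_t$ independently from the conditional $\mathbf{p}_t$ evaluated at the \emph{selected} history $(\chi_s(\boldsymbol{\epsilon}))_{s<t}$. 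By the product structure $p(x_t)\times p_t(y_t \mid \cdot, x_t)$, this means:
\begin{enumerate}
\item draw $x_t, x'_t$ i.i.d.\ from $p$, independently of everything so far;
\item draw $y_t \sim p_t(\cdot \mid \text{selected history}, x_t)$ and $y'_t \sim p_t(\cdot \mid \text{selected history}, x'_t)$.
\end{enumerate}

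The key step is a distributional identity for the selected path. Write $\tilde x_t$ and $\tilde y_t$ for the coordinates of $\chi_t(\boldsymbol{\epsilon})$. The objective $\sup_f \sum_t \epsilon_t \phi(f(\tilde x_t), \tilde y_t)$ depends on the sampled tree only through $(\tilde x_t, \tilde y_t)_{t \le T}$ and $\boldsymbol{\epsilon}$. I will therefore integrate out the unselected siblings and compute the joint law of $(\boldsymbol{\epsilon}, \tilde x_{1:T}, \tilde y_{1:T})$, inducting on $t$.

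\emph{The $x$-coordinates.} Conditioned on $\epsilon_{1:t}$ and $(\tilde x, \tilde y)_{1:t-1}$, the point $\tilde x_t$ is either $x_t$ or $x'_t$. Both are fresh draws from $p$ independent of the past and of $\epsilon_t$, so $\tilde x_t \sim p$ independently. Hence $\tilde x_1,\dots,\tilde x_T$ are i.i.d.\ $p$ and independent of $\boldsymbol{\epsilon}$.

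\emph{The $y$-coordinates.} Conditioned on the same information and on $\tilde x_t$, the label $\tilde y_t$ has law $p_t(\cdot \mid (\tilde x,\tilde y)_{1:t-1}, \tilde x_t)$. So $\tilde y_t$ is a (randomized) function of $\tilde x_{1:t}$ and $\epsilon_{1:t-1}$; the dependence on $\epsilon_{1:t-1}$ enters only through the selected history. This is precisely the label process $\mathbf{y}$ generated by $\mathbf{p}$ on the i.i.d.\ sequence $x_1,\dots,x_T$, with $y_t(\boldsymbol{\epsilon})$ written as a tree in $\boldsymbol{\epsilon}$.

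Substituting this joint law into \Cref{def:dist-rad} gives $\mathfrak{R}_T(\phi(\mathcal{F}),\mathbf{p})$ equal to, and in particular at most, the stated right-hand side. The crucial point is that $\phi(f(\cdot),\cdot)$ is only ever evaluated at selected pairs.

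The main obstacle I expect is making this "integrate out the unselected branch" step rigorous. One must check that $\epsilon_t$ is independent of $\tilde x_t$ even though $\epsilon_t$ decides which of $x_t, x'_t$ is selected. One must also check that $\tilde y_t$'s conditional law does not secretly depend on $\epsilon_t$ or on the discarded sibling. I would handle this by an explicit conditional-density computation at each depth: the sibling $(x'_t, y'_t)$ when $\epsilon_t=-1$, or $(x_t, y_t)$ when $\epsilon_t = 1$, is never used in later conditionals, which by \Cref{eq:ptree} only see the selected history. Measurability of the supremum is assumed as in the rest of the paper.
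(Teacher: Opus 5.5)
The paper gives no proof of this lemma; it quotes it from \cite{rakhlin2011constrained}. As far as I recall, the argument there unrolls the nested expectations of the probability tree and bounds each label draw by a supremum, so its right-hand side is a worst-case $\mathcal{Y}$-valued tree, which the contraction step (\Cref{lem:lemma18}) then handles. Your route is different, and for the statement as formulated here it is correct: you compute the exact law of the selected path and obtain an identity.

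Your computation gives more than you state. The conditional law of $\tilde y_t$ given $(\epsilon_{1:t}, \tilde x_{1:t}, \tilde y_{1:t-1})$ contains no $\boldsymbol{\epsilon}$, so the chain rule makes the whole selected path independent of $\boldsymbol{\epsilon}$ with law $\mathbf{p}$. Thus $y_t(\boldsymbol{\epsilon})$ is a degenerate tree, and $\mathfrak{R}_T(\phi(\mathcal{F}),\mathbf{p}) = \mathbb{E}_{(x,y)\sim\mathbf{p}}\mathbb{E}_{\boldsymbol{\epsilon}}\sup_{f\in\mathcal{F}}\sum_t \epsilon_t \phi(f(x_t),y_t)$. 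This holds for every $\mathbf{p}$; the product structure of $\mathfrak{P}$ only identifies the $x$-marginal as $p^{\otimes T}$. The step you flag as the main obstacle needs no densities, only that $\epsilon_t$ is independent of the depth-$t$ pair and of the past.

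What your route buys is exactness: \Cref{lem:lemma18} then follows from ordinary Ledoux--Talagrand contraction applied conditionally on $y$. What it costs is robustness. Your identity hinges on \Cref{def:dist-rad} evaluating $\phi(f(\cdot),\cdot)$ at the selected point $\chi_t(\boldsymbol{\epsilon})$. If, as I believe the original formulation does, the function is evaluated at a fixed child of each pair while the conditionals use the selected history, your argument breaks. Discarded siblings then feed into later labels, $y_t$ genuinely depends on $\epsilon_{1:t-1}$, and only a supremum-over-labels bound of the source's type survives.

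One small correction. The right-hand side depends on $\mathbf{p}$, so your remark about a $\mathbf{p}$ ``attaining or approaching'' the supremum is not a valid deduction. What you establish is the per-$\mathbf{p}$ bound. That is the only meaningful reading of the statement, and it gives the version with $\sup_{\mathbf{p}}$ on both sides.
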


\begin{lemma}[Lemma 18 of \cite{rakhlin2011constrained}]\label{lem:lemma18} Fix a class $\mathcal{F} \subseteq [-1, 1]^\mathcal{X}$ and a function $\phi: [-1, 1] \times \mathcal{Y} \to \mathbb{R}$. Assume, for all $y \in \mathcal{Y}$, $\phi(\cdot, y)$ is a Lipschitz function with a constant $L$. Let $\mathfrak{P}$ be as in \Cref{lem:lemma17}. Then, for any $\mathbf{p} \in \mathfrak{P}$,
$$ \mathfrak{R}_T(\phi(\mathcal{F}), \mathbf{p}) \leq L \mathfrak{R}_T(\mathcal{F}, p) . $$
\end{lemma}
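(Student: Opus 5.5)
The plan is to reduce to a setting where the $x_t$'s are genuinely i.i.d.\ and only the labels depend on the Rademacher path. Then I would linearize the loss one coordinate at a time with a scalar Ledoux--Talagrand contraction step, peeling coordinates off from the last round backwards. Throughout I read $\mathfrak{R}_T(\mathcal{F},p)$ on the right-hand side as the distribution-dependent complexity in which the $x_t$ are drawn i.i.d.\ from $p$. In that case the tree structure is trivial and it equals $\mathbb{E}_{x_{1:T}\sim p}\mathbb{E}_{\boldsymbol\epsilon}\sup_{f}\sum_t \epsilon_t f(x_t)$.

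\textbf{Step 1 (reduction).} Apply \Cref{lem:lemma17} to get
\[
\mathfrak{R}_T(\phi(\mathcal{F}),\mathbf{p}) \le \mathbb{E}_{x_{1:T}\sim p,\,\mathbf{y}}\,\mathbb{E}_{\boldsymbol\epsilon}\sup_{f\in\mathcal{F}}\sum_{t=1}^T \epsilon_t\,\phi\bigl(f(x_t),y_t(\epsilon_{1:t-1})\bigr).
\]
Here the $x_t$ are i.i.d.\ and independent of $\boldsymbol\epsilon$, while $y_t$ is a tree depending only on $\epsilon_{1:t-1}$. Since it suffices to bound this uniformly over fixed $x_{1:T}$ and fixed $\mathbf{y}$, I fix both and work only with the expectation over $\boldsymbol\epsilon$.

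\textbf{Step 2 (backward peeling).} For $k=T+1,T,\dots,1$, define the hybrid quantity
\[
\Phi_k=\mathbb{E}_{\boldsymbol\epsilon}\sup_{f}\Bigl[\sum_{s<k}\epsilon_s\phi\bigl(f(x_s),y_s(\epsilon_{1:s-1})\bigr)+L\sum_{s\ge k}\epsilon_s f(x_s)\Bigr].
\]
I would show $\Phi_{k+1}\le \Phi_k$. To do so, condition on every sign except $\epsilon_k$. In $\Phi_{k+1}$ the variable $\epsilon_k$ then appears only as the multiplier of the $k$-th term. The reason is that the labels $y_s$ with $s<k$ depend only on $\epsilon_{1:s-1}$, and every term with $s>k$ has already been linearized, so no remaining $y$ depends on $\epsilon_k$. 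The label $y:=y_k(\epsilon_{1:k-1})$ is therefore a fixed constant, and everything else collapses into a fixed functional $A(f)$.

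\textbf{Step 3 (scalar contraction).} It remains to prove the one-variable inequality $\mathbb{E}_{\epsilon}\sup_f[A(f)+\epsilon\,\phi(f(x_k),y)]\le \mathbb{E}_\epsilon\sup_f[A(f)+\epsilon L f(x_k)]$. Writing the left side as half the sum over $\epsilon=\pm1$ and combining the two suprema gives
\[
\tfrac12\sup_{f,g}\bigl[A(f)+A(g)+\phi(f(x_k),y)-\phi(g(x_k),y)\bigr]\le\tfrac12\sup_{f,g}\bigl[A(f)+A(g)+L|f(x_k)-g(x_k)|\bigr],
\]
by Lipschitzness. Because the expression is symmetric in $(f,g)$, the absolute value can be dropped, and the result is exactly the right side. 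Iterating from $k=T$ down to $k=1$ yields $\Phi_{T+1}\le\Phi_1=L\,\mathbb{E}_{\boldsymbol\epsilon}\sup_f\sum_t\epsilon_t f(x_t)$. Averaging over $x_{1:T}\sim p$ then gives $L\,\mathfrak{R}_T(\mathcal{F},p)$.

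The main obstacle is the bookkeeping in Step 2. One must verify that after each peeling step no remaining label depends on the sign being removed. This is exactly why the peeling has to go backwards in time. It also relies on the reduction in Step 1, which is what makes the $x_t$ independent of $\boldsymbol\epsilon$.
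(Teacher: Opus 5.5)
Your proof is correct. The paper does not prove this lemma itself; it imports it from \citet{rakhlin2011constrained}, and your argument is the standard proof of that result: use \Cref{lem:lemma17} to reduce to fixed i.i.d.\ points $x_{1:T}$ and a $\mathcal{Y}$-valued tree $\mathbf{y}$, then strip $\phi$ off one coordinate at a time from $t=T$ backward with the scalar Ledoux--Talagrand comparison. That peeling is valid exactly because neither the $x_s$ nor any remaining label depends on the sign being removed, which is the point you identify.
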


\seqconc*
\begin{proof}[Proof of \Cref{prop:seq-conc}] Note that $\ell (h(x_t), r_t (x_t)) - \E_\Dcal [\ell (h(x), r_t(x))]$ is a martingale difference sequence since $x_t$ is sampled after the choice of $r_t$ is made.
We will apply the classic symmetrization technique, borrowing ideas from the proof of Theorem 3 of \cite{rakhlin2011constrained}. We will consider a tangent sequence $\{x'\}_{t=1}^T$ that is drawn i.i.d from the distribution $\Dcal$. Note that this tangent sequence is independent of $\{x\}_{t=1}^T$. 
For any sequence of $r_1, \ldots, r_T$, The LHS of the equation reduces to the following:
\begin{align}
\E &\left[ \sup_{h \in \Hcal} \left\{ \frac{1}{T} \sum_{t = 1}^T \ell (h(x_t), r_t(x_t)) - \frac{1}{T} \sum_{t = 1}^T \ell (h(x'_t), r_t(x'_t))\right\} \right]\\
&= \E_{(x_1, x'_1)\sim \Dcal} \E_{(x_2, x'_2)\sim \Dcal}  \ldots \E_{(x_T, x'_T)\sim \Dcal} \left[ \sup_{h \in \Hcal} \left\{ \frac{1}{T} \sum_{t = 1}^T \ell (h(x_t), r_t(x_t)) - \frac{1}{T} \sum_{t = 1}^T \ell (h(x'_t), r_t(x'_t))\right\} \right] \\
&\leq \sup_{r_1 \in \Rcal} \E_{(x_1, x'_1)\sim \Dcal} \sup_{r_2 \in \Rcal (\cdot | x_1)}\E_{(x_2, x'_2)\sim \Dcal}  \ldots \\
&\quad \ldots \sup_{r_T \in \Rcal (\cdot | x_1, \ldots, x_{T-1})} \E_{(x_T, x'_T)\sim \Dcal}  \left[ \sup_{h \in \Hcal} \left\{ \frac{1}{T} \sum_{t = 1}^T \ell (h(x_t), r_t(x_t)) - \frac{1}{T} \sum_{t = 1}^T \ell (h(x'_t), r_t(x'_t))\right\} \right]
\end{align}
Now fix $\boldsymbol{\epsilon} \in \{\pm 1\}^T$ and let $-\epsilon_t$ denote whether we switch $x_t$ with $x'_t$. Since these are from the same distribution, this does not affect the expectation over $\Dcal$. Thus, the last equation simplifies to
\begin{align}
\sup_{r_1 \in \Rcal} &\E_{(x_1, x'_1)\sim \Dcal} \sup_{r_2 \in \Rcal (\cdot | x_1 (\epsilon_1))}\E_{(x_2, x'_2)\sim \Dcal} \ldots \sup_{r_T \in \Rcal (\cdot | x_1 (\epsilon_1), \ldots, x_{T-1} (\epsilon_{T-1}))} \E_{(x_T, x'_T)\sim \Dcal}  \\
& \left[ \sup_{h \in \Hcal} \left\{ \frac{1}{T} \sum_{t = 1}^T \epsilon_t \left(\ell (h(x_t), r_t(x_t)) - \ell (h(x'_t), r_t(x'_t)) \right) \right\} \right]
\end{align}
Taking expectation over $\boldsymbol{\epsilon} \in \{\pm 1\}^T$, we have that
\begin{align}
\E &\left[ \sup_{h \in \Hcal} \left\{ \frac{1}{T} \sum_{t = 1}^T \ell (h(x_t), r_t(x_t)) - \frac{1}{T} \sum_{t = 1}^T \ell (h(x'_t), r_t(x'_t))\right\} \right]\\
&\leq \sup_{r_1 \in \Rcal} \E_{(x_1, x'_1)} \E_{\epsilon_1} \sup_{r_2 \in \Rcal (\cdot | x_1 (\epsilon_1))}\E_{(x_2, x'_2)} \E_{\epsilon_2} \ldots \sup_{r_T \in \Rcal (\cdot | x_1 (\epsilon_1), \ldots, x_{T-1} (\epsilon_{T-1}))} \E_{(x_T, x'_T)} \E_{\epsilon_T}  \\
& \quad \left[ \sup_{h \in \Hcal} \left\{ \frac{1}{T} \sum_{t = 1}^T \epsilon_t \left(\ell (h(x_t), r_t(x_t)) - \ell (h(x'_t), r_t(x'_t)) \right) \right\} \right]
\end{align}
The process above can be thought of as taking a path in a binary tree whose nodes are represented by functions $r \in \Rcal$. At each step t, $r_t$ is chosen and then a coin is flipped and this determines whether $x_t$ or $x'_t$ is to be used in the following steps.
We write the last expression concisely as
\[
\sup_{\mathbf{r}} \E_{(x,x') \sim \Dcal} \left[ \sup_{h \in \Hcal} \left\{ \frac{1}{T} \sum_{t = 1}^T \epsilon_t \left(\ell (h(x_t), r_t(x_t)) - \ell (h(x'_t), r_t(x'_t)) \right) \right\} \right]
\]
And this can be upper bounded by two times the distribution-dependent Rademacher complexity notion defined in \Cref{def:dist-rad}
\begin{align}
\sup_{\mathbf{r}} \E_{(x,x') \sim \Dcal} &\left[ \sup_{h \in \Hcal} \left\{ \frac{1}{T} \sum_{t = 1}^T \epsilon_t \left(\ell (h(x_t), r_t(x_t)) - \ell (h(x'_t), r_t(x'_t)) \right) \right\} \right] \\
&\leq 2 \sup_{\mathbf{r}} \E_{(x,x') \sim \Dcal} \left[ \sup_{h \in \Hcal} \left\{ \frac{1}{T} \sum_{t = 1}^T \epsilon_t \ell (h(x_t), r_t(x_t)) \right\} \right] \\
&\leq 2 \sup_{\mathbf{p} \in \mathfrak{P}} \mathfrak{R}_T( \ell \circ \mathcal{H}, \mathbf{p}) 
\end{align}
where $\mathfrak{P}$ consists of all joint distributions $\mathbf{p}$ such that the conditional distribution $p_t^{x,y}(x_t, y_t|x^{t-1}, y^{t-1})$ can be written as $p(x_t) \times p_t(y_t|x^{t-1}, y^{t-1}, x_t)$ for some conditional distribution $p_t$. Applying \Cref{lem:lemma17} together with \Cref{lem:lemma18} gives the desired result.
To obtain the high probability version of the statement, we follow the same steps here replacing the expected Rademacher with high probability Rademacher as done in Lemma 4 of \cite{rakhlin_sequential_2015}.
\end{proof}

\section{Deferred Proof from Section 4}

\begin{proof}[Proof of \Cref{lem:frank-wolfe}]
Let $p = |S|$. We order the elements of $S$ as $(s_1, \ldots, s_p)$. The feasible set is $\mathcal{K}_S = \{ (h(s_1), \ldots, h(s_p)) \mid h \in \text{conv}(\mathcal{H}) \}$. Since $h: [0,1]^\Xcal$, $\mathcal{K}_S \subseteq [0,1]^p$.
The objective function is $G : \mathcal{K}_S \to \mathbb{R}$ defined as
$$ G(z) = \eta \sum_{i=1}^m w_i \ell(z_{x_i}, y_i) \;+\; \sum_{s \in S} z_s \log (z_s + 1), $$
where $z \in \mathcal{K}_S$ and $z_s$ denotes the component of $z$ corresponding to $s \in S$. The function $G$ is well-defined and differentiable on $\mathcal{K}_S$.
Its partial derivative with respect to $z_s$ for $s \in S$ is:
$$ \frac{\partial G(z)}{\partial z_s} = \eta \sum_{i : x_i = s} w_i \frac{\partial \ell(z_s, y_i)}{\partial z_s} + \log(z_s+1) + \frac{z_s}{z_s+1}. $$
The Hessian of $G(z)$ is a diagonal matrix. The diagonal entry corresponding to $z_s$ is $\frac{\partial^2 G(z)}{\partial z_s^2}$.
$$ \frac{\partial^2 G(z)}{\partial z_s^2} = \eta \sum_{i : x_i = s} w_i \frac{\partial^2 \ell(z_s, y_i)}{\partial z_s^2} + \frac{1}{z_s + 1} + \frac{1}{(z_s + 1)^2}. $$
Since $\ell$ is $\beta$-smooth, $|\frac{\partial^2 \ell}{\partial u^2}| \leq \beta$. For $z_s \in [0,1]$, we have $\frac{1}{z_s + 1} + \frac{1}{(z_s + 1)^2} \leq 1 + 1 = 2$.
Let $I(s) = \{i \in \{1, \ldots, m\} \mid x_i = s\}$. Then
$$ \left| \frac{\partial^2 G(z)}{\partial z_s^2} \right| \leq \eta \sum_{i \in I(s)} |w_i| \left| \frac{\partial^2 \ell(z_s, y_i)}{\partial z_s^2} \right| + \left| \frac{1}{z_s + 1} + \frac{1}{(z_s + 1)^2} \right| \leq \eta \left(\sum_{i \in I(s)} |w_i|\right) \beta + 2. $$
Let $W_{\max} = \max_{s \in S} \sum_{i : x_i = s} |w_i|$. The maximum absolute value of the diagonal entries of the Hessian of $G(z)$ is bounded by $\eta W_{\max} \beta + 2$.
Therefore, $G$ is $\beta_G$-smooth with $\beta_G = \eta W_{\max} \beta + 2$.

The set $\mathcal{K}_S \subseteq [0, 1]^p$. The $\ell_2$-diameter of $\mathcal{K}_S$ is at most the $\ell_2$-diameter of the hypercube $[0, 1]^p$, which is $\sqrt{\sum_{j=1}^p (1-0)^2} = \sqrt{p} = \sqrt{|S|}$. Let $R = \sqrt{|S|}$.

Applying \Cref{lem:cgd} to $G$ on $\mathcal{K}_S$:
$$ G(z_T) - G(z^*) \leq \frac{2 \beta_G R^2}{T+1} \leq \frac{2 (\eta W_{\max} \beta + 2) |S|}{T+1}. $$
To achieve $G(z_T) - G(z^*) < \epsilon$, we need
$$ \frac{2 (\eta W_{\max} \beta + 2) |S|}{T+1} \leq \epsilon, $$
which implies
$$ T+1 \geq \frac{2 |S| (\eta W_{\max} \beta + 2)}{\epsilon}. $$
Thus, $T = O\left( \frac{|S| (\eta W_{\max} \beta + 1)}{\epsilon} \right)$.
\end{proof}

\section{Deferred Proofs from \Cref{sec:games}}
\label{sec:games-proof}

\begin{proof}[Proof of \Cref{cor:saddle-point}]
Let $S = \{x_1, \ldots, x_m\}$ be a set of $m$ i.i.d. samples drawn from $\Dcal$.
We will use the hybrid learner algorithm (\Cref{alg:eg_hybrid}) with $T=m$ steps. The samples for the hybrid learner are the drawn samples $x_1, \ldots, x_m$.
At each step $t \in \{1, \ldots, m\}$, the hybrid learner outputs a hypothesis $h_t \in \mathrm{conv}(\Hcal)$.
We define the adversary function for step $t$ of the hybrid learner as the empirical best response to $h_t$ on the full sample set $S$:
\[ r_t = \argmax_{r \in \Rcal} \frac{1}{t-1} \sum_{i=1}^{t-1} u(h_t(x_i), r(x_i)). \]
This sequence of adversaries $r_1, \ldots, r_m$ is provided to the hybrid learner. $r_t$ is chosen only using the first $t-1$ samples observed by the algorithm in order to preserve any martingale properties of the algorithm \footnote{although Theorem 1 doesn't rely on the martingale nature of the data.}. At timestep $t$, after outputting $h_t$ and observing $x_t$, the hybrid learner receives $r_{t}$ (computed as the empirical best response to $h_t$ on $S_{t-1}$) as the adversary function for the current step.

Let $h^* \in \mathrm{conv}(\Hcal)$ be the optimal hypothesis in expectation against the sequence $r_1, \ldots, r_m$: $h^* = \argmin_{h \in \mathrm{conv}(\Hcal)} \sum_{t=1}^m \EE[u(h(x), r_t(x))]$.
The hybrid learner theorem (\Cref{thm:eg-hybrid}) guarantees that with probability at least $1-\delta'$,
\[ \sum_{t = 1}^m \EE [u (h_t(x), r_t(x))] \leq \min_{h \in \mathrm{conv}(\Hcal)} \sum_{t = 1}^m \EE [u(h(x), r_t(x))] + 2m \cdot \mathsf{rad}_m (\mathcal{F}) + O \left(L\sqrt{m \log m} \right). \]

Consider the average policies $h_A = \bar h = \frac{1}{m} \sum_{t=1}^m h_t$ and $r_A = \bar r = \frac{1}{m} \sum_{t=1}^m r_t$.
By convexity of $u$ in the first argument, $\E[u(h_A, r)] \leq \frac{1}{m} \sum_{t=1}^m \E[u(h_t, r)]$.
By concavity of $u$ in the second argument, $\E[u(h, r_A)] \geq \frac{1}{m} \sum_{t=1}^m \E[u(h, r_t)]$.

Consider the saddle point gap for $(h_A, r_A)$: $\max_{r \in \mathrm{conv}(\Rcal)} \E[u(h_A, r)] - \min_{h \in \mathrm{conv}(\Hcal)} \E[u(h, r_A)]$.
\[ \max_{r \in \mathrm{conv}(\Rcal)} \E[u(h_A, r)] \leq \frac{1}{m} \sum_{t=1}^m \max_{r \in \mathrm{conv}(\Rcal)} \E[u(h_t, r)]. \]
\[ \min_{h \in \mathrm{conv}(\Hcal)} \E[u(h, r_A)] \geq \min_{h \in \mathrm{conv}(\Hcal)} \frac{1}{m} \sum_{t=1}^m \E[u(h, r_t)]. \]
So,
\[ \max_{r \in \mathrm{conv}(\Rcal)} \E[u(h_A, r)] - \min_{h \in \mathrm{conv}(\Hcal)} \E[u(h, r_A)] \leq \frac{1}{m} \sum_{t=1}^m \left( \max_{r \in \mathrm{conv}(\Rcal)} \E[u(h_t, r)] - \min_{h \in \mathrm{conv}(\Hcal)} \E[u(h, r_t)] \right). \]

Applying \Cref{lem:alg-convergence}, we have that for all $t > 1$, for all $r \in \Rcal$
$$ \left| \EE_{x \sim \Dcal}[u(h_t(x), r(x))] - \frac{1}{t-1} \sum_{s=1}^{t-1} u(h_t(x_s), r(x_s)) \right| \leq 2 \mathsf{rad}_{t-1}(\Fcal)] +  \sqrt{\frac{\log(2T/\delta)}{t-1}} $$
Thus, 
\[
\frac{1}{m} \sum_{t=1}^m \max_{r \in \mathrm{conv}(\Rcal)} \E[u(h_t, r)] \leq \frac{1}{m} \sum_{t=1}^m \E[u(h_t, r_t)] - \sum_{t=1}^m 2 \mathsf{rad}_{t-1}(\Fcal)] -  \sum_{t=1}^m \sqrt{\frac{\log(2m/\delta)}{t-1}}
\]
Applying the regret guarantee from \Cref{thm:eg-hybrid} to $\sum_{t=1}^m \E[u(h_t, r_t)] - \min_{h \in \mathrm{conv}(\Hcal)} \E[u(h, r_t)]$ gives the desired result.

The total running time is $\poly(m) \cdot (\text{cost of } \Hcal \text{ ERM oracle}) + m \cdot (\text{cost of } \Rcal \text{ best response oracle})$. This is oracle-efficient in $\poly(m)$.
\end{proof}

\end{document}